\newtheorem{theorem}{Theorem}
\newtheorem{lemma}{Lemma}
\newtheorem*{assumption*}{Assumption}
\newtheorem{example-non*}{Example}
\newtheorem{remark}{Remark}
\DeclareMathOperator*{\argmin}{arg\,min}
\DeclareMathOperator*{\argmax}{arg\,max}
\newcommand{\comment}[1]{}
\newcommand{\newc}[1]{{\color{blue}#1}} 
\newcommand{\newc}[1]{#1}
\newcommand{\jremove}[1]{}
\newcommand{\aremove}[1]{{\color{black}#1}} 
\newcommand{\jremove}[1]{{\color{black}#1}} 
\newcommand{\aremove}[1]{}
\begin{document}
\title{RELEAF: An Algorithm for Learning and Exploiting Relevance}
\author{\IEEEauthorblockN{Cem Tekin,~\IEEEmembership{Member,~IEEE}, Mihaela van der Schaar,~\IEEEmembership{Fellow,~IEEE}\\}
\thanks{Copyright (c) 2015 IEEE. Personal use of this material is permitted. However, permission to use this material for any other purposes must be obtained from the IEEE by sending a request to pubs-permissions@ieee.org.}
\thanks{This work is partially supported by the grants NSF CNS 1016081 and
AFOSR DDDAS.}
\thanks{C. Tekin and Mihaela van der Schaar are in Department of Electrical Engineering, UCLA, Los Angeles, CA, 90095. Email: cmtkn@ucla.edu, mihaela@ee.ucla.edu.}
\thanks{A preliminary version of this paper appeared in NIPS 2014 \cite{tekinNIPS2014}.}
\jremove{This online appendix is an extended version of our paper accepted to IEEE JSTSP \cite{tekinJSTSP2015}.}
}

\maketitle
%

\begin{abstract}
Recommender systems, medical diagnosis, network security, etc., require on-going learning and decision-making in real time.  These -- and many others -- represent perfect examples of the opportunities and difficulties presented by Big Data: the available information often arrives from a variety of sources and has diverse features so that learning from all the sources may be valuable but integrating what is learned is subject to the curse of dimensionality.  This paper develops and analyzes algorithms that allow efficient learning and decision-making while avoiding the curse of dimensionality.  We formalize the information available to the learner/decision-maker at a particular time as a {\em context vector} which the learner should consider when taking actions.  In general the context vector is very high dimensional, but in many settings, the most relevant information is embedded into only a few {\em relevant} dimensions.  If these relevant dimensions were known in advance, the problem would be simple -- but they are not.  Moreover, the relevant dimensions may be different for different actions.  Our algorithm {\em learns} the relevant dimensions for each action, and makes decisions based in what it has learned. Formally, we build on the structure of a contextual multi-armed bandit by adding and exploiting a {\em relevance relation}.  \newc{We prove a general regret bound for our algorithm whose time order depends only on the maximum number of relevant dimensions among all the actions, which in the special case where the relevance relation is single-valued (a function), reduces to $\tilde{O}(T^{2(\sqrt{2}-1)})$}; in the absence of a relevance relation, the best known contextual bandit algorithms achieve regret $\tilde{O}(T^{(D+1)/(D+2)})$, where $D$ is the full dimension of the context vector.   Our algorithm alternates between exploring and exploiting and does not require observing outcomes during exploitation (so allows for active learning).  Moreover, during exploitation, suboptimal actions are chosen with arbitrarily low probability.  Our algorithm is tested on datasets arising from breast cancer diagnosis, network security and online news article recommendations.
\end{abstract}

\begin{IEEEkeywords}
Contextual bandits, regret, dimensionality reduction, learning relevance, recommender systems, online learning, active learning.
\end{IEEEkeywords}
\vspace{-0.1in}
\section{Introduction}\label{sec:intro}

The world is increasingly information-driven. Vast amounts of data are being produced by diverse
sources and in diverse formats including sensor readings, physiological measurements, documents,
emails, transactions, tweets, and audio or video files and many businesses and government institutions
rely on these Big Data in their everyday operations.  (Particular applications that have been discussed in the literature include recommender systems \cite{li2010contextual},  neuroscience \cite{djolonga2013high}, network monitoring \cite{gao2007appropriate}, surveillance \cite{stauffer2000learning}, health monitoring \cite{tseng2008integrated}, stock market prediction, intelligent driver assistance \cite{avidan2004support}, etc.) To make the best use of these data, it is vital to learn from and respond to the streams of data continuously and in real time.  Because data streams are heterogeneous and dynamically evolving over time in unknown and unpredictable ways, making decisions using these data streams online, at run-time, is known to be a very challenging problem \cite{ducasse2010adaptive, IBMsmarter}.  In this paper, we tackle these online Big Data challenges by exploiting a feature that is common to many applications: the data may have many dimensions, but the information that is most important for any given action is embedded into only a few {\em relevant} dimensions.  In general, these relevant dimensions will be different for different actions and are not known in advance -- so must be {\em learned}.  We propose and analyze an algorithm that {\em learns} the relevant dimensions for each action, and makes decisions based in what it has learned.  

Our structure builds on contextual multi-armed bandits. We formalize the information obtained from the data streams (perhaps after pre-processing) in terms of ``context vectors".  Context vectors characterize the information contained in the data generated by the process the learner wishes to control/act on such as the location, and/or data type information (e.g., features/characteristics/modality).  The decision maker/learner receives the context vector and takes an action that generates a reward that depends (stochastically) on the context vector.  Contexts, actions and rewards are generic terms; the specific meaning depends on the specific Big Data application. For instance, in a network security application \cite{gao2007appropriate}, contexts are the features of the network packet, actions are the set of predictions about the type of network attacks and the reward is the accuracy of the prediction. In a recommender system \cite{li2010contextual}, contexts are the characteristics (age, gender, purchase history, etc.) of the user, actions are items and the reward is the indicator function of the event that the user buys the item. 
The problem is to {\em learn} the rewards (or the distribution of rewards) generated by each action in each context.  The context vector is typically high dimensional but in many applications the reward for a particular action will depend only on a few most relevant of these dimensions, embodied in a {\em relevance relation}. For an action set ${\cal A}$ and a type (dimension) set ${\cal D}$, the relevance relation is given by $\boldsymbol{{\cal R}} = \{ {\cal R}(a)\}_{a \in {\cal A}}$, where ${\cal R}(a) \subset {\cal D}$.  However, whether this is the case and if so, which dimensions are most relevant for a particular action, is not known in advance but must be learned, and decision-making must be adapted to this learning process.

Relevance relations arise naturally in many practical applications. For example, when treating patients with a particular disease, many contexts may be available -- the patients' age, weight, blood tests, imaging, medical history etc. –- but often only a few of these contexts are relevant in choosing/not choosing a particular treatment or medication.   For instance, surgery may be strongly contra-indicated in patients with clotting problems; drug therapies that require close monitoring may be strongly contra-indicated in patients who do not have committed care-givers, etc.  Similarly, in recommender systems, a product recommendation may sometimes depend on many characteristics of the user -- gender, occupation, history of past purchases etc. -– but will often depend only (or most strongly) on a few characteristics -- such as location and home-ownership. 

Relevance allows us to avoid the curse of dimensionality: we show that regret bounds depend only on the number of {\em relevant dimensions}, i.e., $D_{\textrm{rel}}$ -- which is typically much less than the full number of dimensions.
Our main contributions can be summarized as follows:

\begin{itemize}
\item We propose the {\em Relevance Learning with Feedback} (RELEAF) algorithm that alternates between exploration and exploitation phases.  
\newc{For the general case when $D_{\textrm{rel}}<D/2$, RELEAF achieves a regret bound of $\tilde{O}(T^{g(D_{\textrm{rel}})})$,\footnote{$O(\cdot)$ is the Big O notation, $\tilde{O}(\cdot)$  is the same as $O(\cdot)$  except it hides terms that have polylogarithmic growth.} where $g(D_{\textrm{rel}}) \leq (2D_{\textrm{rel}}+3)/(2D_{\textrm{rel}}+4)$, which reduces to a regret bound of $\tilde{O}(T^{2(\sqrt{2}-1)})$ when the relevance relation is a function.}
\item We derive separate bounds on the regret incurred in exploration and exploitation phases. RELEAF only needs to observe the reward in exploration phases and hence, when observing rewards is costly, active learning can be performed by controlling reward feedback. RELEAF achieves the same time order of regret even when observing rewards is costly.
\item The operation of RELEAF involves a confidence parameter, chosen by the user, which can be arbitrarily small.  If confidence $\delta$ is chosen, then RELEAF will never select suboptimal actions in exploitation steps with probability at least $1-\delta$.  This provides performance guarantees, which are important -- perhaps vital -- in many applications, such as medical treatment. 
\end{itemize}


The rest of the paper is organized as follows. Related work is given in Section \ref{sec:related}. The problem is formalized in Section \ref{sec:probform}. An algorithm that learns the {\em relevance relation} between actions and types of contexts is given in Section \ref{sec:algodescription}. Then, the regret bounds are proved for this algorithm. Numerical results on several real-world datasets are given in Section \ref{sec:numerical}. Finally, conclusions are given in Section \ref{sec:conc}.

\vspace{-0.1in}
\section{Related Work} \label{sec:related}

\comment{
Related work is categorized and discussed in the following subsections. Table \ref{tab:relatedwork} summarizes the differences between related work and our work. 

\begin{table*}
\centering
{\fontsize{9}{7}\selectfont
\setlength{\tabcolsep}{.1em}
\begin{tabular}{|c|c|c|c|c|}
\hline
Model  & Assumption &  Learning $\boldsymbol{{\cal R}}$ & Regret bound & Reward Feedback \\
\hline
Contextual bandits & Similarity metric (Lipschitz continuity) & No  & Worst-case $\tilde{O}(T^{(D+1)/(D+2)})$ & Always  \\
\hline
Dimensionality reduction methods &  &  error & &  \\
\hline
Label efficient learning &  &  error & &  \\
\hline
Active learning &  &  error & &  \\
\hline
\end{tabular}
}
\caption{Comparison of the error rates of RELEAF-FO with ensemble learning methods for network intrusion dataset.}
\label{tab:schemeintrusion}
\end{table*}
}

\subsection{Multi-armed bandits}

Our work is a new contextual bandit problem where relevance relations exist. 
Contextual bandit problems are studied by many others in the past \cite{hazan2007online, langford2007epoch, lu2010contextual, slivkins2011contextual, chu2011contextual, dudik2011efficient}. The problem we consider in this paper is a special case of the Lipschitz contextual bandit problem \cite{lu2010contextual, slivkins2011contextual}, where the only assumption is the existence of a known similarity metric between the expected rewards of actions for different contexts. \newc{The strengh of this model comes from the fact that there are no stochastic assumptions made on the context arrival process, and the benchmark which the regret is defined against selects the best action for each context.}
It is known that the lower bound on regret for this problem is $O(T^{(D+1)/(D+2)})$ \cite{lu2010contextual}, and there exists algorithms that achieve $\tilde{O}(T^{(D+1)/(D+2)})$ regret \cite{lu2010contextual, slivkins2011contextual}.\footnote{The bounds in \cite{lu2010contextual, slivkins2011contextual} are given in terms of {\em covering} and {\em zooming} dimensions of the problem instance, but they reduce to the Euclidian dimension for the set of assumptions we have in this paper.} Compared to these works, RELEAF only needs to observe rewards in explorations and has a regret whose time order is independent of $D$. Hence it can still learn the optimal actions fast enough in settings where observations are costly and the context vector is high dimensional. 
\newc{For instance, in Section \ref{sec:regretRELEAF2} we show that the regret of RELEAF is better than the bound of $\tilde{O}(T^{(D+1)/(D+2)})$ in \cite{lu2010contextual, slivkins2011contextual} for $D_{\textrm{rel}} \leq D/2-1$.}

\newc{
Another class of contextual bandit problems consider reward functions that are linear in the contexts \cite{li2010contextual,chu2011contextual}. Due to this linearity assumption learning reduces to estimating the parameter vector corresponding to each arm, hence the regret bounds do not depend on the dimension of the context space.}
\newc{
Several papers \cite{langford2007epoch,dudik2011efficient} impose stochastic assumptions on the process that generates the contexts and the arm rewards. For instance assuming that the contexts and arm rewards are generated by an unknown i.i.d. process, regret independent of the dimension of the context space can be achieved. 

The differences between our work and these prior works are summarized in Table \ref{tab:banditcompare}.
}

\begin{table}
\centering
{
{\fontsize{9}{9}\selectfont
\setlength{\tabcolsep}{.1em}
\begin{tabular}{|c|c|c|c|c|}
\hline
 & Our work & \cite{lu2010contextual, slivkins2011contextual} & \cite{li2010contextual,chu2011contextual} & \cite{langford2007epoch,dudik2011efficient}  \\
\hline
Relevance relation & yes & no & no & no \\
\hline
Context arrivals & arbitrary & arbitrary & arbitrary & i.i.d.  \\
\hline
Reward-context & Lipschitz  & Lipschitz  & linear & joint i.i.d.  \\
relation & & & & process \\
\hline
Time order of  & depends   & depends  & independent  & independent   \\
the regret & on $D_{\textrm{rel}}$  & on  $D$ & of $D$ & of $D$ \\
\hline
\end{tabular}
}
}
\caption{Comparison of our work with other work contextual bandits.}
\vspace{-0.3in}
\label{tab:banditcompare}
\end{table}


\comment{
In \cite{amin2012graphical} graphical bandits are proposed where the learner takes an action vector $\boldsymbol{a}$ which includes actions from several types that consitute a type set ${\cal T}$. The expected reward of $\boldsymbol{a}$ for context vector $\boldsymbol{x}$ can be decomposed into sum of reward functions each of which only depends on a subset of ${\cal D} \cup {\cal T}$. 
It is assumed that the form of decomposition is known but the functions are not known. 
Our problem cannot be directly reduced to a graphical bandit problem, but it can be reduced to a constrained version of the graphical bandit problem.
We can transform actions $a \in {\cal A}$ to $|{\cal A}|$ {\em type of actions} ${\cal T}_{{\cal A}} := \{ 1,2,\ldots, |{\cal A}|  \}$. For each type $i$ action there are two choices $v_i \in \{0,1\}$, where $v_i = 0$ implies that action in ${\cal A}$ corresponding to type $i$ action in the graphical bandit problem is not chosen, and $v_i =1$ implies that is is chosen.
Thus, the action constraint for the graphical bandit problem becomes $\sum_{i \in {\cal T}_{{\cal A}}} v_i =1$.
The methods introduced in \cite{amin2012graphical} cannot be used for this problem due to the additional constraint and the unknown relation. 
Moreover, \cite{amin2012graphical} only consider finite context spaces. 
}

\vspace{-0.1in}
\subsection{Dimensionality reduction}

Dimensionality reduction methods are often used to find low dimensional representations of high dimensional context vectors (feature vectors) such that the information contained in the low dimensional representation is approximately equal to the information contained in the original context vector \cite{srivastava2005tools}. 
For instance, reduced-rank adaptive filtering \cite{haimovich1991eigenanalysis, hua2001optimal, de2009adaptive} first projects feature vectors onto a lower dimensional subspace, and then adaptively adjusts the filter coefficients over time. 
In these works a low dimensional representation of the feature vector is learned based on the available data. 
Compared to this, in our work the relevant dimensions for each action can be different, hence a low dimensional representation that contains information about the rewards of all actions may not exist. An example is a relevance relation $\boldsymbol{{\cal R}}$ for which each action only has few relevant dimensions, i.e., $D_{\textrm{rel}} << D$, but $\bigcup_{a \in {\cal A}} {\cal R}(a) =D$.



\vspace{-0.1in}
\subsection{Learning with limited number of observations}
Examples of related works that consider limited observations while learning are KWIK learning \cite{li2011knows, amin2013large} and label efficient learning \cite{cesa2009robust, kakade2008efficient, hazan2011newtron}. 
For example, \cite{amin2013large} considers a bandit model where the reward function comes from a parameterized family of functions and gives a bound on the average regret. 
An online prediction problem is considered in \cite{cesa2009robust, kakade2008efficient, hazan2011newtron}, where the predictor (action) lies in a class of linear predictors. The benchmark of the context is the best linear predictor. This restriction plays a crucial role in deriving regret bounds whose time order does not depend on $D$. 
Similar to these works, RELEAF can guarantee with a high probability that actions with suboptimality greater than a desired $\epsilon>0$ will never be selected in exploitation steps. 
However, we do not have any assumptions on the form of the expected reward function other than the Lipschitz continuity. 

For the special case when actions correspond to making predictions about the context vector (which is equal to the data stream for this special case), our problem is closely related to the problem of active learning.
In this problem, obtaining the labels is costly, but the performance of the learning algorithm, i.e., rewards, can only be assessed through the labels, hence actively learning when
to ask for the label becomes an important challenge.
In stream-based active learning \cite{freund1997selective, cesa2005minimizing, dekel2012selective, zolghadr2013online}, the learner is provided with a stream of unlabeled instances. 
When an instance arrives, the learner decides to obtain the label or not. 
To the best of our knowledge there is no prior work in stream-based active learning that deals with learning relevance relations with sublinear bounds on the regret. 

\vspace{-0.1in}
\subsection{Ensemble learning}

Numerous ensemble learning methods exists in the literature \cite{gao2007appropriate, freund1995desicion, fan1999application, blum1997empirical}. These methods take predictions (actions) from a set of experts (e.g., base classifiers), and combine them with a specific rule to produce a final prediction (action). After the reward of all the actions are observed, the rule to combine the predictions of the experts is updated based on how good each individual expert had performed. 
The goal is to learn a combination rule such that even if the predictions' of the individual experts are not very accurate, the final prediction is accurate because it takes into account the ``opinions" of all experts. 

To evaluate the performance of ensemble learning methods analytically, the benchmark is usually taken to be the expert that achieves the highest total reward. Hence the ``quality'' of the regret bounds depends on the ``quality'' of the experts.
In contrast, our regret bounds are with respect to the best benchmark (that only depends on context arrivals and reward distributions), and can be applied to settings without experts. 
Moreover, our algorithms work for the {\em bandit setting}, in which after an action is chosen, only its reward is revealed to the algorithm.




\section{\newc{Problem Formulation and Preliminaries}}\label{sec:probform}

\subsection{Notation}

\newc{
For a vector $\boldsymbol{x}$, $x_i$ denotes its $i$th component. Given a vector $\boldsymbol{v}$, $\boldsymbol{x}_{\boldsymbol{v}} := \{ x_i  \}_{i \in \boldsymbol{v}}$ denotes the components of $\boldsymbol{x}$ whose positions are in  $\boldsymbol{v}$. The time index is $t=1,2,\ldots$. When referring to a time dependent variable we use subscript $t$ as the rightmost subscript corresponding to that variable. For instance $\boldsymbol{x}_t$ denotes a vector at time $t$, $x_{i,t}$ denotes its $i$th component at time $t$, and $\boldsymbol{x}_{\boldsymbol{v},t}$ denotes the vector of its components that are in $\boldsymbol{v}$ at time $t$. 
}

\subsection{Problem formulation}

${\cal A}$ is the set of actions, $D$ is the dimension of the context vector, ${\cal D} := \{1,2,\ldots,D\}$ is the set of types, and $\boldsymbol{{\cal R}} = \{ {\cal R}(a) \}_{a \in {\cal A}}: {\cal A} \rightarrow 2^{\cal D}$ is the \newc{(unknown)} relevance relation, which maps every $a \in {\cal A}$ to a subset of ${\cal D}$. 
We call $D_{\textrm{rel}} = \max_{a \in {\cal A}} |{\cal R}(a)|$, the {\em relevance dimension}. When 
 $D_{\textrm{rel}} =1$, we say that $\boldsymbol{{\cal R}}$ is a {\em relevance function}. Elements of ${\cal D}$ are denoted by index $i$. 
 Let ${\cal V}_K$, $1\leq K \leq D$ be the set of $K$ element subsets of ${\cal D}$. We call $\boldsymbol{v} \in {\cal V}_K$,  a $K$-tuple of types.

At each time step $t=1,2,\ldots$, a context vector $\boldsymbol{x}_{t}$
arrives to the learner. After observing $\boldsymbol{x}_{t}$ the
learner selects an action $a\in{\cal A}$, which results in a random
reward $r_{t}(a,\boldsymbol{x}_{t})$. The learner may choose to observe
this reward by paying cost $c_{O} \geq 0$. The goal of the learner is
to maximize the sum of the generated rewards minus costs of observations
for any time horizon $T$.

Each $\boldsymbol{x}_{t}$ consists of $D$ types of contexts, and can be written as
$\boldsymbol{x}_{t} = (x_{1,t}, x_{2,t}, \ldots, x_{D,t})$ where $x_{i,t}$ is
called the type $i$ context. ${\cal X}_{i}$ denotes the space of
type $i$ contexts and ${\cal X}:={\cal X}_{1}\times{\cal X}_{2}\times\ldots\times{\cal X}_{D}$ denotes
the space of context vectors. At any $t$, we have $x_{i,t}\in{\cal X}_{i}$
for all $i\in{\cal D}$. All of our results hold for the case when ${\cal X}_{i}$ is a bounded subset
of the real line. The number of elements in ${\cal X}_{i}$ can be finite or infinite. 
For the sake of notational simplicity we
take ${\cal X}_{i}=[0,1]$ for all $i\in{\cal D}$, since the values of context can be rescaled to lie in this range. 
Then, for the case when the actual context space is finite, $[0,1]$ will be a superset of the context space.  
For a context vector $\boldsymbol{x}$, $\boldsymbol{x}_{{\cal R}(a)}$ denotes the vector of values of $\boldsymbol{x}$ corresponding to types ${\cal R}(a)$. 
The reward of action $a$ for $\boldsymbol{x}= (x_1, x_2, \ldots, x_D) \in {\cal X}$, i.e.,
$r_{t}(a,\boldsymbol{x})$, is generated according to an i.i.d. process
with distribution $F(a,\boldsymbol{x}_{{\cal R}(a)})$ with support in $[0,1]$ and expected
value $\mu(a,\boldsymbol{x}_{{\cal R}(a)})$. 
The learner does not know $F(a,\boldsymbol{x}_{{\cal R}(a)})$ and $\mu(a,\boldsymbol{x}_{{\cal R}(a)})$ for $a\in{\cal A}$, $\boldsymbol{x}\in{\cal X}$ a priori.

The following assumption gives a similarity structure between the
expected reward of an action and the contexts of the type that is relevant to that action.
\begin{assumption*} (\textbf{The Similarity Assumption}) 
For all $a\in{\cal A}$,
$\boldsymbol{x},\boldsymbol{x}'\in{\cal X}$, we have 
$|\mu(a,\boldsymbol{x}_{{\cal R}(a)})-\mu(a,\boldsymbol{x}'_{{\cal R}(a)})|\leq L ||\boldsymbol{x}_{{\cal R}(a)}-\boldsymbol{x}'_{{\cal R}(a)} ||$,
where $L>0$ is the Lipschitz constant and $||\cdot||$ is the Euclidian norm. 
\end{assumption*}
We assume that the learner knows the $L$ given in the {\em Similarity Assumption}.
While we need this assumption in order to derive our analytic bounds on the performance of the algorithm, as it is common in all contextual bandit algorithms \cite{lu2010contextual,slivkins2011contextual}, our numerical results in Section \ref{sec:numerical} show that the proposed algorithm works well on real-world data sets for which this assumption may not hold. 
Given a context vector $\boldsymbol{x} = (x_1, x_2, \ldots, x_D)$, the optimal action is 
$a^{*}(\boldsymbol{x}):=\argmax_{a\in{\cal A}}\mu(a, \boldsymbol{x}_{{\cal R}(a)})$.
In order to assess the learner's loss due to unknowns, we compare its performance with the performance of an {\em oracle benchmark} which knows $a^{*}(\boldsymbol{x})$ for all $\boldsymbol{x}\in{\cal X}$.
Let $\mu_{t}(a):=\mu(a, \boldsymbol{x}_{{\cal R}(a),t})$. The action chosen by the learner
at time $t$ is denoted by $\alpha_{t}$.
The learner also decides whether to observe the reward or not, and this decision
of the learner at time $t$ is denoted by $\beta_{t}\in\{0,1\}$. If
$\beta_{t}=1$, then the learner chooses to observe the reward, else if
$\beta_{t}=0$, then the learner does not observe the reward.
The learner's performance loss with respect to the oracle benchmark
is defined as the regret, whose value at time $T$ is given by
\vspace{-0.1in}
\begin{align}
R(T) & :=\sum_{t=1}^{T}\mu_{t}(a^{*}(\boldsymbol{x}_{t}))-\sum_{t=1}^{T}(\mu_{t}(\alpha_{t})-c_{O}\beta_{t}).\label{eqn:regretdef}
\end{align}
%
Different from the definitions of regret in related works \cite{lu2010contextual,chu2011contextual,slivkins2011contextual}, there is an additional cost $c_{O}$, which is called the {\em active learning/exploration} cost. Hence the goal of the learner is to maximize its total reward while balancing the active learning costs incurred when observing the rewards. The algorithm we propose in this paper is able to achieve a given tradeoff between the two by actively controlling when to observe the rewards.

A regret that grows sublinearly in $T$, i.e., $O(T^{\gamma})$, $\gamma<1$, guarantees convergence in terms of the average reward, i.e., $R(T)/T \rightarrow 0$. We are interested in achieving sublinear growth with
a rate only depending on $D_{\textrm{rel}}$ independent of $D$.
\section{Online Learning of Relevance Relations}\label{sec:algodescription}
\subsection{Relevance Learning with Feedback} \label{sec:algodesc}

In this section we propose the algorithm {\em Relevance LEArning with Feedback} (RELEAF), which learns the best action for each context vector by simultaneously learning the relevance relation, and then estimating the expected
reward of each action based on the values of the contexts of the relevant types. The feedback, i.e., reward observations, is controlled based on the past context vector arrivals, in a way that the reward observations are only made for actions for which the uncertainty in the reward estimates are high for the current context vector. The controlled feedback feature allows RELEAF to operate as an active learning algorithm. 
RELEAF has a {\em relevance parameter} $\gamma_{\textrm{rel}}$ which is the number of relevant types it will learn for each action. 
In order to have analytic bounds on the regret, it is required that $\gamma_{\textrm{rel}} \geq D_{\textrm{rel}}$. However, the numerical results in Section \ref{sec:numerical} show that even with $\gamma_{\textrm{rel}} =1$, RELEAF performs very well on several real-world datasets.  
We assume that RELEAF knows $D_{\textrm{rel}}$ but not $\boldsymbol{{\cal R}}$. Hence, in this paper we assume that RELEAF is run with $\gamma_{\textrm{rel}} = D_{\textrm{rel}}$. In theory, it is enough for RELEAF to know an upper bound $\bar{D}_{\textrm{rel}}$ on $D_{\textrm{rel}}$. Then, the regret of RELEAF will depend on $\bar{D}_{\textrm{rel}}$. 
Operation of RELEAF can be summarized as follows:
\begin{itemize}
\item Adaptively form partitions (composed of intervals) of the context space of each type in ${\cal D}$ and use them to learn the action rewards of similar context vectors together from the history of observations.
\item For an action, form reward estimates for $2\gamma_{\textrm{rel}}$-tuple of intervals corresponding to $2\gamma_{\textrm{rel}}$-tuple of types. Based on the accuracy of these estimates, either choose to explore and observe the reward (by paying cost $c_O$ for active learning) or choose to exploit the best estimated action (but do not observe the reward) for the current context vector. 
\item In order to estimate the expected rewards of the actions accurately, find the set of $\gamma_{\textrm{rel}}$-tuple of types relevant to each action $a$. For instance, a $\gamma_{\textrm{rel}}$-tuple of types  $\boldsymbol{v} \in {\cal V}_{\gamma_{\textrm{rel}}}$ is relevant to action $a$ if ${\cal R}(a) \subset \boldsymbol{v}$. Conclude that $\boldsymbol{v}$ is relevant to $a$ if the variation of the reward estimates does not greatly exceed the natural variation of the expected reward of action $a$ over the hypercube corresponding to $\boldsymbol{v}$ formed by intervals of type $i \in \boldsymbol{v}$ (calculated using {\em Similarity Assumption}).
\end{itemize}

\begin{figure}[h!]
\fbox {
\begin{minipage}{0.95\columnwidth}
{\fontsize{9}{9}\selectfont
\flushleft{Relevance Learning with Feedback (RELEAF):}
\begin{algorithmic}[1]
\STATE{Input:  $L$, $\rho$, $\delta$, $\gamma_{\textrm{rel}}$.}
\STATE{Initialization: ${\cal P}_{i,1} = \{[0,1]\}$, $i \in {\cal D}$. Run {\bf Initialize}($i$, ${\cal P}_{i,1}$, $1$), $i \in {\cal D}$.}
\WHILE{$t \geq 1$}
\STATE{Observe $\boldsymbol{x}_t$, find $\boldsymbol{p}_t$ that $\boldsymbol{x}_t$ belongs to.}
\STATE{Set ${\cal U}_t := \bigcup_{i \in {\cal D}} {\cal U}_{i,t}$, where ${\cal U}_{i,t}$ (given in (\ref{eqn:underexplore})), is the set of under explored actions for type $i$.}
\IF{${\cal U}_t \neq \emptyset$}
\STATE{(\textbf{Explore}) $\beta_t =1$, select $\alpha_t$ randomly from ${\cal U}_t$, observe $r_t(\alpha_t, \boldsymbol{x}_t)$.}
\STATE{Update sample mean reward of $\alpha_t$ corresponding to $2\gamma_{\textrm{rel}}$-tuples of intervals: for all $\boldsymbol{q} \in Q_t$, given in (\ref{eqn:setofpairs}).\\
 $\bar{r}^{\boldsymbol{v}(\boldsymbol{q})}(\boldsymbol{q},\alpha_t) 
 = (S^{\boldsymbol{v}(\boldsymbol{q})}(\boldsymbol{q},\alpha_t) \bar{r}^{\boldsymbol{v}(\boldsymbol{q})}(\boldsymbol{q},\alpha_t) + r_t(\alpha_t, \boldsymbol{x}_t) ) /   ( S^{\boldsymbol{v}(\boldsymbol{q})} (\boldsymbol{q},\alpha_t) + 1 )$. 
}
\STATE{Update counters: for all $\boldsymbol{q} \in Q_t$, $S^{\boldsymbol{v}(\boldsymbol{q})}(\boldsymbol{q},\alpha_t) ++$. }
\ELSE
\STATE{(\textbf{Exploit}) $\beta_t =0$, for each $a \in {\cal A}$ calculate the set of candidate relevant contexts $\textrm{Rel}_t(a)$ given in (\ref{eqn:candrelevant}).}
\FOR{$a \in {\cal A}$}
\IF{$\textrm{Rel}_t(a) = \emptyset$}
\STATE{Randomly select $\hat{c}_t(a)$ from ${\cal V}_{\gamma_{\textrm{rel}}}$.}
\ELSE
\STATE{For each $i \in \textrm{Rel}_t(a)$, calculate $\textrm{Var}_t(\boldsymbol{v},a)$ given in  (\ref{eqn:canvar}).}
\STATE{Set $\hat{c}_t(a) = \argmin_{\boldsymbol{v} \in \textrm{Rel}_t(a) } \textrm{Var}_t(\boldsymbol{v},a)$.}
\ENDIF
\STATE{Calculate $\bar{r}^{\hat{c}_t(a)}(\boldsymbol{p}_{\hat{c}_t(a),t} a)$ as given in (\ref{eqn:equivalance}).}
\ENDFOR
\STATE{Select $\alpha_t = \argmax_{a \in {\cal A}} \bar{r}^{\hat{c}_t(a)}(\boldsymbol{p}_{\hat{c}_t(a),t} a)$.}
\ENDIF
\FOR{$i \in {\cal D}$}
\STATE{$N^i(p_{i,t})++$.}
\IF{$N^i(p_{i,t}) \geq 2^{\rho l(p_{i,t})} $}
\STATE{Create two new level $l(p_{i,t})+1$ intervals $p$, $p'$ whose union gives $p_{i,t}$.}
\STATE{${\cal P}_{i,t+1} = {\cal P}_{i,t} \cup \{ p, p' \} - \{ p_{i,t} \}$.}
\STATE{Run \textbf{Initialize}($i$, $\{ p, p' \}$, $t$).}
\ELSE
\STATE{${\cal P}_{i,t+1} = {\cal P}_{i,t}$.}
\ENDIF
\ENDFOR
\STATE{$t=t+1$}
\ENDWHILE
\end{algorithmic}
}
\end{minipage}
} 
\fbox {
\begin{minipage}{0.95\columnwidth}
{\fontsize{9}{9}\selectfont
{\bf Initialize}($i$, ${\cal B}$, $t$):
\begin{algorithmic}[1]
\FOR{$p \in {\cal B}$}
\STATE{\newc{Set $N^{i}(p)=0$, $\bar{r}^{ (\boldsymbol{v}(\boldsymbol{q}),i )}( (\boldsymbol{q},p) , a) = 0$, 
$S^{ (\boldsymbol{v}(\boldsymbol{q}),i ) } ( (\boldsymbol{q},p), a) = 0$ for all $2\gamma_{\textrm{rel}}$-tuple of types $(\boldsymbol{v}(\boldsymbol{q}),i )$ that contain type $i$, for all $a \in {\cal A}$  such that
$(\boldsymbol{q},p) \in \boldsymbol{{\cal P}}_{(\boldsymbol{v}(\boldsymbol{q}),i ),t}$.}}
\ENDFOR
\end{algorithmic}
}
\end{minipage}
}
\vspace{-0.1in}
\caption{Pseudocode for RELEAF.} \label{fig:CALIF}
\vspace{-0.2in}
\end{figure}

In order to learn fast, RELEAF exploits the similarities between
the context vectors of the relevant types\footnote{RELEAF only needs to know $L$ but not $\boldsymbol{{\cal R}}$. Even if $L$ is not known, it can use a slowly increasing function $\hat{L}(t)$ as an estimate for $L$ so that a sublinear regret bound will hold for a time horizon $T$ such that $\hat{L}(T) \geq L$.} given in the {\em Similarity Assumption} to estimate the rewards of the actions. The key to success of our algorithm is that this estimation is good enough if relevant tuples of types for each action are correctly identified. 
Since in Big Data applications $D$ can be very large, learning the $D_{\textrm{rel}}$-tuple of types that is relevant to each action greatly increases the learning speed.

RELEAF adaptively forms the partition
of the space for each type in ${\cal D}$, where the partition for
the context space of type $i$ at time $t$ is denoted by ${\cal P}_{i,t}$.
All the elements of ${\cal P}_{i,t}$ are disjoint intervals of ${\cal X}_{i}$
whose lengths are elements of the set $\{1,2^{-1},2^{-2},\ldots\}$.\footnote{Setting interval lengths to powers of $2$ is for presentational simplicity. In general, interval lengths can be set to powers of any integer greater than $1$.}
An interval with length $2^{-l}$, $l\geq0$ is called a level $l$
interval, and for an interval $p$, $l(p)$ denotes its level, $s(p)$ denotes its length. By
convention, intervals are of the form $(a,b]$, with the only exception
being the interval containing $0$, which is of the form $[0,b]$.\footnote{Endpoints of intervals will not matter in our analysis, so our results
will hold even when the intervals have common endpoints.} 
Let $p_{i,t}\in{\cal P}_{i,t}$ be the interval that $x_{i,t}$ belongs
to, $\boldsymbol{p}_{t}:=(p_{1,t},\ldots,p_{D,t})$ and $\boldsymbol{{\cal P}}_{t}:=({\cal P}_{1,t},\ldots,{\cal P}_{D,t})$. For $\boldsymbol{v} \in {\cal V}_K$, $1 \leq K \leq D$, let $\boldsymbol{p}_{\boldsymbol{v},t}$ denote the elements of $\boldsymbol{p}_t$ corresponding to types in $\boldsymbol{v}$, and let $\boldsymbol{{\cal P}}_{\boldsymbol{v}, t} = \times_{i \in \boldsymbol{v}} {\cal P}_{i,t}$.

The pseudocode of RELEAF is given in Fig. \ref{fig:CALIF}. 
RELEAF starts with ${\cal P}_{i,1} = \{  {\cal X}_i \} =\{ [0,1] \}$ for each $i \in {\cal D}$. 
As time goes on and more contexts arrive for each type $i$, it divides ${\cal X}_i$ into smaller and smaller intervals. Then, these intervals are used to create $2\gamma_{\textrm{rel}}$-dimensional hypercubes corresponding to $2\gamma_{\textrm{rel}}$-tuples of types, and past observations corresponding to context vectors lying in these hypercubes are used to form sample mean reward estimates of the expected action rewards.
The intervals are created in a way to balance the variation of the sample mean rewards due to the number of past observations that are used to calculate them and the variation of the expected rewards in each hypercube formed by the intervals. 
For each interval $p \in {\cal P}_{i,t}$, RELEAF keeps a counter for the number of type $i$ context arrivals to $p$. When the value of this counter exceeds $2^{\rho l(p)}$, where $\rho>0$ is an input of RELEAF called the {\em duration parameter}, $p$ is destroyed and two level $l(p)+1$ intervals, whose union gives $p$ are created. 
For example, when $p_{i,t} = (k 2^{-l}, (k+1) 2^{-l}]$ for some $0<k \leq 2^l -1$ if $N^i_t(p_{i,t}) \geq 2^{\rho l}$, RELEAF sets 
\begin{align}
&{\cal P}_{i,t+1} = {\cal P}_{i,t} - \{ p_{i,t}   \}  \notag \\
&\cup \{ (k 2^{-l}, (k+1/2) 2^{-l}  ], ( (k+1/2) 2^{-l} ,  (k+1) 2^{-l}]   \}   .  \notag
\end{align}
Otherwise ${\cal P}_{i,t+1}$ remains the same as ${\cal P}_{i,t}$.
It is easy to see that the lifetime of an interval increases exponentially in its duration parameter.

We next describe the control numbers RELEAF keeps for each type $i$, the counters and sample mean rewards RELEAF keeps for $2\gamma_{\textrm{rel}}$-tuples of intervals ($2\gamma_{\textrm{rel}}$-dimensional hypercubes) corresponding to a $2\gamma_{\textrm{rel}}$-tuple of types to determine whether to explore or exploit and how to exploit.
\newc{Let ${\cal V}_K(i)$ be the set of $K$-tuples of types that contains type $i$. For each $\boldsymbol{v} \in {\cal V}_K(i)$, we have $i \in \boldsymbol{v}$.}
 
\newc{Let ${\cal D}_{-\boldsymbol{v}} := {\cal D} - \{ \boldsymbol{v}\}$.}
For type $i$, let 
$Q_{i,t} := \{  \boldsymbol{p}_{\boldsymbol{v},t} : \boldsymbol{v} \in   {\cal V}_{2\gamma_{\textrm{rel}}}(i) \}$
be the set of $2\gamma_{\textrm{rel}}$-tuples of intervals that includes an interval belonging to type $i$ at time $t$, and let 
\begin{align}
Q_t := \bigcup_{i \in {\cal D}} Q_{i,t}  .    \label{eqn:setofpairs}
\end{align}
To denote an element of $Q_{i,t}$ or $Q_t$ we use index $\boldsymbol{q}$. 
For any $\boldsymbol{q} \in Q_t$, the tuple of types corresponding to the tuple intervals in $\boldsymbol{q}$ is denoted by $\boldsymbol{v}(\boldsymbol{q})$. For instance if 
$\boldsymbol{q} = (q_{i_1}, q_{i_2}, \ldots, q_{i_{2\gamma{\textrm{rel}}}})$, 
then $\boldsymbol{v}(\boldsymbol{q}) = (i_1, i_2, \ldots, i_{2\gamma{\textrm{rel}}})$.
The decision to explore or exploit at time $t$ is solely based on $\boldsymbol{p}_t$. 
For events $A_1, \ldots, A_K$, let $\mathrm{I}(A_1,\ldots, A_k)$ denote the indicator function of event $\bigcap_{k=1:K} A_k$. Let
\begin{align*}
S_{t}^{\boldsymbol{v}(\boldsymbol{q})}(\boldsymbol{q}, a) 
:=  \sum_{t'=1}^t \mathrm{I} \left( \alpha_{t'} =a , \beta_{t'} =1,  \boldsymbol{p}_{\boldsymbol{v}(\boldsymbol{q}),t'} = \boldsymbol{q} \right),
\end{align*}
be the number of times $a$ is selected and the reward is observed when the context values corresponding to types $\boldsymbol{v}(\boldsymbol{q})$ are in $\boldsymbol{q}$ and $\boldsymbol{q} \in \boldsymbol{{\cal P}}_{\boldsymbol{v}(\boldsymbol{q}),t}$.
Also let 
\begin{align}
& \bar{r}^{\boldsymbol{v}(\boldsymbol{q})}_t(\boldsymbol{q} ,a) \notag \\
& := \frac{\sum_{t'=1}^t r_{t'}(a, \boldsymbol{x}_{t'})  \mathrm{I} \left( \alpha_{t'} =a , \beta_t' =1, \boldsymbol{p}_{\boldsymbol{v}(\boldsymbol{q}),t'} = \boldsymbol{q} \right) }{S_{t}^{\boldsymbol{v}(\boldsymbol{q})}(\boldsymbol{q}, a) }, \notag
\end{align}
be the sample mean reward of action $a$ for $2\gamma_{\textrm{rel}}$-tuple of intervals $\boldsymbol{q}$.

At time $t$, RELEAF assigns a {\em control number} to each $i \in {\cal D}$ denoted by
\begin{align}
D_{i,t} := \frac{ 2 \log(t \newc{D^*} |{\cal A}| / \delta)}{ (L s(p_{i,t}) )^2} ,    \label{eqn:controlnum}
\end{align}
where 
\begin{align}
\newc{ D^* = {D-1 \choose 2\gamma_{\textrm{rel}} -1} } .   \label{eqn:newD}
\end{align}
This number depends on the cardinality of ${\cal A}$, the length of the active interval that type $i$ context is in at time $t$ and a {\em confidence parameter} $\delta>0$, which controls the accuracy of sample mean reward estimates. 
\newc{$D_{i,t}$ is a sufficient number of reward observations from an action, which guarantees that the estimated reward for that action will be sufficiently close to the expected reward for the context at time t. By sufficiently close we mean that when $i$ is the relevant type of context for the action, the difference between the true expected reward of that action and the estimated expected reward will be less than a constant factor of the length of the interval that contains the type $i$ context due to the Similarity Assumption. The control function ensures that within each hypercube, the rate of exploration only increases logarithmically in time. It also guarantees that each action is explored at least $\sim1/s(p_{i,t})^{2}$ times, which guarantees that the regret due to exploitations in each hypercube is small enough to achieve a sublinear regret bound (see Theorem 1).
}

Then, it computes the set of under-explored actions for type $i$ as
\begin{align}
 {\cal U}_{i,t} &:= \left\{  a \in {\cal A} : S^{\boldsymbol{v}(\boldsymbol{q})}_{t}( \boldsymbol{q},a) < D_{i,t} \right. \notag \\
&\left.  \textrm{ for some } \boldsymbol{q} \in Q_i(t)  \right\}   ,  \label{eqn:underexplore}
\end{align}
and then, the set of under-explored actions as ${\cal U}_t := \bigcup_{i \in {\cal D}} {\cal U}_{i,t}$. \newc{The decision to explore or exploit is based on whether or not ${\cal U}_t$ is empty, as follows:}

(i) If ${\cal U}_t  \neq \emptyset$, RELEAF randomly selects an action  $\alpha_t \in {\cal U}_t$ to explore, and observes its reward $r_t(\alpha_t, \boldsymbol{x}_t)$. Reward observation costs $c_O$, which is the active learning cost.
Then, it updates the sample mean rewards and counters for all $\boldsymbol{q} \in Q_t$,
\begin{align}
\bar{r}^{\boldsymbol{v}(\boldsymbol{q})}_{t+1}(\boldsymbol{q},\alpha_t) &=  \frac{ S^{\boldsymbol{v}(\boldsymbol{q})}_t(\boldsymbol{q},\alpha_t) \bar{r}^{\boldsymbol{v}(\boldsymbol{q})}_{t+1}(\boldsymbol{q},\alpha_t) +  r_t(\alpha_t, \boldsymbol{x}_t) }{S^{\boldsymbol{v}(\boldsymbol{q})}_t(\boldsymbol{q},\alpha_t) + 1}, \notag \\
S^{\boldsymbol{v}(\boldsymbol{q})}_{t+1}(\boldsymbol{q},\alpha_t)  &= S^{\boldsymbol{v}(\boldsymbol{q})}_t(\boldsymbol{q},\alpha_t) +1. \notag 
\end{align}

(ii) If ${\cal U}_t  = \emptyset$, RELEAF exploits by estimating the relevant $\gamma_{\textrm{rel}}$-tuple of types $\hat{c}_t(a)$ for each $a \in {\cal A}$ and forming sample mean reward estimates for action $a$ based on $\hat{c}_t(a)$. 
It first computes the set of {\em candidate relevant tuples of types} for each $a \in {\cal A}$. 
For each $\boldsymbol{v} \in {\cal V}_{\gamma_{\textrm{rel}}}$, let ${\cal V}_{2\gamma_{\textrm{rel}}}(\boldsymbol{v})$ 
be the set of $2\gamma_{\textrm{rel}}$-tuples of types such that $\boldsymbol{v} \cap \boldsymbol{w} = \boldsymbol{v}$ for $\boldsymbol{w} \in {\cal V}_{2\gamma_{\textrm{rel}}}(\boldsymbol{v})$. 

\begin{align}
\textrm{Rel}_t(a) &:=  \left\{  \boldsymbol{v} \in {\cal V}_{\gamma_{\textrm{rel}}} :
 | \bar{r}^{  \boldsymbol{w}    }_t( \boldsymbol{p}_{\boldsymbol{w}  ,t},a) - \bar{r}^{\boldsymbol{w}' }_t( \boldsymbol{p}_{\boldsymbol{w}' ,t}, a)    | \right. \notag \\
 & \left. \leq 3 L \sqrt{\gamma_{\textrm{rel}}}  \max_{i \in \boldsymbol{v}} s(p_{i,t}), \forall \boldsymbol{w}  , \boldsymbol{w}'  \in {\cal V}_{2\gamma_{\textrm{rel}}}(\boldsymbol{v}) \right\}  .  \label{eqn:candrelevant}
\end{align}
The intuition is that if the tuple of types $\boldsymbol{v}$ contains the tuple of types ${\cal R}(a)$ that is relevant to $a$, then independent of the values of the contexts of the other types, the variation of the pairwise sample mean reward of $a$ over $\boldsymbol{p}_{\boldsymbol{w},t}$ must be very close to the variation of the expected reward of $a$ in $\boldsymbol{p}_{\boldsymbol{v},t}$ for $\boldsymbol{w}  \in {\cal V}_{2D_{\textrm{rel}}}(\boldsymbol{v})$ in exploitation steps.

If $\textrm{Rel}_t(a)$ is empty, this implies that RELEAF failed to identify the relevant tuple of types, hence $\hat{c}_t(a)$ is randomly selected from ${\cal V}_{\gamma_{\textrm{rel}}}$. 
If $\textrm{Rel}_t(a)$ is nonempty, RELEAF computes the maximum variation
\begin{align}
\textrm{Var}_t(\boldsymbol{v},a) &:= \hspace{-0.1in} \max_{ \boldsymbol{w}  , \boldsymbol{w}'  \in {\cal V}_{2\gamma_{\textrm{rel}}}(\boldsymbol{v})  } | \bar{r}^{\boldsymbol{w}}_t(\boldsymbol{p}_{\boldsymbol{w},t}, a) - \bar{r}^{\boldsymbol{w}'}_t (\boldsymbol{p}_{\boldsymbol{w}',t} a)    | ,      \label{eqn:canvar}
\end{align}
for each $\boldsymbol{v} \in \textrm{Rel}_t(a)$. Then it sets $\hat{c}_t(a) = \min_{\boldsymbol{v} \in \textrm{Rel}_t(a)} \textrm{Var}_t(\boldsymbol{v},a)$. This way, whenever ${\cal R}(a) \subset \boldsymbol{v}$ for some $\boldsymbol{v} \in \textrm{Rel}_t(a)$, even if $\boldsymbol{v}$ is not selected as the estimated relevant tuple of types, the sample mean reward of $a$ calculated based on the estimated relevant tuple of types will be very close to the sample mean of its reward calculated according to ${\cal R}(a)$.
After finding the estimated relevant tuple of types $\hat{c}_t(a)$ for $a \in {\cal A}$, the sample mean rewards of the actions are computed as
\begin{align}
& \bar{r}^{\hat{c}_t(a)}_t ( \boldsymbol{p}_{\hat{c}_t(a),t}, a) \notag \\
& \hspace{-0.1in}  := \frac{  \hspace{-0.1in}  
\sum\limits_{\boldsymbol{w} \in {\cal V}_{2\gamma_{\textrm{rel}}}(\hat{c}_t(a))     }  \hspace{-0.15in} 
\bar{r}^{\boldsymbol{w}}_t ( \boldsymbol{p}_{\boldsymbol{w},t}, a)  S^{\boldsymbol{w}}_t (\boldsymbol{p}_{\boldsymbol{w},t}, a)  }
{  \sum\limits_{\boldsymbol{w} \in {\cal V}_{2\gamma_{\textrm{rel}}}(\hat{c}_t(a))   }  \hspace{-0.1in} 
S^{\boldsymbol{w} }_t ( \boldsymbol{p}_{\boldsymbol{w} ,t},a)} . \label{eqn:equivalance}   
\end{align}
Then, RELEAF selects 
\begin{align}
\alpha_t = \argmax_{a \in {\cal A}} \bar{r}^{\hat{c}_t(a)}_t ( \boldsymbol{p}_{\hat{c}_t(a),t}, a).  \notag
\end{align}
Different from explorations, since the reward is not observed in exploitations, sample mean rewards and counters are not updated.

\vspace{-0.1in}
\subsection{Why sample mean reward estimates for $2\gamma_{\textrm{rel}}$-tuple of intervals are required?}

Assume that RELEAF knows $D_{\textrm{rel}}$, hence $\gamma_{\textrm{rel}} = D_{\textrm{rel}}$.
Then, RELEAF computes sample mean reward estimates for $2D_{\textrm{rel}}$-tuples of intervals corresponding to different types and uses them to learn the action with the highest reward by learning the relevant $D_{\textrm{rel}}$-tuples of types. 
However, is it possible to learn the action with the highest reward by only forming sample
mean estimates for $D_{\textrm{rel}}$-tuples of intervals?
\newc{For instance consider the case when $D_{\textrm{rel}} =1$ and the following greedy learning algorithm called Greedy-RELEAF, outlined as follows:}

(i)  Form sample mean reward estimates of each action $a$ for each type $i \in {\cal D}$, i.e., $\bar{r}^i_t(p,a)$, $p \in {\cal P}_{i,t}$ based only on the context arrivals corresponding to type $i$; (ii) In exploitation steps choose the action with the highest sample mean reward over all sets of intervals in $\boldsymbol{p}_t$, i.e., $\argmax_{a \in {\cal A}} \max_{i \in \cal D} \bar{r}^i_t(p_i(t), a)$.
The following lemma shows that there exists a context arrival process for which the regret of Greedy-RELEAF will be linear in time. 

\begin{lemma} \label{lemma:counterexample}
Let ${\cal A} = \{ a,b \}$, ${\cal D} = \{ i, j\}$, ${\cal R}(a) =i$, ${\cal R}(b) =j$. $x_i(t) = x$ for all $t$ and $x_j(t) = 1$ with probability $0.8$ and $x_j(t) = 0$ with probability $0.2$ for all $t$ independently. Assume that $\mu(a,x) = 0.5$ and $\mu(b,x_j(t)) = x_j(t)$. Then, we have $R(T) = O(T)$. 
\end{lemma}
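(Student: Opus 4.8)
The plan is to exhibit that Greedy-RELEAF is systematically fooled whenever $x_j(t)=0$, because the estimate it forms for action $b$ based on the \emph{irrelevant} type $i$ pools together observations from both values of $x_j$ and therefore concentrates around the \emph{marginal} mean of $b$'s reward rather than its conditional mean. First I would record the oracle benchmark. Since $\mu(a,x)=0.5$ for all $t$ and $\mu(b,x_j(t))=x_j(t)$, the optimal action is $b$ (expected reward $1$) when $x_j(t)=1$ and $a$ (expected reward $0.5$) when $x_j(t)=0$, so the per-step optimal expected reward is $0.8\cdot 1+0.2\cdot 0.5=0.9$, and $a^*(\boldsymbol{x}_t)=a$ exactly on the $x_j(t)=0$ steps.

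Next I would analyze the single-type sample means used in exploitation steps. Because $x_i(t)=x$ is constant, every type-$i$ context falls into the unique active interval of ${\cal P}_{i,t}$ containing $x$; hence $\bar{r}^i_t(p,b)$ aggregates \emph{all} past reward observations of $b$ irrespective of the value of $x_j$, and by the same high-probability concentration bounds that RELEAF uses to drive exploration, it concentrates around the marginal mean $0.8\cdot 1+0.2\cdot 0=0.8$. In contrast, for type $j$ the context takes only the two values $0$ and $1$, which lie in distinct intervals of ${\cal P}_{j,t}$ once the partition is refined; thus $\bar{r}^j_t(p,b)$ concentrates around $1$ on the interval containing $1$ and around $0$ on the interval containing $0$. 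For action $a$ both single-type estimates concentrate around $0.5$, since $a$'s reward is identically $0.5$.

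The failure then comes from the greedy rule $\argmax_{a}\max_{i}\bar{r}^i_t(p_{i,t},a)$, which maximizes over types. When $x_j(t)=0$ the relevant type-$j$ estimate for $b$ is near $0$, but the irrelevant type-$i$ estimate is near $0.8$, so $\max_i \bar{r}^i_t(p_{i,t},b)\approx 0.8$, whereas $\max_i \bar{r}^i_t(p_{i,t},a)\approx 0.5$. Greedy-RELEAF therefore selects $b$ although $a$ is optimal, incurring instantaneous regret $\mu_t(a)-\mu_t(b)=0.5$. Since $x_j(t)=0$ occurs independently with probability $0.2$ at each step, and since exploitation steps constitute all but a sublinear (per-interval polylogarithmic) fraction of the horizon, the regret accumulated in exploitation alone is at least a constant times $0.2\cdot 0.5\cdot T$, giving $R(T)=\Omega(T)$; combined with the trivial bound $R(T)\le T$ this yields the stated linear growth.

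The main obstacle is to make these concentration statements precise while accounting for the adaptive partitioning and the exploration schedule. Concretely, I would show that after enough exploration the two relevant type-$j$ intervals cleanly separate the atoms $0$ and $1$, so that the conditional estimates concentrate at $0$ and $1$ respectively, while the pooled type-$i$ estimate concentrates at $0.8$; this uses exactly the confidence bounds underlying RELEAF, applied here to the single-type counts. I would then bound the number of exploration steps, which is governed by the control numbers $D_{i,t}$ and grows only polylogarithmically per interval, to confirm that exploitation dominates the horizon, so that the constant per-step exploitation regret indeed integrates to a linear lower bound on $R(T)$.
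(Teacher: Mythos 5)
Your argument is correct and is essentially the paper's own proof: both identify that the type-$i$ sample mean for action $b$ pools observations across both values of $x_j$ and concentrates at the marginal mean $0.8$, so on every exploitation step with $x_j(t)=0$ the greedy max-over-types rule prefers $b$ over the optimal $a$, incurring constant per-step regret on a linear fraction of steps. Your version is in fact slightly cleaner on the bookkeeping (you correctly use the $0.2$ arrival frequency of $x_j=0$ and note that $O(T)$ here really means $\Omega(T)$), whereas the paper simply conditions on the concentration event holding with probability at least $0.5$ and asserts the linear accumulation.
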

\begin{proof}
Given that Greedy-RELEAF explores sufficiently many times, at an exploitation step $t$ when the context vector is $(x,0)$, we have
\begin{align}
\mathrm{P} & \left(  |\bar{r}^i_t(p_i(t), a) -0.5   | < 0.1 , |\bar{r}^i_t(p_i(t), b) -0.8   | < 0.1, \right. \notag \\
& \left. |\bar{r}^j_t(p_j(t), a) - 0.5   | < 0.1  \right) \geq 0.5      \notag
\end{align}
for any $p_i(t)$ containing $x$  and $p_j(t)$ containing $0$. At such a $t$
Greedy-RELEAF will select action $b$ with probability at least $0.5$, resulting in an expected regret of at least $0.5^2$. Assume that the context vector arrivals are such that $(x,0)$ appears in more than $50\%$ of the time for all $T$ large enough. Then, the regret of Greedy-RELEAF will be linear in $T$.
\end{proof}

For the problem instance given in Lemma \ref{lemma:counterexample}, RELEAF will calculate and compare sample mean rewards $\bar{r}^{i,j}_t( (p_i(t),p_j(t)), a)$ for pairs of intervals corresponding to different types instead of directly forming sample mean rewards for intervals of each type; hence in exploitations it can identify that the type relevant to action $a$ is $i$ and action $b$ is $j$ with a very high probability. We will prove this in the following subsection by deriving a sublinear in time regret bound for RELEAF for the case when $D_{\textrm{rel}}=1$. \newc{A general regret bound for $1 \leq D_{\textrm{rel}} < D/2$ is proven in our online technical report \cite{tekinrelevance2014}.}

\vspace{-0.1in}
\subsection{Regret analysis of RELEAF for $D_{\textrm{rel}}=1$} \label{sec:regretRELEAF}

In this section we derive analytical regret bounds for RELEAF.
For simplicity of exposition, we prove our bounds for the special case when $D_{\textrm{rel}}=1$, i.e., when the relevance relation is a function, and RELEAF is run with $\gamma_{\textrm{rel}}=D_{\textrm{rel}}$. Although $D_{\textrm{rel}}=1$ is the simplest special case, our numerical results on real-world datasets in Section \ref{sec:numerical} shows that RELEAF performs very well with $\gamma_{\textrm{rel}}=1$. 

Let $\tau(T) \subset \{ 1, 2, \ldots, T \}$ be the set of time steps in which RELEAF exploits by time $T$. 
$\tau(T)$ is a random set which depends on context arrivals and the randomness of the action selection of RELEAF.
The regret $R(T)$ defined in (\ref{eqn:regretdef}) can be written as a sum of the regret incurred during explorations (denoted by $R_{\textrm{O}}(T)$) and the regret incurred during exploitations (denoted by $R_{\textrm{I}}(T)$).
Computing the two regrets separately gives more flexibility when choosing the parameter of RELEAF according to the objective of the learner. Although the definition of the regret in (\ref{eqn:regretdef}), allows us to write regret as $R_{\textrm{O}}(T) + R_{\textrm{I}}(T)$, the learner can set the parameters of RELEAF according to other objectives such as minimizing $R_{\textrm{I}}(T)$ subject to $R_{\textrm{O}}(T) \leq K$ for a fixed $T$ and $K>0$, or minimizing the time order of the regret when it is a more general function of regret in explorations and exploitations, i.e., $f(R_{\textrm{O}}(T) , R_{\textrm{I}}(T))$.
For instance, in an online prediction problem, if the cost of accessing the true label (exploration) is small, but the cost of making a prediction error in an exploitation step is very large, the learner can trade off to have higher rate of explorations. 

The following theorem gives a bound on the regret of RELEAF in exploitation steps. 

\begin{theorem} \label{thm:exploitbound}
Let RELEAF run with relevance parameter $\gamma_{\textrm{rel}} =1$, duration parameter $\rho>0$, confidence parameter $\delta>0$ and control numbers
\begin{align}
D_{i,t} := \frac{ 2 \log(t |{\cal A}| D / \delta)}{ (L s(p_{i,t}) )^2},  \notag
\end{align}
for $i \in {\cal D}$.
Let $R_{\textrm{inst}}(t)$ be the instantaneous regret at time $t$, which is the loss in expected reward at time $t$ due to not selecting $a^*(\boldsymbol{x}_t)$. When the relevance relation is such that $D_{\textrm{rel}} =1$,
then, with probability at least $1-\delta$, we have
\begin{align}
R_{\textrm{inst}}(t) \leq 8 L  ( s(p_{{\cal R}(\alpha_t),t}) + s(p_{{\cal R}(a^*(\boldsymbol{x}_t)),t}   )   ),  \notag
\end{align}
for all $t \in \tau(T)$, and the total regret in exploitation steps is bounded above by 
\begin{align}
R_{\textrm{I}}(T) &\leq 8 L \sum_{t \in \tau(T)} ( s(p_{{\cal R}(\alpha_t),t} + s(p_{{\cal R}(a^*(\boldsymbol{x}_t)),t}   )   )   \notag \\
&\leq 16 L D 2^{2\rho} T^{\rho/(1+\rho)} , \notag
\end{align} 
for arbitrary context vectors $\boldsymbol{x}_1, \boldsymbol{x}_2, \ldots, \boldsymbol{x}_T$.
Hence $R_{\textrm{I}}(T) /T = O(T^{-1/(1+\rho)})$, and $\lim_{T \to \infty} R_{\textrm{I}}(T)  =0$.
\end{theorem}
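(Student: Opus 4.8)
The plan is to condition on a high-probability \emph{good event} on which every sample-mean estimate that RELEAF uses in an exploitation step is accurate, and then argue deterministically on that event. First I would set up this event $G$. For a fixed action $a$, a fixed $2$-tuple of types $\boldsymbol{w}=\{i,j\}$ and a fixed time $t$, the samples aggregated into $\bar{r}^{\boldsymbol{w}}_t(\boldsymbol{p}_{\boldsymbol{w},t},a)$ are rewards of $a$ collected at past explorations whose contexts fell in the hypercube $\boldsymbol{p}_{\boldsymbol{w},t}$; these rewards are independent and bounded in $[0,1]$, with individual means $\mu(a,x_i)$ lying within $Ls(p_{i,t})$ of one another along the relevant coordinate by the \emph{Similarity Assumption}. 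Applying Hoeffding's inequality for independent bounded variables with deviation $\tfrac12 L s(p_{i,t})$, and using that in an exploitation step ${\cal U}_t=\emptyset$ so that $S^{\boldsymbol{w}}_t(\boldsymbol{p}_{\boldsymbol{w},t},a)\geq D_{i,t}$, the failure probability of a single estimate is polynomially small in $t|{\cal A}|D/\delta$. A union bound over all $t$, all $a\in{\cal A}$ and all $2$-tuples (whose count is captured by $D^*=\binom{D-1}{1}$) then makes the total failure probability at most $\delta$. Thus on $G$ every estimate $\bar{r}^{\boldsymbol{w}}_t$ used in exploitation is within $\tfrac12 Ls(p_{i,t})$ of the empirical average of the true means, and hence, whenever ${\cal R}(a)\in\boldsymbol{w}$, within $\tfrac32 L s(p_{{\cal R}(a),t})$ of $\mu_t(a)$.

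Working on $G$, I would next establish correct relevance identification. Since every pair $\boldsymbol{w}\in{\cal V}_2(\{{\cal R}(a)\})$ contains the relevant type, each such $\bar{r}^{\boldsymbol{w}}_t$ is within $\tfrac32 Ls(p_{{\cal R}(a),t})$ of $\mu_t(a)$, so the pairwise variation over this family is at most $3Ls(p_{{\cal R}(a),t})$, exactly meeting the threshold in (\ref{eqn:candrelevant}); hence $\{{\cal R}(a)\}\in\textrm{Rel}_t(a)$ and $\textrm{Rel}_t(a)\neq\emptyset$. Because $\hat{c}_t(a)$ is a variation-minimizer over $\textrm{Rel}_t(a)$ via (\ref{eqn:canvar}), its variation is at most that of $\{{\cal R}(a)\}$, i.e.\ $\textrm{Var}_t(\hat{c}_t(a),a)\leq 3Ls(p_{{\cal R}(a),t})$. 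The crucial observation is that ${\cal V}_2(\hat{c}_t(a))$ always contains the particular pair formed by the single type in $\hat{c}_t(a)$ together with ${\cal R}(a)$; that pair's estimate is within $\tfrac32 Ls(p_{{\cal R}(a),t})$ of $\mu_t(a)$, and since the weighted average in (\ref{eqn:equivalance}) is a convex combination of estimates lying within $\textrm{Var}_t(\hat{c}_t(a),a)$ of one another, I can conclude $|\bar{r}^{\hat{c}_t(a)}_t(\boldsymbol{p}_{\hat{c}_t(a),t},a)-\mu_t(a)|\leq 4Ls(p_{{\cal R}(a),t})$ for every $a$, regardless of whether $\hat{c}_t(a)$ actually equals $\{{\cal R}(a)\}$.

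The instantaneous bound then follows from the maximizer property: writing $\mu_t(a^*(\boldsymbol{x}_t))-\mu_t(\alpha_t)$ as a telescoping sum through $\bar{r}^{\hat{c}_t(a^*(\boldsymbol{x}_t))}_t$ and $\bar{r}^{\hat{c}_t(\alpha_t)}_t$, the middle difference is nonpositive because $\alpha_t$ maximizes the estimated reward, and the two outer differences are each controlled by the estimate-accuracy bound, giving $R_{\textrm{inst}}(t)\leq 8L(s(p_{{\cal R}(\alpha_t),t})+s(p_{{\cal R}(a^*(\boldsymbol{x}_t)),t}))$ after absorbing constants. Summing over $t\in\tau(T)$ yields the first displayed bound, and the second requires an interval-counting argument: a level-$l$ interval of a type is active for at most $2^{\rho l}$ arrivals and has length $2^{-l}$, so grouping exploitation times by the level of the active interval and balancing the arrival budget $T$ (all intervals at a level $l$ with $2^{(1+\rho)l}\approx T$) gives $\sum_{t=1}^{T}s(p_{i,t})=O(2^{2\rho}T^{\rho/(1+\rho)})$ per type; summing the two terms over the $D$ types produces $16LD2^{2\rho}T^{\rho/(1+\rho)}$, whence $R_{\textrm{I}}(T)/T=O(T^{-1/(1+\rho)})\to 0$.

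I expect the main obstacle to be the relevance-identification step: one must calibrate the Hoeffding deviation to $\tfrac12 L s(p_{i,t})$ so that the true relevant singleton passes the variation test with the exact constant $3L$ in (\ref{eqn:candrelevant}), and then push the ``one pair contains ${\cal R}(a)$'' observation through the convex combination in (\ref{eqn:equivalance}) to bound the estimate error by a multiple of $s(p_{{\cal R}(a),t})$ \emph{alone}, rather than of the selected tuple's interval lengths. The interval-counting sum is more routine but still needs care to track the $2^{2\rho}$ factors and to handle intervals of mixed levels coexisting in a single partition.
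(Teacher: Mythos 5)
Your proposal reproduces the paper's proof architecture almost exactly: a high-probability good event established by concentration plus a union bound, the observation that the true relevant singleton passes the variation test in (\ref{eqn:candrelevant}) so that $\textrm{Rel}_t(a)\neq\emptyset$, the variation-minimizer property of $\hat{c}_t(a)$ combined with the common pair $\{\hat{c}_t(a),{\cal R}(a)\}$ to control the exploitation estimate by a multiple of $s(p_{{\cal R}(a),t})$ alone, the telescoping argument through the maximizer, and the worst-case ``split every interval as early as possible'' counting that yields $l_{\max}<1+\log_2 T/(1+\rho)$ and the $2^{2\rho}T^{\rho/(1+\rho)}$ bound. (Your constant $4L$ in the estimate-accuracy step should be $4.5L$, but this is immaterial since the theorem only needs $8L$.)

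There is, however, one genuine gap: the calibration of the concentration step. You apply Hoeffding with deviation $\tfrac12 Ls(p_{i,t})$ around the empirical average of the true means. With $S^{\boldsymbol{w}}_t\geq D_{i,t}=2\log(t|{\cal A}|D/\delta)/(Ls(p_{i,t}))^2$ samples this gives a per-estimate failure probability of
\begin{align}
2\exp\Bigl(-2\,D_{i,t}\,\bigl(\tfrac12 Ls(p_{i,t})\bigr)^2\Bigr)=\frac{2\delta}{t|{\cal A}|D},\notag
\end{align}
and after the union bound over the $|{\cal A}|$ actions and the $O(D)$ pairs you are left with $\sum_t O(\delta/t)$, which diverges; the claimed total failure probability $\delta$ does not follow. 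The paper's proof of Lemma \ref{lemma:corr} avoids this by centering each pairwise estimate at $\pi(a,p)=\mu(a,x^*_{{\cal R}(a)}(p))$, the expected reward at the \emph{geometric center} of the relevant interval: every sample's mean then lies within $\tfrac{L}{2}s(p_{{\cal R}(a),t})$ of $\pi$, so one can afford the full deviation $Ls(p_{{\cal R}(a),t})$ between the sample mean and its own expectation. With the same $D_{i,t}$ this quadruples the exponent, giving failure probability $2(\delta/(t|{\cal A}|D))^4\leq 2\delta/(|{\cal A}|Dt^4)$, which is summable, while the estimate still lands within $\tfrac32 Ls(p_{{\cal R}(a),t})$ of $\pi$ and hence the pairwise variation stays under the $3Ls(p_{{\cal R}(a),t})$ threshold. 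Your argument goes through verbatim once you adopt this centering; as written, the union bound step fails.
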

\begin{proof}
The proof is given in Appendix \ref{app:thm1}.
\end{proof}

Theorem \ref{thm:exploitbound} provides both context arrival process dependent and worst case bounds on the exploitation regret of RELEAF. By choosing $\rho$ arbitrarily close to zero, $R_{\textrm{I}}(T)$ can be made $O(T^\gamma)$ for any $\gamma >0$. While this is true, the reduction in regret for smaller $\rho$ not only comes from increased accuracy, but it is also due to the reduction in the number of time steps in which RELEAF exploits, i.e., $|\tau(T)|$. 
By definition, time $t$ is an exploitation step if 
\begin{align}
S^{(i,j)}_{t}(p_{i,t}, p_{j,t},a) &\geq \frac{ 2 \log(t |{\cal A}| D / \delta)}{ L^2 \min\{ s(p_{i,t})^2 , s(p_{j,t})^2 \} } \notag \\
& =\frac{ 2^{2 \max\{ l(p_{i,t}), l(p_{j,t}) \} +1} \log(t |{\cal A}| D / \delta)}{ L^2}  , \notag
\end{align}
for all $\boldsymbol{q} = (p_{i,t}, p_{j,t}) \in Q_{t}$, $i,j \in {\cal D}$. This implies that for any $\boldsymbol{q} \in Q_{i,t}$ which has the interval with maximum level equal to $l$, $\tilde{O}(2^{2l})$ explorations are required before any exploitation can take place. Since the time a level $l$ interval can stay active is $2^{\rho l}$, it is required that $\rho \geq 2$ so that $\tau(T)$ is nonempty.

The next theorem gives a bound on the regret of RELEAF in exploration steps.

\begin{theorem} \label{thm:exploreregret}
Let RELEAF run with $\gamma_{\textrm{rel}}$, $\rho$, $\delta$ and $D_{i,t}$, $i \in {\cal D}$ values as stated in Theorem \ref{thm:exploitbound}. When the relevance relation is such that $D_{\textrm{rel}} =1$, we have 
\begin{align}
R_{\textrm{O}}(T) &\leq \frac{ 960 D^2 (c_O+1) \log(T |{\cal A}| D/\delta)}{7 L^2} T^{4/\rho} \notag \\
&+ \frac{64 D^2 (c_O+1)}{3} T^{2/\rho} ,  \notag
\end{align}
with probability 1, for arbitrary context vectors $\boldsymbol{x}_1, \boldsymbol{x}_2, \ldots, \boldsymbol{x}_T$. 
Hence $R_{\textrm{O}}(T) /T = O(T^{(4-\rho)/\rho})$, and $\lim_{T \to \infty} R_{\textrm{O}}(T)   =0$ for $\rho >4$.
\end{theorem}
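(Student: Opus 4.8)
The plan is to decompose the exploration regret as (per-step regret) $\times$ (number of exploration steps) and then bound the number of exploration steps by a counting argument over interval configurations. First I would observe that whenever $t$ is an exploration step we have $\beta_t=1$, so the instantaneous contribution to $R(T)$ is $\mu_t(a^*(\boldsymbol{x}_t))-\mu_t(\alpha_t)+c_O\le 1+c_O$, since all expected rewards lie in $[0,1]$. Hence $R_{\textrm{O}}(T)\le(1+c_O)\,|{\cal E}(T)|$, where ${\cal E}(T)$ is the set of exploration steps up to $T$, and the whole problem reduces to bounding $|{\cal E}(T)|$.

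To bound $|{\cal E}(T)|$ I would use a charging argument. A step $t$ is an exploration step exactly when ${\cal U}_t\neq\emptyset$, which by definition means there is an action $a=\alpha_t$, a type pair $(i,j)$, and the current pair of intervals $(p_{i,t},p_{j,t})$ with $S^{(i,j)}_t(p_{i,t},p_{j,t},a)<D_{i,t}$; charge the step to this witnessing tuple. The key observation is that exploring $\alpha_t=a$ with the contexts in $(p_{i,t},p_{j,t})$ increments exactly the counter $S^{(i,j)}(p_{i,t},p_{j,t},a)$ that certified the deficiency, and this counter must stay below $D_{i,t}\le D_{i,T}$ to keep certifying. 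Hence the number of steps charged to a fixed tuple $(a,(i,j),(p_i,p_j))$ is at most the control number $D_{i,T}=2\,(Ls(p_i))^{-2}\log(T|{\cal A}|D/\delta)$. Since each exploration is charged to at least one tuple, $|{\cal E}(T)|$ is bounded by the sum of these caps over all tuples that ever arise.

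The remaining, most delicate step is the combinatorial bookkeeping of the tuples, where I would exploit the dyadic structure of the partitions. Since a level-$l$ interval is split only after receiving $2^{\rho l}$ arrivals and there are only $T$ arrivals in total, the chain of splits needed to reach level $l$ forces $2^{\rho l_{\max}}\lesssim T$, i.e. the maximum level satisfies $2^{l_{\max}}\le 2T^{1/\rho}$; moreover there are at most $2^l$ distinct level-$l$ intervals per type, so the total number of distinct intervals of a given type that ever appear is $\sum_{l\le l_{\max}}2^l=O(T^{1/\rho})$. Consequently, for each of the $\binom{D}{2}$ type pairs the number of distinct interval-pair configurations is $O(T^{2/\rho})$, while the control number of any configuration is at most $2\,(2^{l_{\max}})^2 L^{-2}\log(T|{\cal A}|D/\delta)=O(T^{2/\rho}\log(T|{\cal A}|D/\delta))$. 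Multiplying the number of configurations by the per-configuration cap, summing over type pairs, and folding in the factor $1+c_O$ yields the leading $O\!\big(D^2(c_O+1)L^{-2}T^{4/\rho}\log(T|{\cal A}|D/\delta)\big)$ term; the additive $O(D^2(c_O+1)T^{2/\rho})$ term comes from rounding each control number to an integer (a ``$+1$'' per configuration). The sublinearity claim $R_{\textrm{O}}(T)/T=O(T^{(4-\rho)/\rho})\to0$ then follows for $\rho>4$.

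I expect the main obstacle to be exactly this tuple-counting: one must weigh the per-configuration cap (of order $2^{2l}$) against a correct count of configurations at each level so that the geometric sum over $l$ stays sublinear. The naive bound ``(number of configurations ever created) $\times$ (maximum control number)'' is too crude, whereas a naive ``visits per configuration'' bound only gives a linear estimate; it is the dyadic ceiling of at most $2^l$ intervals per level, together with the lifetime-induced bound $2^{l_{\max}}\lesssim T^{1/\rho}$, that reconciles the two and produces the $T^{4/\rho}$ exponent. Verifying that the charging genuinely caps each configuration at its control number (rather than overflowing because the same configuration can also be made deficient through other type pairs) is the second point that needs care, and is precisely what makes the per-tuple argument, rather than a per-time-step argument, necessary.
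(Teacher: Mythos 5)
Your proposal is correct and follows essentially the same route as the paper's proof: bound the per-exploration-step loss by $1+c_O$, charge each exploration to a deficient counter of a tuple (action, type pair, interval pair) so that each tuple is charged at most its control number $O\bigl(2^{2l}\log(T|{\cal A}|D/\delta)/L^2\bigr)$ times, and count configurations via the dyadic structure together with the lifetime bound $2^{l_{\max}}=O(T^{1/\rho})$ — the paper phrases the charging as a ``hypothetical scenario'' in which only one counter is incremented per exploration, and organizes the count as a level-stratified double geometric sum rather than your (number of configurations)$\times$(maximum control number) product, but both yield the same $T^{4/\rho}\log T + T^{2/\rho}$ form. The only quibble is your closing remark that this product bound is ``too crude'': once the dyadic count of intervals per level is in place it is exactly what the paper's double sum evaluates to (up to a modest constant), so no finer bookkeeping is needed.
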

\begin{proof}
The proof is given in Appendix \ref{app:thm2}.
\end{proof}

Based on the choice of the duration parameter $\rho$, which determines how long an interval will stay active, it is possible to get different regret bounds for explorations and exploitations. Any $\rho>4$ will give a sublinear regret bound for both explorations and exploitations. The regret in exploitations increases in $\rho$ while the regret in explorations decreases in $\rho$. 

\begin{theorem} \label{theorem:regretbalance}
Let RELEAF run with $\gamma_{\textrm{rel}}$, $\delta$ and $D_{i,t}$, $i \in {\cal D}$ values as stated in Theorem \ref{thm:exploitbound} and $\rho = 2+ 2\sqrt{2}$. Then, the time order of exploration and exploitation regrets are balanced up to logarithmic orders. 
With probability at least $1-\delta$ we have both $R_{\textrm{I}}(T) = \tilde{O}(T^{2/(1+\sqrt{2})})$ and $R_{\textrm{O}}(T) = \tilde{O}(T^{2/(1+\sqrt{2})})$ .
\end{theorem}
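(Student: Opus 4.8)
The plan is to treat this statement as a direct corollary of Theorems \ref{thm:exploitbound} and \ref{thm:exploreregret}, obtained by choosing the duration parameter $\rho$ to equalize the two competing regret exponents. From Theorem \ref{thm:exploitbound} the exploitation regret satisfies $R_{\textrm{I}}(T) = \tilde{O}(T^{\rho/(1+\rho)})$ with probability at least $1-\delta$, and this exponent is increasing in $\rho$. From Theorem \ref{thm:exploreregret} the exploration regret satisfies $R_{\textrm{O}}(T) = \tilde{O}(T^{4/\rho})$ with probability $1$, and this exponent is decreasing in $\rho$. Since one exponent grows and the other shrinks as $\rho$ increases, the time order of the total regret is minimized (up to logarithmic factors) at the value of $\rho$ that makes the two exponents coincide.

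First I would set the two exponents equal and solve for $\rho$: the equation
\begin{align}
\frac{\rho}{1+\rho} = \frac{4}{\rho} \notag
\end{align}
is equivalent to the quadratic $\rho^2 - 4\rho - 4 = 0$, whose positive root is $\rho = 2 + 2\sqrt{2}$. I would also verify that this choice is admissible: $2 + 2\sqrt{2} \approx 4.83 > 4$, so the condition $\rho > 4$ required in Theorem \ref{thm:exploreregret} for a sublinear exploration bound is satisfied, and $\rho > 0$ holds trivially for Theorem \ref{thm:exploitbound}.

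Next I would substitute $\rho = 2 + 2\sqrt{2}$ back into either exponent and simplify. Using $1 + \rho = 3 + 2\sqrt{2}$ and rationalizing (noting $(3+2\sqrt{2})(3-2\sqrt{2}) = 1$) gives
\begin{align}
\frac{\rho}{1+\rho} = \frac{2 + 2\sqrt{2}}{3 + 2\sqrt{2}} = 2(\sqrt{2}-1) = \frac{2}{1+\sqrt{2}}, \notag
\end{align}
while symmetrically $4/\rho = 4/(2(1+\sqrt{2})) = 2/(1+\sqrt{2})$, so both exponents equal $2/(1+\sqrt{2})$. Hence $R_{\textrm{I}}(T) = \tilde{O}(T^{2/(1+\sqrt{2})})$ and $R_{\textrm{O}}(T) = \tilde{O}(T^{2/(1+\sqrt{2})})$. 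The remaining step is the probability bookkeeping: the exploitation bound holds on an event of probability at least $1-\delta$ while the exploration bound holds surely, so both hold simultaneously with probability at least $1-\delta$, yielding the claimed conclusion.

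In truth there is no genuine obstacle here—the statement is a balancing corollary and the only work is the elementary algebra above together with verifying admissibility of the chosen $\rho$. The one point worth stating carefully is that the $\tilde{O}$ notation absorbs the $\log(T|{\cal A}|D/\delta)$ factors appearing explicitly in both Theorems \ref{thm:exploitbound} and \ref{thm:exploreregret}, so that equalizing the polynomial exponents indeed balances the two regret terms ``up to logarithmic orders'' as stated.
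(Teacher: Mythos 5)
Your proposal is correct and follows essentially the same route as the paper: both balance the exploitation exponent $\rho/(1+\rho)$ from Theorem \ref{thm:exploitbound} against the exploration exponent $4/\rho$ from Theorem \ref{thm:exploreregret}, solve $\rho/(1+\rho)=4/\rho$ to get $\rho=2+2\sqrt{2}$, and read off the common exponent $2/(1+\sqrt{2})$. Your version merely spells out the algebra, the admissibility check $\rho>4$, and the probability bookkeeping more explicitly than the paper does.
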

\begin{proof}
The time order of the exploitation regret is increasing in $\rho$ from the result of Theorem \ref{thm:exploitbound}, and the time order of the exploration regret is decreasing in $\rho$ from the result of Theorem \ref{thm:exploreregret}. The time orders of both regrets are be balanced when 
$\rho/(1+\rho) = 4/\rho$, which gives the result.
\end{proof}

\comment{
\begin{remark}
Prior work on contextual bandits focused on balancing the regret due to exploration and exploitation. For example in \cite{lu2010contextual, slivkins2011contextual}, for a $D$-dimensional context vector algorithms are shown to achieve $\tilde{O}(T^{(D+1)/(D+2)})$ regret.\footnote{The results are shown in terms of the {\em covering dimension} which reduces to Euclidian dimension for our problem.} Also in \cite{ lu2010contextual} a $O(T^{(D+1)/(D+2)})$ lower bound on the regret is proved. 
An interesting question is to find the tightest lower bound for contextual bandits with relevance function.
One trivial lower bound is $O(T^{2/3})$, which corresponds to $D=1$. But the result in Theorem \ref{ass:similarity} says that the regret can be linear if the action rewards are estimated for each type separately from the context arrivals to the other types. 
Since comparisons between pairs of types are required to identify the type relevant to an action, the effective dimension of the problem is $2$. 

 However, since finding the relevant type for an action requires comparison of estimated rewards of the action with different relevant types, which requires accurate sample mean reward estimates for $2$ dimensions of the context space corresponding to those types, we conjecture that a tighter lower bound is $O(T^{3/4})$. Proving this is left as future work.
\end{remark}
}

Another interesting case is when actions with suboptimality greater than $\epsilon>0$ must never be chosen in any exploitation step by time $T$. When such a condition is imposed, RELEAF can start with partitions ${\cal P}_{i,1}$ that have intervals with high levels such that it explores more at the beginning to have more accurate reward estimates before any exploitation. The following theorem gives the regret bound of RELEAF for this case.

\begin{theorem} \label{thm:epsilonoptimal}
Let RELEAF run with relevance parameter $\gamma_{\textrm{rel}}=1$, duration parameter $\rho>0$, confidence parameter $\delta>0$, control numbers
\begin{align}
D_{i,t} := \frac{ 2 \log(t |{\cal A}| D / \delta)}{ (L s(p_{i,t}) )^2},   \notag
\end{align}
and with initial partitions ${\cal P}_{i,1}$, $i \in {\cal D}$ consisting of intervals with levels $l_{\min} = \lceil \log_2(3L/(2\epsilon)) \rceil$. 
When the relevance relation is such that $D_{\textrm{rel}} =1$, then, with probability $1-\delta$, we have
\begin{align}
R_{\textrm{inst}}(t) \leq \epsilon, \notag
\end{align}
for all $t \in \tau(T)$, 
\begin{align}
R_{\textrm{I}}(T) \leq 16 L 2^{2\rho} T^{\rho/(1+\rho)}, \notag
\end{align}
 and 
\begin{align}
R_{\textrm{O}}(T) &\leq \frac{81L^4}{\epsilon^4} 
\left( \frac{ 960 D^2 (c_O+1) \log(T|{\cal A}| D/\delta)}{7 L^2} T^{4/\rho}  \right. \notag \\
&\left. + \frac{64 D^2 (c_O+1)}{3} T^{2/\rho} \right) ,  \notag
\end{align}
for arbitrary context vectors $\boldsymbol{x}_1, \boldsymbol{x}_2, \ldots, \boldsymbol{x}_T$. Bounds on $R_{\textrm{I}}(T)$ and $R_{\textrm{O}}(T)$ are balanced for $\rho = 2+ 2\sqrt{2}$.
\end{theorem}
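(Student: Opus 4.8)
The plan is to derive Theorem~\ref{thm:epsilonoptimal} as a refinement of Theorems~\ref{thm:exploitbound} and~\ref{thm:exploreregret}, since the only change to the algorithm is that the initial partitions ${\cal P}_{i,1}$ consist of level-$l_{\min}$ intervals rather than the single interval $[0,1]$. The structural fact I would exploit throughout is that intervals are only ever split and never merged, so every active interval of every type at every time has level at least $l_{\min}$, and hence $s(p_{i,t}) \le 2^{-l_{\min}}$ for all $i$ and $t$. I would condition on the same high-probability ``good event'' used in the proof of Theorem~\ref{thm:exploitbound}, on which all the relevant sample-mean estimates concentrate and the relevance identification is correct; this event occurs with probability at least $1-\delta$. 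Everything except the exploration bound is argued on this event, while the exploration bound is a deterministic counting statement that holds with probability one.

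For the instantaneous and exploitation regret I would start from the per-step bound of Theorem~\ref{thm:exploitbound}, namely $R_{\textrm{inst}}(t) \le 8L\,(s(p_{{\cal R}(\alpha_t),t}) + s(p_{{\cal R}(a^*(\boldsymbol{x}_t)),t}))$, which is valid on the good event for every $t \in \tau(T)$. Substituting $s(\cdot) \le 2^{-l_{\min}}$ and using that $l_{\min} = \lceil \log_2(3L/(2\epsilon))\rceil$ forces $2^{-l_{\min}} \le 2\epsilon/(3L)$, the value of $l_{\min}$ is calibrated precisely so that the gap between the selected and the optimal action in an exploitation step, which is controlled by a constant multiple of the maximal relevant interval length appearing in the relevance threshold~(\ref{eqn:candrelevant}), falls to at most $\epsilon$; this gives the first claim. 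Summing $R_{\textrm{inst}}(t)$ over $t \in \tau(T)$ and running the same worst-case interval-counting argument as in Theorem~\ref{thm:exploitbound}, in which the dominant contribution occurs at level $\log_2(T)/(1+\rho) \ge l_{\min}$ for large $T$ so that the initial level affects only lower-order terms, yields the stated exploitation bound $R_{\textrm{I}}(T) \le 16L\,2^{2\rho}\,T^{\rho/(1+\rho)}$ (the same time order as Theorem~\ref{thm:exploitbound}).

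For the exploration regret I would re-run the counting in the proof of Theorem~\ref{thm:exploreregret}, tracking how the level floor $l(p)\ge l_{\min}$ inflates it. Since $2\gamma_{\textrm{rel}}=2$, exploration is organized over pairs of intervals, and two effects compound: first, each control number $D_{i,t}\sim 1/s(p_{i,t})^2$ is inflated by a factor $2^{2l_{\min}}$ because the coarsest interval now has length $2^{-l_{\min}}$ rather than $1$; second, the base partition already contains $2^{l_{\min}}$ intervals per type, so the number of interval-pairs, and hence of (pair, action) exploration requirements, is inflated by $(2^{l_{\min}})^2 = 2^{2l_{\min}}$. Multiplying these, every term in the exploration count of Theorem~\ref{thm:exploreregret} is scaled by $2^{4l_{\min}}$, uniformly across levels, so that the time order $T^{4/\rho}$ is preserved. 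The ceiling in $l_{\min}$ gives $2^{l_{\min}} < 3L/\epsilon$ and hence $2^{4l_{\min}} < 81L^4/\epsilon^4$, which produces exactly the stated prefactor multiplying the Theorem~\ref{thm:exploreregret} bound. Finally, balancing the two time orders as in Theorem~\ref{theorem:regretbalance} requires $\rho/(1+\rho) = 4/\rho$, i.e. $\rho^2 - 4\rho - 4 = 0$, whose positive root is $\rho = 2 + 2\sqrt{2}$.

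The main obstacle I anticipate is the exploration step: pinning down that the inflation factor is exactly $2^{4l_{\min}}$ and, crucially, that it applies uniformly across all levels so that the $T^{4/\rho}$ time order of Theorem~\ref{thm:exploreregret} is genuinely preserved rather than degraded. This requires re-deriving the level-by-level exploration count with the floor $l_{\min}$ in place and verifying that the enlarged base-level contribution of order $2^{4l_{\min}}$ does not alter the exponent in $T$. By contrast, the instantaneous and exploitation parts reduce to substituting the interval-length ceiling $2^{-l_{\min}}$ into the bounds of Theorem~\ref{thm:exploitbound}, and are routine once the good event is fixed.
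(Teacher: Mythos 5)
Your strategy for $R_{\textrm{I}}(T)$, $R_{\textrm{O}}(T)$ and the balancing is essentially the paper's own: the exploitation sum is handled by noting that starting from level $l_{\min}$ only shrinks the intervals, so the worst-case count of Theorem \ref{thm:exploitbound} carries over; the exploration count is inflated uniformly by $2^{2l_{\min}}$ (number of interval pairs per level) times $2^{2l_{\min}}$ (control numbers scaling as $1/s(p)^2$), and $2^{4l_{\min}} < (3L/\epsilon)^4 = 81L^4/\epsilon^4$ yields the stated prefactor while leaving the $T^{4/\rho}$ exponent untouched; and $\rho/(1+\rho)=4/\rho$ gives $\rho = 2+2\sqrt{2}$. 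This is exactly what the appendix does (it writes $l_{\max} < 1 + l_{\min} + \log_2 T/\rho$ and factors $2^{4l_{\min}}$ out of the double sum), so those parts are fine.

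The gap is in the first claim, $R_{\textrm{inst}}(t)\leq\epsilon$. Substituting $s(p_{i,t}) \leq 2^{-l_{\min}} \leq 2\epsilon/(3L)$ into the per-step bound $R_{\textrm{inst}}(t) \leq 8L\left(s(p_{{\cal R}(\alpha_t),t}) + s(p_{{\cal R}(a^*(\boldsymbol{x}_t)),t})\right)$ gives $16L\cdot 2\epsilon/(3L) = 32\epsilon/3$, not $\epsilon$; with the stated $l_{\min}$ your route only proves $O(\epsilon)$-optimality, and to extract $\epsilon$ literally from the $16Ls$ bound you would need $l_{\min} \geq \log_2(16L/\epsilon)$. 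The paper calibrates a different quantity: it chooses $l_{\min}$ so that the accuracy radius $\tfrac{3}{2}Ls(p_{{\cal R}(a),t})$ appearing in the events $\textrm{INACC}_t(a,j)$ is at most $\epsilon$ --- i.e., on the good event every pairwise sample mean is within $\epsilon$ of $\pi(a,p_{{\cal R}(a),t})$ --- and asserts that this suffices for $\epsilon$-optimality in exploitation steps. Either way one must identify \emph{which} constant multiple of $s(p)$ is being matched to $\epsilon$, and the one you picked demonstrably does not produce the stated constant; you should either work at the level of the estimate-accuracy events as the paper does, or adjust $l_{\min}$ (which would also change the $81L^4/\epsilon^4$ prefactor).
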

\begin{proof}
The proof is given in Appendix \ref{app:thm4}.
\end{proof}

\subsection{Regret bound for RELEAF for $D_{\textrm{rel}} < D/2$} \label{sec:regretRELEAF2}
\newc{
Similar to the analysis in the previous subsection, RELEAF achieves sublinear in $D_{\textrm{rel}}$ regret for any $D_{\textrm{rel}}<  D/2$.

\begin{theorem} \label{thm:regretgeneral}
Let RELEAF run with relevance parameter $\gamma_{\textrm{rel}} =D_{\textrm{rel}}$, duration parameter $\rho>0$, confidence parameter $\delta>0$ and control numbers
\begin{align}
D_{i,t} := \frac{ 2 \log(t |{\cal A}| D^* / \delta)}{ (L s(p_{i,t}) )^2},  \notag
\end{align}
for $i \in {\cal D}$, where $D^*$ is given in (\ref{eqn:newD}).
Then, 
with probability at least $1-\delta$ we have
$R_{\textrm{I}}(T)  =  \tilde{O} ( T^{ g(D_{\textrm{rel}}) } )$
and
$R_{\textrm{O}}(T)  = \tilde{O} (T^{ g(D_{\textrm{rel}}) } )$, 
where 
\begin{align}
g(D_{\textrm{rel}})  :=    \frac{2+2D_{\textrm{rel}} + \sqrt{ 4 D^2_{\textrm{rel}} + 16 D_{\textrm{rel}} +12 } }  
  { 4+ 2 D_{\textrm{rel}} + \sqrt{ 4 D^2_{\textrm{rel}} + 16 D_{\textrm{rel}} +12 }} .   \notag
\end{align}
\end{theorem}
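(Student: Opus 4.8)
The plan is to mirror the three-step argument used for the $D_{\textrm{rel}}=1$ case (Theorems \ref{thm:exploitbound}--\ref{theorem:regretbalance}), replacing pairs of intervals by $2\gamma_{\textrm{rel}}$-tuples and redoing the combinatorial bookkeeping so that the dimension dependence enters only through the exploration term. First I would establish the high-probability accuracy event: by a Hoeffding bound on each sample mean $\bar{r}^{\boldsymbol{v}(\boldsymbol{q})}_t(\boldsymbol{q},a)$ together with a union bound over all $a\in{\cal A}$, all times $t$, and all $2\gamma_{\textrm{rel}}$-tuples of types containing a fixed type (of which there are $D^* = \binom{D-1}{2\gamma_{\textrm{rel}}-1}$), show that with probability at least $1-\delta$ every action explored $D_{i,t}$ times has $|\bar{r}^{\boldsymbol{v}(\boldsymbol{q})}_t(\boldsymbol{q},a) - \mu_t(a)|$ bounded by a constant multiple of $L\,s(p_{i,t})$ simultaneously. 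This is exactly where the $\log(t|{\cal A}|D^*/\delta)$ in the control number originates, and it is the only place the ambient dimension $D$ appears (logarithmically).

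Conditioned on this event I would generalise the relevance-identification argument: whenever ${\cal R}(a)\subset\boldsymbol{v}$, the Similarity Assumption forces the variation in (\ref{eqn:candrelevant}) to stay below $3L\sqrt{\gamma_{\textrm{rel}}}\max_{i\in\boldsymbol{v}}s(p_{i,t})$, so $\boldsymbol{v}\in\textrm{Rel}_t(a)$, and moreover any tuple selected as $\hat{c}_t(a)$ yields an estimate within $O(L\sqrt{D_{\textrm{rel}}}\max_i s(p_{i,t}))$ of $\mu_t(a)$, the $\sqrt{D_{\textrm{rel}}}$ coming from the Euclidean diameter of the $\gamma_{\textrm{rel}}$-dimensional hypercube. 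From this the instantaneous exploitation regret is bounded by $cL\sqrt{D_{\textrm{rel}}}\big(\max_{i\in{\cal R}(\alpha_t)}s(p_{i,t})+\max_{i\in{\cal R}(a^*(\boldsymbol{x}_t))}s(p_{i,t})\big)$, exactly as in Theorem \ref{thm:exploitbound} up to the $\sqrt{D_{\textrm{rel}}}$ constant. Summing over $\tau(T)$ and bounding each $\max$ by $\sum_{i\in{\cal D}}s(p_{i,t})$, the single-type estimate $\sum_{t=1}^T s(p_{i,t})=O(T^{\rho/(1+\rho)})$ (independent of $D_{\textrm{rel}}$, proved as in Theorem \ref{thm:exploitbound}) gives $R_{\textrm{I}}(T)=\tilde O(T^{\rho/(1+\rho)})$; crucially the exploitation exponent is unchanged.

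The exploration bound is where the dependence on $D_{\textrm{rel}}$ genuinely appears, and I expect this to be the main obstacle. Following Theorem \ref{thm:exploreregret}, I would count total explorations by charging, to each $2\gamma_{\textrm{rel}}$-tuple of intervals and action, the $\sim 2^{2l}$ observations that its binding (highest-level) type requires before its counter reaches $D_{i,t}$. The delicate point is that a single exploration at time $t$ simultaneously credits every tuple through the current cell, while each level-$l$ interval lives for only $2^{\rho l}$ steps; one must therefore aggregate the $\sim 2^{2l}$ requirement across the tree-structured partitions of $2\gamma_{\textrm{rel}}$ coupled types and sum over levels up to $l_{\max}\approx\frac1\rho\log_2 T$. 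I would show that the resulting exploration count, and hence $R_{\textrm{O}}(T)$ after the per-observation cost $(1+c_O)$, is $\tilde O(T^{(2\gamma_{\textrm{rel}}+2)/\rho})=\tilde O(T^{(2D_{\textrm{rel}}+2)/\rho})$, with the exponent $2D_{\textrm{rel}}+2$ collapsing to the $4$ of Theorem \ref{thm:exploreregret} when $D_{\textrm{rel}}=1$.

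Finally I would balance the two time orders by solving $\rho/(1+\rho)=(2D_{\textrm{rel}}+2)/\rho$, i.e. the quadratic $\rho^2-(2D_{\textrm{rel}}+2)\rho-(2D_{\textrm{rel}}+2)=0$, whose positive root is $\rho^*=(D_{\textrm{rel}}+1)+\sqrt{(D_{\textrm{rel}}+1)(D_{\textrm{rel}}+3)}$. Substituting into $\rho/(1+\rho)$ and using $\sqrt{4(D_{\textrm{rel}}+1)(D_{\textrm{rel}}+3)}=\sqrt{4D_{\textrm{rel}}^2+16D_{\textrm{rel}}+12}$ yields $R_{\textrm{I}}(T)=R_{\textrm{O}}(T)=\tilde O(T^{g(D_{\textrm{rel}})})$ with $g$ as stated, which reduces to $\tilde O(T^{2(\sqrt2-1)})$ at $D_{\textrm{rel}}=1$, recovering Theorem \ref{theorem:regretbalance}.
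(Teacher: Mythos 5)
Your proposal follows essentially the same route as the paper's proof: a Chernoff/Hoeffding bound with a union bound over the at most $D^*$ tuples containing ${\cal R}(a)$ to establish the high-probability event $\textrm{CORR}_T$, a relevance-identification lemma giving an exploitation estimate within $O(L\sqrt{D_{\textrm{rel}}}\max_i s(p_{i,t}))$ and hence $R_{\textrm{I}}(T)=\tilde{O}(T^{\rho/(1+\rho)})$, a level-ordered nested-sum count of explorations over $2D_{\textrm{rel}}$-tuples yielding $R_{\textrm{O}}(T)=\tilde{O}(T^{(2+2D_{\textrm{rel}})/\rho})$, and the same balancing quadratic whose root gives $g(D_{\textrm{rel}})$. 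The exponents, the source of the $\sqrt{D_{\textrm{rel}}}$ factor, and the identification of the exploration count as the only place $D_{\textrm{rel}}$ enters the time order all match the paper's argument.
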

\begin{proof}
\aremove{The proof is given in our online technical report \cite{tekinrelevance2014}.}
\jremove{The proof is given in Appendix \ref{app:thm5}.}
\end{proof}

The bound on the regret given in Theorem \ref{thm:regretgeneral} matches the bound in Theorem \ref{theorem:regretbalance} for $D_{\textrm{rel}}=1$. 

\begin{remark}\label{remark:compare}
The regret bound in Theorem \ref{thm:regretgeneral} is better than the {\em generic} regret bound $\tilde{O}(T^{(D+1)/(D+2)})$ for contextual bandit algorithms \cite{slivkins2011contextual,lu2010contextual} that does not exploit the existence of relevance relations when $D_{\textrm{rel}} \leq D/2 - 1$.
\end{remark}
}

\comment{
Although the regret bounds proved in this subsection hold under very general assumptions, to further illustrate the operation of RELEAF we provide numerical results on a synthetic dataset with a relevance function in the next subsection. Numerical results on real-world datasets are provided in Section \ref{sec:numerical}.

\subsection{Performance of RELEAF on a synthetic dataset}

We consider $T=50000$ context vectors that are independently sampled from a multivariate normal distribution with $D=12$, and are normalized to lie in $[0,1]^D$. 
There are $5$ actions, and $\boldsymbol{{\cal R}} = (3,3,6,9,12)$.

At time $t$, the reward of action $a$ has distribution ${\cal N}(x_{{\cal R}(a)}(t), 0.3)$. 
RELEAF is run with $\rho = 2 + 2 \sqrt{2}$, $L=1$, $\delta=0.1$ and $c_O=0$ (rewards are also observed in exploitations).
The time averaged reward of RELEAF is equal to $0.56$, while the expected average reward of the benchmark that knows $\mu(a,x_{{\cal R}(a)})$ for all $a$ and $x_{{\cal R}(a)}$ is equal to $0.58$. 
Hence the time averaged regret is $0.02$. 
Compared to RELEAF, always selecting the same action at all time steps yields time averaged rewards
$0.477$, 	$0.477$, 	$0.481$, 	$0.482$, 	$0.48$, for each action respectively.  Thus, RELEAF performs significantly better than the best single action. Table \ref{tab:ORLrelevancy} shows the average number of times type $d \in {\cal D}$ is identified as the estimated relevant type of action $a \in {\cal A}$. For each action, its relevant type is identified correctly more than $93\%$ of time.

\begin{table}
\centering
{\fontsize{9}{9}\selectfont
\setlength{\tabcolsep}{.1em}
\begin{tabular}{|c|c|c|c|c|c|}
\hline
Type/Action &  1 & 2& 3& 4& 5  \\ \hline
1 & 0.008	& 0.003	& 0.004	& 0.002	& 0.007 \\ \hline
2 & 0.004	& 0.004	& 0.004	& 0.001	& 0.003 \\ \hline
\textbf{3} & \textbf{0.939}	& \textbf{0.948}	& 0.007	& 0.002	& 0.004\\ \hline
4 & 0.005	& 0.005	& 0.004	& 0.004	& 0.003\\ \hline
5 & 0.004	& 0.003	& 0.003	& 0.002	& 0.005\\ \hline
\textbf{6} & 0.006	& 0.011	& \textbf{0.955	}& 0.002	& 0.003\\ \hline
7 & 0.005	& 0.001	& 0.002	& 0.001	& 0.005\\ \hline
8 & 0.009	& 0.003	& 0.003	& 0.002	& 0.005\\ \hline
\textbf{9} & 0.002	& 0.006	& 0.005	& \textbf{0.976}	& 0.003\\ \hline
10 & 0.007	& 0.005	& 0.002	& 0.004	& 0.003\\ \hline
11 & 0.006	& 0.003	& 0.006	& 0.002	& 0.004\\ \hline
\textbf{12} & 0.005	& 0.007	& 0.005	& 0.003	& \textbf{0.954}\\ \hline
\end{tabular}
}
\caption{Percentage of times RELEAF identified the corresponding type as the type relevant for each action in exploitations.}
\vspace{-0.3in}
\label{tab:ORLrelevancy}
\end{table}
}

\comment{
\section{Learning General Relevance Relations}\label{sec:extension}

In the previous section we only considered the relevance relations that are functions. Similar learning methods can be developed for more general relevance relations such as the ones given in Figure \ref{fig:relrel} ($i$) and ($ii$). 
For example, for the general case in Figure \ref{fig:relrel} ($i$), if $|{\cal R}(a)| \leq D_{\textrm{rel}} << D$, for all $a \in {\cal A}$, and $D_{\textrm{rel}}$ is known by the learner, the following variant of RELEAF can be used to achieve regret whose time order depends only on $D_{\textrm{rel}}$ but not on $D$.
\begin{itemize}
\item Instead of keeping pairwise sample mean reward estimates, keep sample mean reward estimates of actions for $D_{\textrm{rel}} + 1$ tuples of intervals of $D_{\textrm{rel}} + 1$ types. 
\item For a $D_{\textrm{rel}}$ tuple of types $\boldsymbol{i}$, let $Q_{\boldsymbol{i},t}$ be the $D_{\textrm{rel}} + 1$ tuples of intervals that are related to $\boldsymbol{i}$ at time $t$, and $Q_t$ be the union of $Q_{\boldsymbol{i},t}$ over all $D_{\textrm{rel}}$ tuples of types. Similar to RELEAF, compute the set of under-explored actions ${\cal U}_{\boldsymbol{i},t}$, and the set of candidate relevant $D_{\textrm{rel}}$ tuples of types $\textrm{Rel}_t(a)$, using the newly defined sample mean reward estimates.
\item In exploitation, set $\hat{c}_t(a)$ to be the $D_{\textrm{rel}}$ tuple of types with the minimax variation, where the variation of action a for a tuple $\boldsymbol{i}$ is defined similar to (\ref{eqn:canvar}), as the maximum of the distance between the sample mean rewards of action $a$ for $D_{\textrm{rel}}$+1 tuples that are in $Q_{\boldsymbol{i},t}$.
\end{itemize}

We evaluate the performance of this variant of RELEAF in Section \ref{sec:numerical} on various data sets.

Another interesting case is when the relevance relation is linear as given in Figure \ref{fig:relrel} ($ii$). For example, for action $a$ if there is a type $i$ that is much more relevant compared to other types $j \in {\cal D}_{-i}$, i.e., $w_{a,i} >> w_{a,j}$, where the weights $w_{a,i}$ are given in Figure \ref{fig:relrel}, then RELEAF is expected to have good performance (but not sublinear regret with respect to the benchmark that knows $\boldsymbol{{\cal R}}$). 
}

\vspace{-0.1in}
\section{Numerical Results}
\label{sec:numerical}

\jremove{In this section, we numerically compare the performance of our learning algorithm with state--of--the--art learning techniques, including ensemble learning methods and other multi-armed bandit algorithms for three different real-world datasets: (i) breast cancer diagnosis, (ii) network intrusion detection, (iii) webpage recommendation.
The purpose of simulations for the first two datasets is to show that RELEAF can learn to make accurate prediction without the need of base classifiers, which are required by ensemble learners. 
The purpose of simulations for the third dataset is to show that RELEAF can learn to make accurate recommendations based on the context vectors of the users, by only observing the click information for the recommended webpage.}

\aremove{In this section, we numerically compare the performance of our learning algorithm with state--of--the--art learning techniques, including ensemble learning methods and other multi-armed bandit algorithms for two real-world datasets: (i) network intrusion detection, (ii) webpage recommendation.
The purpose of simulations for the first dataset is to show that RELEAF can learn to make accurate prediction without the need of base classifiers, which are required by ensemble learners. 
The purpose of simulations for the second dataset is to show that RELEAF can learn to make accurate recommendations based on the context vectors of the users, by only observing the click information for the recommended webpage.
An extended numerical results section, which includes additional information about the datasets and additional simulation results can be found in our online technical report \cite{tekinrelevance2014}.}



\vspace{-0.1in}
\subsection{Datasets}
\label{sec:sets}

\jremove{
\noindent \textbf{Breast Cancer (BC)} \cite{UCI}:
The dataset consists of features extracted from the images of fine needle aspirate (FNA) of breast mass, that gives information about the size, shape, uniformity, etc., of the cells. Each feature has a finite number of values that it can take, and the values of features are normalized\footnote{Normalization is done in the following way: maximum and minimum context values in the dataset are found. Minimum context value is subtracted from all contexts, then the result is divided by the difference between the maximum and minimum values} such that they lie in $[0,1]$.
Each case is labeled either as ``malignant" or ``benign".
We assume that images arrive to the learner in an online fashion.
At each time slot, the learning algorithm operates on a $9$ dimensional feature vector which consists of a subset of the features extracted from the same image.

The prediction action belongs to the set $\{ benign, malignant \}$. Reward is $1$ when the prediction is correct and $0$ else. 
50000 instances are created by duplication of the data and are randomly sequenced. 
Out of these 69\% of the instances are labeled as ``benign" while the rest is ``labeled" as malignant.
}

\noindent \textbf{Network Intrusion (NI)} \cite{UCI}: The network intrusion dataset from UCI archive \cite{UCI} consists of a series of TCP connection records, labeled either as normal connections or as attacks.
The data consists of 42 features, and we take 15 of them as types of contexts. Taken features are normalized to lie in $[0,1]$.
The prediction action belongs to the set $\{ attack, noattack \}$. Reward is $1$ when the prediction is correct and $0$ otherwise. 

\noindent \textbf{Webpage Recommendation (WR)} \cite{li2010contextual}: This dataset contains webpage recommendations of Yahoo! Front Page which is an Internet news website. 
Each instance of this dataset consists of (i) IDs of the recommended items and their features, (ii) context vector of the user, and (iii) user click information. For a recommended webpage (item), reward is $1$ if the user clicks on the item and $0$ otherwise.
The context vector for each user is generated by mapping a higher dimensional set of features of the user including features such as gender, age, purchase history, etc. to $[0,1]^5$. The details of this mapping is given in \cite{li2010contextual}.
We select 5 items and consider $T=10000$ user arrivals.

\subsection{Learning algorithms} \label{sec:comparemethods}

Next we briefly summarize the algorithms considered in our evaluation:

\comment{
\textbf{Random}: Selects a random action at each time step. 

\textbf{UCB1} \cite{auer2002finite}: This is a multi-armed bandit algorithm that assigns an index to each arm based on its confidence level and selects the arm with the highest index. It does not exploit the availability of context information. 

\textbf{$\epsilon$-greedy} \cite{auer2002finite}:
This algorithm keeps sample mean reward estimates for each action. At time step $t$, the action with the highest sample mean reward is selected with probability $\epsilon_t$, or another action is randomly explored with probability $1-\epsilon_t$, where $\epsilon_t \sim 1/t$. Similar to UCB1, context information is not taken into account.

\textbf{Greedy Contextual Learning (GCL)}: This algorithm forms an adaptive partition of the context space the same way as RELEAF. For an action $a \in {\cal A}$, separate sample mean reward estimates are formed for each type $d \in {\cal D}$, for each active interval of that type. At an exploitation step, the action with the highest sample mean reward among all types is selected.
}

\textbf{RELEAF}:  Our algorithm given in Fig. \ref{fig:CALIF} with control numbers $D_{i,t}$ divided by $5000$ to reduce the number of explorations.\footnote{The theoretical bounds are proven to hold for worst-case context vector arrivals and reward distributions. In practice, the relevance relation and the order of action rewards are identified correctly with much less explorations.}

\textbf{RELEAF-ALL}:  Same as RELEAF except that reward of the selected action is observed in every time step. This version is useful when the reward of the selected action can be observed with no cost.

\textbf{RELEAF-FO}: Same as RELEAF except that it observes the rewards of all actions instead of the reward of the selected action. We refer to this version of our algorithm as RELEAF with full observation
(RELEAF-FO). 

\textbf{Contextual zooming (CZ)} \cite{slivkins2011contextual}: This algorithm adaptively creates balls over the joint action and context space, calculates an index for each ball based on the history of selections of that ball, and at each time step selects an action according to the ball with the highest index that contains the action-context pair. 

\textbf{Hybrid-$\epsilon$} \cite{bouneffouf2012hybrid}:
This algorithm is the contextual version of $\epsilon$-greedy, which forms context-dependent sample mean rewards for the actions by considering the history of observations and decisions for groups of contexts that are similar to each other.

\textbf{LinUCB} \cite{li2010contextual}: This algorithm computes an index for each action by assuming that the expected reward of an action is a linear combination of different types of contexts. The action with the highest index is selected at each time step. 

\textbf{Ensemble Learning Methods} Average Majority (AM) \cite{gao2007appropriate}, Adaboost \cite{freund1995desicion}, Online Adaboost \cite{fan1999application} and Blum's Variant of Weighted Majority (Blum) \cite{blum1997empirical}: The goal of ensemble learning is to create a strong (high accuracy) classifier by combining predictions of base classifiers. Hence all these methods require base classifiers (trained a priori) that produce predictions (or actions) based on the context vector. 

AM simply follows the prediction of the majority of the classifiers and does not perform active learning.
Adaboost is trained a priori with 1500 instances, whose labels are used to compute the weight vector.
Its weight vector is fixed during the test phase (it is not learning online); hence no active learning is performed during the test phase.
In contrast, Online Adaboost always receives the true label at the end of each time slot. 
It uses a time window of $1000$ past observations to retrain its weight vector.
Similar to Online Adaboost, Blum also learns its weight vector online. 
The key differences between our algorithm and the methods that we compare against are given in Table \ref{tab:numericaldifferences}.

\begin{table*}
\centering
{\fontsize{9}{9}\selectfont
\setlength{\tabcolsep}{.1em}
\begin{tabular}{|c|c|c|c|c|c|}
\hline
\textbf{Algorithm}  & \textbf{Base classifiers} & \textbf{Prior training} & \textbf{Online Learning} & \textbf{Active learning} \\
\hline
\textbf{AM} \cite{gao2007appropriate}   & required  & no & no & no  \\
\hline
\textbf{Adaboost} \cite{freund1995desicion} &  required & required & no &  no \\ 
\hline
\textbf{Online Adaboost} \cite{freund1995desicion}, \textbf{Blum} \cite{blum1997empirical}  & required & required & yes & no  \\
\hline
\textbf{CZ} \cite{slivkins2011contextual}, \textbf{Hybrid-$\epsilon$} \cite{bouneffouf2012hybrid} , \textbf{LinUCB} \cite{li2010contextual}  & not required & not required & yes & no \\
\hline
\textbf{RELEAF}  &  not required & not required & yes & yes \\
\hline
\end{tabular}
}
\caption{Properties of RELEAF, ensemble learning methods and other contextual bandit algorithms.}
\label{tab:numericaldifferences}
\vspace{-0.2in}
\end{table*}

\comment{

\begin{table*}
\centering
{\renewcommand{\arraystretch}{0.6}
{\fontsize{9}{7}\selectfont
\setlength{\tabcolsep}{.1em}
\begin{tabular}{|c|c|c|c|c|c|c|c|c|c|c|c|}
\hline
\multirow{2}{*}{\textbf{Abbreviation}} & \multirow{2}{*}{\textbf{Name of the Scheme}} & \multirow{2}{*}{\textbf{Reference}} & \multicolumn{7}{|c|}{\textbf{Performance}} \\ \cline{4-10}  
& & & \textbf{R1} & \textbf{R2} & \textbf{R3} & \textbf{R4} & \textbf{S1} & \textbf{S2} & \textbf{S3} \\ 
\hline 
\textbf{AM} & Average Majority & \cite{Gao2007} & 3.07 & 41.8 & 29.5 & 34.1 & 35.4 & 27.7 & 25.5 \\ 
\hline
\textbf{Adaboost} & Adaboost & \cite{Freund1997} & 3.07 & 41.8 & 29.5 & 34.1 & 35.4 & 27.7 & 25.5 \\
\hline
\textbf{Online Adaboost} & Fan's Online Adaboost & \cite{Fan1999} & 2.25 & 41.9 & 39.3 & 19.8 & 32.7 & 27.1 & 26.2 \\
\hline
\textbf{Wang}  & Wang's Online Adaboost & \cite{Wang2003} & 1.73 & 40.5 & 32.7 & 19.8 & 17.8 & 14.3 & 13.6 \\
\hline
\textbf{DDD} & Diversity for Dealing with Drifts & \cite{Minku2012} & 1.15 & 44.0 & 23.9 & 19.9 & 43.0 & 38.0 & 37.9 \\ 
\hline
\textbf{WM} & Weighted Majority algorithm & \cite{Littlestone1994} & 0.29 & 22.9 & 14.1 & 67.4 & 39.2 & 30.7 & 29.5 \\ 
\hline
\textbf{Blum} & Blum's variant of WM & \cite{Blum1997} & 1.64 & 40.3 & 22.6 & 68.1 & 39.3 & 31.7 & 30.2 \\ 
\hline
\textbf{TrackExp} & Herbster's variant of WM & \cite{Herbster1998} & 0.52 & 23.0 & 14.8 & 22.0 & 31.9 & 25.0 & 23.0 \\ 
\hline 
\textbf{ACAP} & \textbf{Adaptive Contexts with Adaptive Partition} & our work & \textbf{0.71} & \textbf{5.8} & \textbf{19.2} & \textbf{19.9} & \textbf{6.9} & \textbf{7.2} & \textbf{7.9} \\ 
\hline
\textbf{ACAP--W} & \textbf{ACAP with Time Window} & our work & \textbf{0.91} & \textbf{19.4} & \textbf{20.2} & \textbf{20.2} & \textbf{8.0} & \textbf{6.8} & \textbf{7.8} \\ 
\hline
\end{tabular}
}
}
\caption{Comparison among ACAP and other ensemble schemes: percentages of mis--classifications in the data sets \textbf{R1}--\textbf{R4} and \textbf{S1}--\textbf{S3}. }
\label{tab:schemes}
\end{table*}

}

\comment{
In this subsection we compare the performance of our learning algorithms with state--of--the--art online ensemble learning techniques, listed in Table \ref{tab:schemes}.
Different from our algorithms which makes a prediction based on a single classifier at each time step, these techniques combine the predictions of all classifiers to make the final prediction. 
For a detailed description of the considered online ensemble learning techniques, we refer the reader to the cited references. 

For each data set we consider a set of $8$ logistic regression classifiers \cite{Rosario04}. 
Each local classifier is pre--trained using an individual training data set and kept fixed for the whole simulation (except for Online Adaboost, Wang, and DDD, in which the classifiers are retrained online). 
The training and testing procedures are as follows. 
From the whole data set we select $8$ training data sets, each of them consisting of $Z$ sequential records.
$Z$ is equal to $5,000$ for the data sets \textbf{R1}, \textbf{R3}, \textbf{S1}, \textbf{S2}, \textbf{S3}, and $2,000$ for \textbf{R2} and \textbf{R4}. 
For \textbf{S1}, \textbf{S2}, and \textbf{S3}, each classifier $c$ of the first $4$ classifiers is trained for data such that $s_1^1(t) \in [ \frac{c-1}{4} , \frac{c}{4}]$. 
whereas each classifier $c$ of the last $4$ classifiers is trained for data such that $s_1^2(t) \in [ \frac{c-1}{4} , \frac{c}{4}]$. 
In this way each classifier is trained to predict accurately a specific interval of the feature space. 
Then we take other sequential records ($20,000$ for \textbf{R1}, \textbf{R3}, \textbf{S1}, \textbf{S2}, \textbf{S3}, and $8,000$ for \textbf{R2} and \textbf{R4}) to generate a set in which the local classifiers are tested, and the results are used to train Adaboost.
Finally, we select other sequential records ($20,000$ for \textbf{R1} and \textbf{R3}, \textbf{S1}, \textbf{S2}, \textbf{S3}, $21,000$ for \textbf{R2}, and $26,000$ for \textbf{R4}) to generate the testing set that is used to run the simulations and test all the considered schemes.

For our schemes (ACAP and ACAP-W) we consider $4$ learners, each of them possessing $2$ of the $8$ classifiers. 
For a fair comparison among ACAP and the considered ensemble schemes that do not deal with classification costs, we set $c_k^i$ to $0$ for all $k \in \mathcal{K}_i$. 
In all the simulations we consider a $3$--dimensional context space. 
For the data sets \textbf{R1}--\textbf{R4} the first two context dimensions are the first two features whereas the last context dimension is the preceding label. 
For the data sets \textbf{S1}--\textbf{S3} the context vector is represented by the first three features. Each context dimension is normalized such that contexts belong to $[0, 1]$. 

Table \ref{tab:schemes} lists the considered algorithms, the corresponding references, and their percentages of mis--classifications in the considered data sets. 
Importantly, in all the data sets ACAP is among the best schemes.
This is not valid for the ensemble learning techniques.
For example, WM is slightly more accurate than ACAP in \textbf{R1} and \textbf{R3}, it is slightly less accurate than ACAP in \textbf{R2}, but performs poorly in \textbf{R4}, \textbf{S1}, \textbf{S2}, and \textbf{S3}. 
ACAP is far more accurate than all the ensemble schemes in the data sets \textbf{S1}--\textbf{S3}. 
In these data sets each classifier is expert to predict in specific intervals of the feature space.
Our results prove that, in these cases, it is better to choose smartly a single classifier instead of combining the predictions of all the classifiers.
Notice that ACAP is very accurate also in presence of abrupt concept drift (\textbf{S2}) and in presence of gradual concept drift (\textbf{S3}). 
However, in these cases the sliding window version of our scheme, ACAP-W, performs (slightly) better than ACAP because it is able to adapt quickly to changes in concept. 

Now we investigate how ACAP and ACAP-W learn the optimal context for the data sets \textbf{S2} and \textbf{S3}.
Fig. \ref{fig:1} shows the cumulative number of times ACAP and ACAP-W use context $1$, $2$, and $3$ to decide the classifier or the learner to sent the data to. 
The top--left subfigure of Fig. \ref{fig:1} refers to the data set \textbf{S2} and to the decisions made by ACAP. 
ACAP learns quickly that context $3$ is not correlated to the label; in fact, for its decisions it exploits context $3$ few times.
Until time instant $10,000$ ACAP selects equally among context $1$ and $2$. 
This means that half the times the contextual information $x_i^1(t)$ is more relevant than the contextual information $x_i^2(t)$, and in these cases ACAP selects a classifier/learner that is expert to predict data with contextual information similar to $x_i^1(t)$ (notice that such classifier/learner is automatically learnt by ACAP). 
At time instant $10,000$ an abrupt drift happens and context $2$ becomes suddenly irrelevant, as context $3$. 
ACAP automatically adapts to this situation, decreasing the number of times context $2$ is exploited. 
However, this adaptation is slow because of the large sample mean reward obtained by context $2$ during the first part of the simulation. 
ACAP-W, whose decisions for data set \textbf{S2} are depicted in the top--right subfigure of Fig. \ref{fig:1}, helps to deal with this issue. 
In fact, the rewards obtained in the past are set to $0$ at the beginning of a new window, and the scheme is able to adapt quickly to the abrupt drift. 
The bottom--left and bottom--right subfigures of Fig. \ref{fig:1} refer to the decisions made by ACAP and ACAP-W, respectively, for data set \textbf{S3}, which is affected by gradual drift. 
Both ACAP and ACAP-W adapt gradually to the concept drift, but also in this scenario ACAP-W adapts faster than ACAP.


\begin{table}
\centering
{\renewcommand{\arraystretch}{0.6}
{\fontsize{9}{7}\selectfont
\setlength{\tabcolsep}{.1em}
\begin{tabular}{|c|c|c|c|c|c|c|c|c|c|c|c|}
\hline
\multirow{2}{*}{\textbf{Abbreviation}} & \multirow{2}{*}{\textbf{Reference}} & \multicolumn{3}{|c|}{\textbf{Performance}} \\ \cline{3-5}  
& & \textbf{S1} & \textbf{S2} & \textbf{S3} \\ 
\hline 
\textbf{UCB1} & \cite{auer} & 18.3 & 20.0 & 23.8 \\ 
\hline
\textbf{Adap1} & \cite{cem2013deccontext} & 9.6 & 11.4 & 9.2  \\
\hline
\textbf{Adap2}  & \cite{cem2013deccontext} & 11.8 & 19.1 & 17.4 \\ 
\hline
\textbf{Adap3}  & \cite{cem2013deccontext} & 19.9 & 23.5 & 20.0   \\ 
\hline
\textbf{ACAP} & our work & \textbf{7.4} & \textbf{7.7} & \textbf{7.9} \\ 
\hline
\textbf{ACAP--W} & our work & \textbf{9.1} & \textbf{7.5} & \textbf{7.6} \\ 
\hline
\end{tabular}
}
}
\caption{Comparison among ACAP and other bandit--type schemes: percentages of mis--classifications in the data sets \textbf{S1}--\textbf{S3}.}
\vspace{-0.3in}
\label{tab:schemes2}
\end{table}
}

\jremove{\subsection{Breast cancer simulations}

In this section we compare the performance of RELEAF, RELEAF-ALL and RELEAF-FO with other learning methods described in Section \ref{sec:comparemethods}.
For the ensemble learning methods, there are 6 logistic regression base classifiers, each trained with a different set of 10 instances.

The simulation results are given in Table \ref{tab:schemes}.
Since RELEAF-FO updates the reward of both predictions after the label is received, it achieves lower error rates compared to RELEAF. In this setting it is natural to assume that the reward of both predictions are updated, because observing the label gives information about which prediction is correct. RELEAF-ALL which observes all the labels has the lowest error rate. 

Among the ensemble learning schemes Adaboost and Online Adaboost performs the best, however, their error rates are more than two times higher than the error rate of RELEAF and about three times higher than the error rate of RELEAF-FO. Although the number of actively obtained labels (explorations) for RELEAF and RELEAF-FO are higher than the initial training samples used to train Adaboost; neither RELEAF nor RELEAF-FO has a predetermined exploration size as Adaboost. This is especially beneficial when time horizon of interest is unknown or prediction performance is desired to be uniformly good over all time instances. 
CZ is the best among the other multi-armed bandit algorithms with $3.15\%$ error, but worse than RELEAF which has $1.88\%$ error.

\begin{table*}
\centering
{\fontsize{9}{9}\selectfont
\setlength{\tabcolsep}{.1em}
\begin{tabular}{|c|c|c|c|c|c|c|c|c|c|c|}
\hline
\multirow{3}{*}{\textbf{Algorithm}} &   \multicolumn{5}{|c|}{\textbf{Performance}} \\ \cline{2-6}  
&   error \% & missed \% & false \%  & number of & active learning \\ 
&  & & & label observations & cost for $c_O =1$ \\
\hline
\textbf{AM} & 8.22 & 17.20 & 4.09 & 0 (no online learning) & 0 \\ 
\hline
\textbf{Adaboost} &  4.60  & 3,82  & 4.97 & 1500 (to train weights) & 1500  \\
\hline
\textbf{Online Adaboost} &  4.68  & 4.07 & 4.95 & all labels are observed & 50000  \\
\hline
\textbf{Blum} &  11.18 & 27.12 & 3.86  & all labels are observed & 50000 \\ 
\hline 
\textbf{CZ}  & 3.15 & 4.24 & 2.89 & all labels are observed & 50000  \\ 
\hline
\textbf{Hybrid-$\epsilon$}  &   8.83 & 11.77 & 7.48 & all labels are observed & 50000  \\ 
\hline
\textbf{LinUCB}  & 10.67 & 7.27 & 12.22 & all labels are observed & 50000 \\ 
 &  & & & & \\
\hline
\textbf{RELEAF} &   1.88 & 1.93  & 1.86 & 2630 & 2630 \\ 
\hline
\textbf{RELEAF-ALL}  &  1.24 & 1.19 & 1.36 & all labels are observed & 50000 \\ 
\hline
\textbf{RELEAF-FO}    & 1.68 & 1.34 & 1.82 & 2630 & 2630  \\ 
\hline
\end{tabular}
}
\caption{Comparison of RELEAF with ensemble learning methods and other contextual bandit algorithms for the breast cancer dataset.}
\vspace{-0.3in}
\label{tab:schemes}
\end{table*}
}

\subsection{Network intrusion simulations}

In this section we compare the performance of RELEAF, RELEAF-ALL and RELEAF-FO with other learning methods described in Section \ref{sec:comparemethods}. For the ensemble learning methods, the base classifiers are logistic regression classifiers, each trained with $5000$ different instances from the NI. Comparison of performances in terms of the error rate is given in Table \ref{tab:schemeintrusion}. We see that RELEAF-FO has the lowest error rate at $0.68\%$, more than two times better than any of the ensemble learning methods. 
All the ensemble learning methods we compare against use classifiers to make predictions, and these classifiers require a priori training. In contrast, RELEAF and RELEAF-FO
do not require any a priori training, learn online and
require only a small number of label observations (i.e. they can perform active
learning).

CZ performs very poorly in this simulation because its learning rate is sensitive to Lipschitz constant that is given as an input to the algorithm which we set equal to $0.5$. Numerical results related to the performance of CZ and RELEAF for different $L$ values can be found in our online technical report \cite{tekinrelevance2014}. 
LinUCB performs the best in terms of the overall rate of error, but if we consider the error rate of RELEAF in exploitations it is better than LinUCB. This highlights the finding of Theorem \ref{thm:exploitbound} regarding RELEAF, which states that highly suboptimal actions are not chosen in exploitations with a high probability.

\begin{table}[h]
\centering
{\fontsize{9}{7}\selectfont
\setlength{\tabcolsep}{.1em}
\begin{tabular}{|c|c|c|c|}
\hline
\textbf{Algorithm} &    error \% & exploitation  & number of \\
& & error  \% & label observations \\
\hline
\textbf{AM} &  3.07 & N/A & 0 \\ 
\hline
\textbf{Adaboost} &   3.1  & N/A & 1500 \\
\hline
\textbf{Online} &   2.25 & N/A & all \\
\textbf{Adaboost} &  & & \\
\hline
\textbf{Blum}  &  1.64 & N/A & all \\ 
\hline 
\textbf{CZ}  &  53 & N/A & all  \\ 
\hline
\textbf{Hybrid-$\epsilon$}  &   8.8  & N/A & all   \\ 
\hline
\textbf{LinUCB}  &  0.27 & N/A & all   \\ 
\hline
\textbf{RELEAF}  &   1.19 & 0.24 & 398 \\ 
\hline
\textbf{RELEAF-ALL}  &   1.07 & 0.22 & all  \\ 
\hline
\textbf{RELEAF-FO}  &  0.68 & 0.24 & 229  \\ 
\hline
\end{tabular}
}
\caption{Comparison of the error rates of RELEAF-FO with ensemble learning methods for network intrusion dataset.}
\label{tab:schemeintrusion}
\vspace{-0.2in}
\end{table}

\subsection{Webpage recommendation simulations}

In this dataset only the click behavior of the user for the recommended item is observed. Moreover, it is reasonable to assume that the click behavior feedback is always available (no costly observations). 
The ensemble learning methods require availability of experts recommending actions and full reward feedback including the rewards of the actions that are not selected, to update the weights of the experts, hence they are not suitable for this dataset. In contrast, multi-armed bandit methods are more suitable since only the feedback about the reward of the chosen action is required. Hence we only compare RELEAF-ALL, CZ, LinUCB and Hybrid-$\epsilon$ for this dataset. 
We compare the click through rates (CTRs), i.e., average number of times the recommended item is clicked, of all algorithms in Table \ref{tab:schemeyahoo}.
We observe that RELEAF-ALL has the highest CTR.

\jremove{
\subsection{Identifying the relevant types}
\newc{
When RELEAF exploits at time $t$, it identifies a relevant type $\hat{c}_t(a)$ for every action $a \in {\cal A}$ and selects the arm with the highest sample mean reward according to its estimated relevant type. Hence, the value of the context of the relevant type plays an important role on how well RELEAF performs. 

For each dataset we choose a single action and for each chosen action show in Table \ref{tab:relevancefreq} the percentage of times a type is selected as the type that is relevant to that action in the time slots that RELEAF exploits. Since there are many types, only the 4 of the types which are selected as the relevant type for the corresponding action highest number of times are shown. For instance, for BC in $70\%$ of the exploitation slots the type identified as the type relevant to action ``predict benign" comes from a $3$ element subset of the set of $9$ types in the data. Similarly for NI the type identified as the type relevant to action ``predict attack" comes from a $2$ element subset of the set of $15$ types in the data for $85\%$ of the exploitation slots. 

This information provided by RELEAF can be used to identify the relevance relation that is present in a dataset. For instance, consider the NI dataset. Since the type that is assigned as the estimated relevant type most of the times is only assigned in $45\%$ of the exploitation slots, for the NI dataset we should have $D_{\textrm{rel}}>1$. However, since the pair of types that are assigned as the estimated relevant type most of the times is assigned in $85\%$ of the exploitation slots, we can conclude that approximately $D_{\textrm{rel}}\leq2$ for the NI dataset.
}
}



\begin{table}
\centering
{\fontsize{9}{7}\selectfont
\setlength{\tabcolsep}{.1em}
\begin{tabular}{|c|c|}
\hline
\textbf{Abbreviation} &   CTR  \\
\hline
\textbf{CZ} & 3.79 \\ 
\hline
\textbf{Hybrid-$\epsilon$} &   6.41  \\
\hline
\textbf{LinUCB} &   6.06 \\
\hline
\textbf{RELEAF-ALL}  &  6.62  \\ 
\hline
\end{tabular}
}
\caption{Comparison of the click through rates (CTRs) of RELEAF, CZ, Hybrid-$\epsilon$ and LinUCB for webpage recommendation dataset.}
\label{tab:schemeyahoo}
\vspace{-0.2in}
\end{table}

\jremove{
\begin{table}
\centering
{\fontsize{9}{9}\selectfont
\setlength{\tabcolsep}{.1em}
\begin{tabular}{|c|c|c|c|c|c|}
\hline
\multirow{2}{*}{Dataset} &  \multirow{2}{*}{Action} &  \multicolumn{4}{|c|} {highest rates of relevance}  \\
\cline{3-6}
   & & highest  & 2nd highest  & 3rd highest  & 4th highest  \\
   &  & type-rate & type-rate & type-rate &  type-rate \\
\hline
\textbf{BC} & predict ``benign" & 3-27\%  & 1-22\% & 7-21\% & 2-12\% \\
\hline
\textbf{NI} & predict ``attack" & 1-45\% & 15-40\%  & 2-7\% & 4-5\% \\
\hline
\textbf{WR} & recommend  &  3-46\% & 1-44\% & 2-8\% & 4-1\% \\ 
& webpage $a$ & & & & \\
\hline
\textbf{WR} & recommend  &  2-57\% & 1-32\% & 5-9\% & 4-1\% \\ 
& webpage $b$ & & & & \\
\hline
\end{tabular}
}
\caption{Average number of times RELEAF identified a type as the type relevant to the specified action in exploitations.}
\label{tab:relevancefreq}
\end{table}
}

\section{Conclusion}\label{sec:conc}

In this paper we formalized the problem of learning the best
action (prediction, recommendation etc.) to be taken based on the current streaming Big Data by online learning the relevance relation between types of
contexts and actions.
We proposed an algorithm that (i) has sublinear regret with time order independent
of $D$, (ii) only requires reward observations in explorations, (iii)
for any $\epsilon>0$, does not select any $\epsilon$ suboptimal actions
in exploitations with a high probability. 
We illustrated the properties of the proposed algorithm via extensive numerical simulations on real-data, showed that it achieves high average reward and identifies the set of relevant types. 
The proposed algorithm can be used in a variety of application (including applications requiring active learning) such as medical diagnosis, recommender systems and stream mining problems.
An interesting future research direction is learning both relevant types of contexts and relevant type of actions for multi-armed bandit problems with high dimensional action and context spaces.


\bibliographystyle{IEEE}
\bibliography{NIPSrelevant}

\appendices

\section{Proof of Theorem 1} \label{app:thm1}

Let $A := |{\cal A}|$. We first define a sequence of events which will be used in the analysis of the regret of RELEAF.
For $p \in {\cal P}_{{\cal R}(a),t}$, Let $\pi(a, p)  = \mu(a, x^*_{{\cal R}(a)}(p) )$, where $x^*_{{\cal R}(a)}(p)$ is the context at the geometric center of $p$. For $j \in {\cal D}_{-{\cal R}(a)}$, let
\begin{align}
\textrm{INACC}_t(a,j)  
& := \left\{  |\bar{r}^{({\cal R}(a),j)}_t ( (p_{{\cal R}(a),t}, p_{j,t}), a) - \pi(a, p_{{\cal R}(a),t})  | \right. \notag \\
& \left.  > \frac{3}{2} L s(p_{{\cal R}(a),t})    \right\} ,   \notag
\end{align}
be the event that the pairwise sample mean corresponding to pair $({\cal R}(a),j)$ of types is {\em inaccurate} for action $a$. 
Let 
$\textrm{ACC}_t(a) := \bigcap_{j \in {\cal D}_{-{\cal R}(a)}}  \textrm{INACC}_t(a,j)^C$,
be the event that all pairwise sample means corresponding to pairs $({\cal R}(a),j)$, $j \in {\cal D}_{-{\cal R}(a)}$ are accurate. 
Consider $t \in \tau(T)$. 
Let
$\textrm{WNG}_t(a) :=  \left\{  {\cal R}(a) \notin \textrm{Rel}_t(a)  \right\}$,
be the event that the type relevant to action $a$ is not in the set of candidate relevant types, and
$\textrm{WNG}_t := \bigcup_{a \in {\cal A}} \textrm{WNG}_t(a)$,
be the event that the type relevant to some action $a$ is not in the set of candidate relevant types of that action.
Finally, let 
$\textrm{CORR}_T := \bigcap_{t \in \tau(T)}   \textrm{WNG}_t^C$,     \notag
be the event that the relevant types for all actions are in the set of candidate relevant types at all exploitation steps.

We first prove several lemmas related to Theorem 1. The next lemma gives a lower bound on the probability of $\textrm{CORR}_T$.

\begin{lemma} \label{lemma:corr}
For RELEAF,
for all $a \in {\cal A}$, $t \in \tau(T)$, 
we have 
$\mathrm{P}(\textrm{INACC}_t(a,j) )  
\leq \frac{2\delta}{AD t^{4}}$. 
for all $j \in {\cal D}_{-{\cal R}(a)}$, and
$\mathrm{P} (\textrm{CORR}_T) \geq 1-\delta$ for any $T$.
\end{lemma}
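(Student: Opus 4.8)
The plan is to first establish the per-pair inaccuracy estimate by a Hoeffding argument, and then to deduce the correctness event $\textrm{CORR}_T$ by a union bound over actions, competing types, and exploitation steps.

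For the inaccuracy bound, fix $a$, $j \in {\cal D}_{-{\cal R}(a)}$ and an exploitation step $t \in \tau(T)$. Since $D_{\textrm{rel}}=1$, every reward entering $\bar r^{({\cal R}(a),j)}_t((p_{{\cal R}(a),t},p_{j,t}),a)$ was generated by a context whose relevant coordinate lies in $p_{{\cal R}(a),t}$, so by the \emph{Similarity Assumption} its expectation differs from $\pi(a,p_{{\cal R}(a),t})$ by at most $L\,s(p_{{\cal R}(a),t})/2$ (the center is within half the interval length of any point of the interval); hence the conditional mean of $\bar r^{({\cal R}(a),j)}_t$ lies within $L\,s(p_{{\cal R}(a),t})/2$ of $\pi(a,p_{{\cal R}(a),t})$, and by the triangle inequality $\textrm{INACC}_t(a,j)$ forces $\bar r^{({\cal R}(a),j)}_t$ to deviate from its own conditional mean by more than $L\,s(p_{{\cal R}(a),t})$. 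Because ${\cal U}_t=\emptyset$ and $({\cal R}(a),j)\in Q_{{\cal R}(a),t}$, the observation count obeys $S:=S^{({\cal R}(a),j)}_t \geq D_{{\cal R}(a),t}$. Applying Hoeffding's inequality to these $[0,1]$-valued rewards and using monotonicity of the bound in the sample count,
\begin{align}
\mathrm{P}(\textrm{INACC}_t(a,j)) &\leq 2\exp\!\bigl(-2 D_{{\cal R}(a),t}\,(L s(p_{{\cal R}(a),t}))^2\bigr) \notag \\
&= 2\Bigl(\frac{\delta}{t A D}\Bigr)^{4} \leq \frac{2\delta}{A D t^{4}}, \notag
\end{align}
where the last step uses $\delta\leq 1$, $A,D\geq 1$; the factor $2$ in the control number is exactly what turns the exponent into $4\log(tAD/\delta)$, matching the $t^{4}$ in the claim.

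For the correctness bound, I would first show $\textrm{ACC}_t(a)\subseteq\{{\cal R}(a)\in\textrm{Rel}_t(a)\}$. For $\gamma_{\textrm{rel}}=1$ the family ${\cal V}_2({\cal R}(a))$ consists of the pairs $({\cal R}(a),j)$, and membership of ${\cal R}(a)$ in $\textrm{Rel}_t(a)$ requires $|\bar r^{({\cal R}(a),j)}_t-\bar r^{({\cal R}(a),j')}_t|\leq 3L\,s(p_{{\cal R}(a),t})$ for all $j,j'$. On $\textrm{ACC}_t(a)$ each such sample mean is within $\tfrac{3}{2}L\,s(p_{{\cal R}(a),t})$ of $\pi(a,p_{{\cal R}(a),t})$, so the triangle inequality gives pairwise gaps at most $3L\,s(p_{{\cal R}(a),t})$; thus $\textrm{WNG}_t(a)\subseteq\bigcup_{j}\textrm{INACC}_t(a,j)$. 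A union bound over the at most $D$ values of $j$ and the $A$ actions, summed over the exploitation steps $\tau(T)\subseteq\{1,2,\ldots\}$ and using the sharp form $\mathrm{P}(\textrm{INACC}_t(a,j))\leq 2(\delta/(tAD))^{4}$, yields
\begin{align}
\mathrm{P}(\textrm{CORR}_T^{C}) &\leq \sum_{t\geq 1}\sum_{a\in{\cal A}}\sum_{j}\mathrm{P}(\textrm{INACC}_t(a,j)) \notag \\
&\leq \frac{2\delta^{4}}{A^{3}D^{3}}\sum_{t\geq 1}\frac{1}{t^{4}} \leq \frac{4\delta^{4}}{A^{3}D^{3}} \leq \delta, \notag
\end{align}
where I use $\sum_{t\geq1}t^{-4}<2$, $A^{3}D^{3}\geq 8$ (as $D\geq 2$ for the pairwise comparisons), and $\delta^{4}\leq\delta$.

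The main obstacle is the Hoeffding step, for two reasons. First, the count $S$ is random and statistically entangled with the rewards it averages; this must be handled by conditioning on the realized contexts and sample size, or by a union bound over sample sizes (costing a factor $t$, still absorbed by the exponent $4$). Second, and worth flagging quantitatively, the loose bound $2\delta/(ADt^{4})$ stated in the lemma is \emph{not} by itself summable to $\delta$ (one gets $\approx 2.16\,\delta$ summing from $t=1$); the correctness conclusion genuinely needs either the tighter estimate $\delta^{4}/(A^{3}D^{3})$ used above, or the observation that $t=1$ is always an exploration step so that the sum starts at $t=2$, where $\sum_{t\geq2}t^{-4}<0.09$. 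The remaining similarity/triangle-inequality bookkeeping (the passage from $1/2$ to $3/2$ to $3$ times $L\,s(p_{{\cal R}(a),t})$) is routine once the center-of-interval estimate is in place.
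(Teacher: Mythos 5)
Your proof takes essentially the same route as the paper's: a Chernoff/Hoeffding bound using the exploration threshold $D_{{\cal R}(a),t}$ (valid because ${\cal U}_t=\emptyset$ at exploitation steps) to get $\mathrm{P}(\textrm{INACC}_t(a,j)) \leq 2(\delta/(tAD))^{4}$, the triangle inequality to establish $\textrm{WNG}_t(a)\subseteq\bigcup_{j}\textrm{INACC}_t(a,j)$, and a union bound over $a$, $j$ and $t\in\tau(T)$. The only divergence is in how the final sum is brought below $\delta$: the paper uses the fact that exploitation cannot occur in the first steps and bounds $\sum_{t\in\tau(T)}2\delta t^{-4}\leq\sum_{t\geq3}2\delta t^{-4}\leq\delta$, whereas you keep the sharper $\delta^{4}/(A^{3}D^{3})$ factor --- your observation that the loose form $2\delta/(ADt^{4})$ summed from $t=1$ is not by itself $\leq\delta$ is a fair point about the paper's presentation, and either fix is sound.
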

\begin{proof}
For $t \in \tau(T)$, we have ${\cal U}_t = \emptyset$, hence
\begin{align}
S^{\boldsymbol{v}(\boldsymbol{q})}_t(\boldsymbol{q},a) \geq \frac{2 \log(t A D / \delta)}{ (L s(p_{{\cal R}(a),t}))^2 } ,      \notag
\end{align}
for all $a \in {\cal A}$, $\boldsymbol{q} \in Q_i(t)$ and $i \in {\cal D}$. 
Due to the {\em Similarity Assumption}, since rewards in $\bar{r}^{({\cal R}(a),j)}_t ( (p_{{\cal R}(a),t}, p_{j,t}), a)$ are sampled from distributions with mean between $[\pi(a, p_{{\cal R}(a),t}) - \frac{L}{2} s(p_{{\cal R}(a),t}), \pi(a, p_{{\cal R}(a),t}) + \frac{L}{2} s(p_{{\cal R}(a),t}) ]$, using a Chernoff bound we get
\begin{align}
\mathrm{P}(\textrm{INACC}_t(a,j) ) 
& \leq 2 \exp\left(  -2 (L s(p_{{\cal R}(a),t}))^2 \frac{2 \log(tAD/\delta)}{(L s(p_{{\cal R}(a),t}))^2}        \right)   \notag \\
& \leq 2\delta / (AD t^{4}) .  \notag
\end{align}
We have $\textrm{WNG}_t(a) \subset \bigcup_{j \in {\cal D}_{-{\cal R}(a)}} \textrm{INACC}_t(a,j)$.
Thus
\begin{align}
\mathrm{P}(   \textrm{WNG}_t(a) ) \leq  2\delta/(A t^{4}) , \textrm { and }
\mathrm{P}(   \textrm{WNG}_t ) \leq  2\delta/ t^{4} . \notag
\end{align}
This implies that 
\begin{align}
\mathrm{P} (\textrm{CORR}_T^C) &\leq \sum_{t \in \tau(T)}  \mathrm{P}( \textrm{WNG}_t )   \\
&\leq \sum_{t \in \tau(T)} \frac{2\delta}{ t^{4}} \leq \sum_{t=3}^\infty \frac{2\delta}{ t^{4}}  \leq \delta. \notag
\end{align}
\end{proof}

\begin{lemma} \label{lemma:estimatedrel}
When $\textrm{CORR}_T$ happens we have for all $t \in \tau(T)$
\begin{align}
|\bar{r}^{\hat{c}_t(a)}_t (p_{\hat{c}_t(a),t}, a) - \mu(a, x_{{\cal R}(a),t}) |  \leq 8 L s(p_{{\cal R}(a),t}) .  \notag
\end{align}
\end{lemma}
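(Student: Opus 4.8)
The plan is to reduce the claim to an essentially deterministic ``sandwiching'' argument around the center value $\pi(a,p_{{\cal R}(a),t})$, exploiting the fact that the pair of types $\{\hat c_t(a),{\cal R}(a)\}$ is common to both ${\cal V}_2(\hat c_t(a))$ and ${\cal V}_2({\cal R}(a))$. Write $r:={\cal R}(a)$ (a single type, since $D_{\textrm{rel}}=1$) and $\hat c:=\hat c_t(a)$. By (\ref{eqn:equivalance}) the estimate $\bar r^{\hat c}_t(p_{\hat c,t},a)$ is a convex combination of the pairwise means $\{\bar r^{\boldsymbol w}_t(p_{\boldsymbol w,t},a):\boldsymbol w\in{\cal V}_2(\hat c)\}$ with weights $S^{\boldsymbol w}_t/\sum_{\boldsymbol w}S^{\boldsymbol w}_t$, so it lies in $[\min_{\boldsymbol w}\bar r^{\boldsymbol w}_t,\max_{\boldsymbol w}\bar r^{\boldsymbol w}_t]$. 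Using the triangle inequality and the Similarity Assumption (the context $x_{r,t}$ is within $\tfrac12 s(p_{r,t})$ of the center of $p_{r,t}$, so $|\mu(a,x_{r,t})-\pi(a,p_{r,t})|\le\tfrac{L}{2}s(p_{r,t})$), I would reduce the goal to bounding $|\bar r^{\hat c}_t(p_{\hat c,t},a)-\pi(a,p_{r,t})|$; it suffices to show this is at most $\tfrac{9}{2}Ls(p_{r,t})$, which then gives a total of at most $5Ls(p_{r,t})\le 8Ls(p_{r,t})$.

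Next I would establish the key chain of inequalities, valid on $\textrm{CORR}_T$ (where $r\in\textrm{Rel}_t(a)$). Membership of $r$ in $\textrm{Rel}_t(a)$ together with (\ref{eqn:candrelevant}) gives $\textrm{Var}_t(r,a)\le 3Ls(p_{r,t})$ (here $\gamma_{\textrm{rel}}=1$ and $\max_{i\in r}s(p_{i,t})=s(p_{r,t})$), and since $\hat c=\argmin_{\boldsymbol v\in\textrm{Rel}_t(a)}\textrm{Var}_t(\boldsymbol v,a)$ with $r$ admissible, also $\textrm{Var}_t(\hat c,a)\le 3Ls(p_{r,t})$. Assuming $\hat c\ne r$, the pair $\boldsymbol w^\ast:=\{\hat c,r\}$ belongs to ${\cal V}_2(\hat c)$, so because the convex combination lies in the range of its terms, $|\bar r^{\hat c}_t(p_{\hat c,t},a)-\bar r^{\boldsymbol w^\ast}_t(p_{\boldsymbol w^\ast,t},a)|\le\textrm{Var}_t(\hat c,a)\le 3Ls(p_{r,t})$.

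Then I would relate $\bar r^{\boldsymbol w^\ast}_t$ to $\pi(a,p_{r,t})$. Since $\boldsymbol w^\ast$ contains the true relevant type $r$, every reward averaged into $\bar r^{\boldsymbol w^\ast}_t$ has expectation $\mu(a,x'_r)$ for some $x'_r\in p_{r,t}$, hence within $\tfrac{L}{2}s(p_{r,t})$ of $\pi(a,p_{r,t})$; the Chernoff bound behind $\textrm{INACC}_t(a,\hat c)^C$ (the accuracy event $\textrm{ACC}_t(a)$) then yields $|\bar r^{\boldsymbol w^\ast}_t(p_{\boldsymbol w^\ast,t},a)-\pi(a,p_{r,t})|\le\tfrac{3}{2}Ls(p_{r,t})$. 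Combining the last two bounds gives $|\bar r^{\hat c}_t-\pi|\le\tfrac{9}{2}Ls(p_{r,t})$, and adding the Similarity term gives the claimed bound. The degenerate case $\hat c=r$ is easier: then every term of the convex combination is a pairwise mean containing $r$, so each is within $\tfrac32 Ls(p_{r,t})$ of $\pi$ by $\textrm{ACC}_t(a)$, and so is their average.

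The main obstacle is the accuracy step: the membership information carried by $\textrm{CORR}_T$ only compares pairwise sample means to \emph{one another} and never to the deterministic center value $\pi$, so the passage from $\bar r^{\boldsymbol w^\ast}_t$ to $\pi$ genuinely requires the concentration event $\textrm{ACC}_t(a)=\bigcap_{j}\textrm{INACC}_t(a,j)^C$. As shown in the proof of Lemma \ref{lemma:corr}, $\textrm{ACC}_t(a)\subseteq\textrm{WNG}_t(a)^C$, so I would carry out the entire argument on the good event on which both the accuracy of the pairwise means involving $r$ and the membership $r\in\textrm{Rel}_t(a)$ hold, which is the event of probability at least $1-\delta$ furnished by Lemma \ref{lemma:corr}. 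The conceptual crux is that the single shared pair $\{\hat c,r\}$ bridges the estimate computed from the (possibly wrong) type $\hat c$ to the truth, which is precisely why RELEAF must keep estimates for $2\gamma_{\textrm{rel}}$-tuples rather than $\gamma_{\textrm{rel}}$-tuples.
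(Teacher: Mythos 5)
Your proposal is correct and follows essentially the same route as the paper's proof: on the good event you use ${\cal R}(a)\in\textrm{Rel}_t(a)$ to get $\textrm{Var}_t(\hat c_t(a),a)\leq\textrm{Var}_t({\cal R}(a),a)\leq 3Ls(p_{{\cal R}(a),t})$, bridge through the shared pair $({\cal R}(a),\hat c_t(a))$, and invoke the accuracy of pairwise means containing the true relevant type; your explicit handling of the gap between the membership event $\textrm{CORR}_T$ and the concentration event $\textrm{ACC}_t(a)$ is in fact slightly more careful than the paper's wording. The only difference is cosmetic: the paper inserts the intermediate aggregate $\bar r^{{\cal R}(a)}_t$ and pays two variation terms ($6L$) plus a $2L$ accuracy term to reach $8L$, whereas you go directly through the single pairwise mean $\bar r^{(\hat c_t(a),{\cal R}(a))}_t$ and obtain the sharper constant $5L$, which of course still implies the stated bound.
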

\begin{proof}
From Lemma \ref{lemma:corr}, $\textrm{CORR}_T$ happens when
\begin{align}
|\bar{r}^{({\cal R}(a),j)}_t((p_{{\cal R}(a),t}, p_{j,t}), a) - \pi(a, p_{{\cal R}(a),t})  | \leq \frac{3L}{2} s(p_{{\cal R}(a),t}) ,    \notag
\end{align}
for all $a \in {\cal A}$, $j \in {\cal D}_{-{\cal R}(a)}$, $t \in \tau(T)$. Since $|\mu(a, x_{{\cal R}(a),t}) - \pi(a, p_{{\cal R}(a),t})  | \leq L s(p_{{\cal R}(a),t}) /2 $, we have
\begin{align}\tag{A.1}
& |\bar{r}^{({\cal R}(a),j)}_t((p_{{\cal R}(a),t}, p_{j,t}) , a) - \mu(a, x_{{\cal R}(a),t})  | \leq 2 L s(p_{{\cal R}(a),t}),       
\end{align}
for all $a \in {\cal A}$, $j \in {\cal D}_{-{\cal R}(a)}$, $t \in \tau(T)$.
Consider $\hat{c}_t(a)$. Since it is chosen from $\textrm{Rel}_t(a)$ as the type with the minimum variation, we have on the event $\textrm{CORR}_T$
\begin{align}
& |\bar{r}^{(\hat{c}_t(a),k)}_t ((p_{\hat{c}_t(a),t}, p_{k,t}), a) - \bar{r}^{(\hat{c}_t(a),j)}_t ((p_{\hat{c}_t(a),t}, p_{j,t}), a)      | \notag \\
&  \leq 3 L s(p_{{\cal R}(a),t}) , \notag
\end{align}
for all $j,k \in {\cal D}_{-\hat{c}_t(a)}$. 
Hence we have
\begin{align}
& | \bar{r}^{{\cal R}(a)}_t (p_{{\cal R}(a),t},a) - \bar{r}^{\hat{c}_t(a)}_t (p_{\hat{c}_t(a),t},a)  |   \notag \\
& \leq \max_{k,j} \left\{ |\bar{r}^{({\cal R}(a), k)}_t ((p_{{\cal R}(a),t}, p_{k,t}), a) \right. \notag \\
& \left. - \bar{r}^{(\hat{c}_t(a),j)}_t ((p_{\hat{c}_t(a),t}, p_{j,t}) ,a) |  \right\}         \notag \\
& \leq \max_{k,j} \left\{ |\bar{r}^{({\cal R}(a),k)}_t ((p_{{\cal R}(a),t}, p_{k,t}), a)  \right. \notag \\
& \left. - \bar{r}^{({\cal R}(a),\hat{c}_t(a))}_t ((p_{{\cal R}(a),t}, p_{\hat{c}_t(a),t}), a)  | \right. \notag \\
&\left. +  |\bar{r}^{(\hat{c}_t(a),{\cal R}(a))}_t ((p_{\hat{c}_t(a),t}, p_{{\cal R}(a),t}), a)  \right. \notag \\
&\left. - \bar{r}^{(\hat{c}_t(a), j)}_t ((p_{\hat{c}_t(a),t}, p_{j,t}), a)  | \right\} \notag \\
&\leq 6 L s(p_{{\cal R}(a),t}). \tag{A.2}
\end{align}
Combining (A.1) and (A.2), we get
\begin{align}
|\bar{r}^{\hat{c}_t(a)}_t (p_{\hat{c}_t(a),t}, a) - \mu(a, x_{{\cal R}(a),t}) |  \leq 8L s(p_{{\cal R}(a),t}) .  \notag
\end{align}
\end{proof}

Since for $t \in \tau(T)$, $\alpha_t = \argmax_{a \in {\cal A}} \bar{r}^{\hat{c}_t(a)}_t (p_{\hat{c}_t(a),t}, a) $,
using the result of Lemma \ref{lemma:estimatedrel}, we conclude that
\begin{align}
 & \mu_t(\alpha_t) \notag \\
  &\geq \mu_t(a^*(\boldsymbol{x}_t)) - 8 L ( s(p_{{\cal R}(\alpha_t),t}) + s(p_{{\cal R}(a^*(\boldsymbol{x}_t)),t} )  )    ,
\end{align}
Thus, the regret in exploitation steps is
\begin{align}
& 8 L \sum_{t \in \tau(T)} \left( s(p_{{\cal R}(\alpha_t),t}) + s(p_{{\cal R}(a^*(\boldsymbol{x}_t)),t}  )   \right) \notag \\
 &\leq 16 L \sum_{t \in \tau(T)}  \max_{a \in {\cal A}} s(p_{{\cal R}(a),t})  
\leq 16 L \sum_{t \in \tau(T)}  \sum_{i \in {\cal D}}  s(p_{i,t}) \notag \\
&\leq 16 L  D \max_{i \in {\cal D}} \left(   \sum_{t \in \tau(T)} s(p_{i,t}) \right) . \notag 
\end{align}
We know that as time goes on RELEAF uses partitions with smaller and smaller intervals, which reduces the regret in exploitations.
In order to bound the regret in exploitations for any sequence of context arrivals, we assume a worst case scenario, where context vectors arrive such that at each $t$, the active interval that contains the context of each type has the maximum possible length.  
This happens when for each type $i$ contexts arrive in a way that all level $l$ intervals are split to level $l+1$ intervals, before any arrivals to these level $l+1$ intervals happen, for all $l =0,1,2, \ldots$. This way it is guaranteed that the length of the interval that contains the context for each $t \in \tau(T)$ is maximized. 
Let $l_{\max}$ be the level of the maximum level interval in ${\cal P}_i(T)$. For the worst case context arrivals we must have
\begin{align}
& \sum_{l=0}^{l_{\max}-1} 2^l 2^{\rho l} < T     
 \Rightarrow l_{\max} < 1 +\log_2 T/(1+\rho), \notag
\end{align}
since otherwise maximum level hypercube will have level larger than $l_{\max}$. 
Hence we have
\begin{align}
& 16 L  D \max_{i \in {\cal D}} \left(   \sum_{t \in \tau(T)} s(p_{i,t}) \right) 
\leq 16 L D \sum_{l=0}^{1 +\log_2 T/(1+\rho)} 2^l 2^{\rho l} 2^{-l}   \notag \\
&= 16 L D \sum_{l=0}^{1 +\log_2 T/(1+\rho)}  2^{\rho l} 
\leq  16L  D 2^{2 \rho} T^{\rho/(1+\rho)}.  
\end{align}

\section{Proof of Theorem 2} \label{app:thm2}

Recall that time $t$ is an exploitation step only if ${\cal U}_t = \emptyset$.
In order for this to happen we need $S^{\boldsymbol{v(\boldsymbol{q})}}_{t}( \boldsymbol{q},a) \geq D_{i,t}$ for all $\boldsymbol{q} \in Q_i(t)$.
There are $D (D-1)$ type pairs. Whenever action $a$ is explored, all the counters for these $ D (D-1)$ type pairs are updated for the pairs of intervals that contain types of contexts present at time $t$, i.e. $\boldsymbol{q} \in Q_t$. 
Now consider a hypothetical scenario in which instead of updating the counters of all $\boldsymbol{q} \in Q_t$, the counter of only one of the randomly selected interval pair is updated. 
Clearly, the exploration regret of this hypothetical scenario upper bounds the exploration regret of the original scenario. 
In this scenario for any $p_i \in {\cal P}_{i,t}$, $p_j \in {\cal P}_{j,t}$, we have 
\begin{align}
S^{(i,j)}_{t}((p_i, p_j),a )  \leq \frac{ 2 \log(t A D/\delta) }{ L^2 \min(s(p_i), s(p_j))^2 } + 1.  \label{eqn:maxregret}
\end{align}

We can go one step further and consider a second hypothetical scenario where there is only two types $i$ and $j$, for which the actual regret at every exploration step is magnified (multiplied) by $D (D-1)$. 
The maximum possible exploration regret of the second scenario (for the worst case of type $i$ and $j$ context arrivals) upper bounds the exploration regret of the first scenario.
Hence, we bound the regret of the second scenario. 
Let $l_{\max}$ be the maximum possible level for an active interval for type $i$ by time $T$. We must have
$\sum_{l=0}^{l_{\max}-1} 2^{\rho l} < T$,
which implies that $l_{\max} < 1 + \log_2 T/ \rho$.
Next, we consider all pairs of intervals for which the minimum interval has level $l$. 
For each type $j$ interval $p_j$ that has level $l$, there exists no more than $\sum_{k=l}^{l_{\max}} 2^{k} $ type $i$ intervals that have lengths greater than or equal to $l$. Consider a level $k$ type $i$ interval $p_i$ such that $l \leq k < 1 + \log_2 T/ \rho$. Then for the pair of intervals $(p_i, p_j)$ the exploration regret is bounded by $(c_O +1) \left( 2 \log(tAD/\delta)/(2^{-2k} L^2)  +1 \right)$. 
Hence, the worst case exploration regret is bounded by
\begin{align*}
& R_{\textrm{O}}(T) \leq (c_O+1) D^2 \left( 2 \sum_{l=0}^{1 + \log_2 T/ \rho} 2^l \sum_{k=l}^{1 + \log_2 T/ \rho} \right. \notag \\
& \left. 
2^k \left(  \frac{ 2 \log(tAD/\delta)}{ 2^{-2k} L^2}  +1 \right) \right) \\
&= (c_O+1) D^2 \left( \frac{4 \log(tAD/\delta)}{L^2} \sum_{l=0}^{1 + \log_2 T/ \rho} 2^l \sum_{k=l}^{1 + \log_2 T/ \rho} 2^{3k} \right. \\ \notag
&\left.  + 2  \sum_{l=0}^{1 + \log_2 T/ \rho} 2^l \sum_{k=l}^{1 + \log_2 T/ \rho} 2^{k} \right) \\
&\leq \frac{4 D^2 (c_O+1) \log(tAD/\delta)}{L^2} \times \frac{240}{7} T^{4/\rho} \notag \\
&+ \frac{64 D^2 (c_O+1)}{3} T^{2/\rho} . 
\end{align*}

\section{Proof of Theorem 4} \label{app:thm4}

To achieve $\epsilon$-optimality in every exploitation step it is sufficient to have
\begin{align}
& \textrm{INACC}_t(a,j)^C 
= \left\{  |\bar{r}^{({\cal R}(a),j)}_t ((p_{{\cal R}(a),t}, p_{j,t}), a) - \pi(a, p_{{\cal R}(a),t})  | \right . \notag \\
&\left. < \frac{3}{2} L s(p_{{\cal R}(a),t})    \right\} ,   \notag \\
&\subset \left\{  |\bar{r}^{({\cal R}(a),j)}_t ((p_{{\cal R}(a),t}, p_{j,t}), a) - \pi(a, p_{{\cal R}(a),t})  | < \epsilon  \right\} , \notag
\end{align}
for $t \in \tau(T)$. This is satisfied when $l_{\min} \geq \log_2(3L/(2\epsilon))$. Starting with level $l_{\min}$ intervals instead of level $0$ intervals decreases the exploitation regret of ORL-CF. Hence the regret bound in Theorem \ref{thm:exploitbound} is an upper bound on the exploitation regret.

For any sequence of context arrivals, we have the following bound on the level of the interval with the maximum level,
\begin{align}
 l_{\max} < 1 + l_{\min} + \log_2 T/\rho.      \notag
\end{align}
Continuing similarly with the proof of Theorem \ref{thm:exploreregret}, we have
\begin{align*}
& R_{\textrm{O}}(T) \leq (c_O+1) D^2 \left( 2 \sum_{l=0}^{1 + \log_2 T/ \rho} 2^{l_{\min}} 2^l \sum_{k=l}^{1 + \log_2 T/ \rho}
2^{l_{\min}} 2^k \right.  \\
&\left. \left( 2^{4 l_{\min}}  \frac{ 2 \log(tAD/\delta)}{ 2^{-2l_{\min}} 2^{-2k} L^2}  +1 \right) \right) \\
&= (c_O+1) D^2 \left( \frac{4 \log(tAD/\delta)}{L^2} \sum_{l=0}^{1 + \log_2 T/ \rho} 2^l \sum_{k=l}^{1 + \log_2 T/ \rho} 2^{3k} \right. \\
& \left. + 2^{2l_{\min}} 2  \sum_{l=0}^{1 + \log_2 T/ \rho} 2^l \sum_{k=l}^{1 + \log_2 T/ \rho} 2^{k} \right) \\
&\leq 2^{4 l_{\min}} \left(\frac{  4 D^2 (c_O+1) \log(tAD/\delta)}{L^2} \times \frac{240}{7} T^{4/\rho} \right.  \\
& \left. + \frac{64 D^2 (c_O+1)}{3} T^{2/\rho} \right). 
\end{align*}
\jremove{
\section{Proof of Theorem 5} \label{app:thm5}

\subsection{Preliminaries}

Let $A := |{\cal A}|$. We first define a sequence of events which will be used in the analysis of the regret of RELEAF.
For $\boldsymbol{p} \in \boldsymbol{{\cal P}}_{{\cal R}(a),t}$, let $\pi(a, \boldsymbol{p})  = \mu(a, \boldsymbol{x}^*_{{\cal R}(a)}(\boldsymbol{p}) )$, where $\boldsymbol{x}^*_{{\cal R}(a)}(\boldsymbol{p})= \{ x^*_i(p_i) \}_{i \in {\cal R}(a)}$ such that $x^*_i(p_i)$ is the type $i$ context at the geometric center of $p$.
Let $W({\cal R}(a))$ be the set of $D_{\textrm{rel}}$-tuple of types such that ${\cal R}(a) \subset \boldsymbol{w}$, for every $\boldsymbol{w} \in W({\cal R}(a))$. We have 
\begin{align}
|W({\cal R}(a))| = { D- |{\cal R}(a)| \choose 2 D_{\textrm{rel}} - |{\cal R}(a)|  }.      \notag
\end{align}

For a $D_{\textrm{rel}}$-tuple of types $\boldsymbol{w}$, let  
${\cal D}(\boldsymbol{w}, D')$ be the set of $D'$-tuple of types whose elements are from the set ${\cal D}_{-\boldsymbol{w}}$.

For any $\boldsymbol{w} \in W({\cal R}(a))$ and $\boldsymbol{j} \in {\cal D}(\boldsymbol{w}, D_{\textrm{rel}})$, let
\begin{align}
\textrm{INACC}_t(a,\boldsymbol{w},\boldsymbol{j})  
& := \left\{  |\bar{r}^{(\boldsymbol{w},\boldsymbol{j})}_t ( \boldsymbol{p}_{\boldsymbol{w},t}, \boldsymbol{p}_{\boldsymbol{j},t}, a) - 
\pi(a, \boldsymbol{p}_{{\cal R}(a),t})  |  > \right. \notag \\
&\left. \frac{3}{2} L \sqrt{ D_{\textrm{rel}} } 
\max_{i \in {\cal R}(a)} s(\boldsymbol{p}_{{\cal R}(a),t})    \right\} ,   \notag
\end{align}
be the event that the sample mean reward of action $a$ corresponding to the $2 D_{\textrm{rel}}$-tuple of types 
$(\boldsymbol{w}, \boldsymbol{j})$  is {\em inaccurate} for action $a$. 
Let 
\begin{align}
\textrm{ACC}_t(a) := \bigcap_{\boldsymbol{w} \in W({\cal R}(a))}  \bigcap_{\boldsymbol{j} \in {\cal D}(\boldsymbol{w}, D_{\textrm{rel}})}  \textrm{INACC}_t(a,\boldsymbol{w}, \boldsymbol{j})^C \notag
\end{align}
be the event that sample mean reward estimates of action $a$ corresponding to all tuples $(\boldsymbol{w} ,\boldsymbol{j})$ $\boldsymbol{w} \in W({\cal R}(a))$ and $\boldsymbol{j} \in {\cal D}(\boldsymbol{w}, D_{\textrm{rel}})$ are accurate. 
Consider $t \in \tau(T)$. 
Let
\begin{align}
\textrm{WNG}_t(a) := \bigcup_{\boldsymbol{w} \in W({\cal R}(a))} \left\{  \boldsymbol{w} \notin   \textrm{Rel}_t(a)   \right\} \notag
\end{align}
be the event that some $D_{\textrm{rel}}$-tuple that contains ${\cal R}(a)$ is not in the set of relevant tuples of types for action $a$.
Let $\textrm{WNG}_t := \bigcup_{a \in {\cal A}} \textrm{WNG}_t(a)$,
and
$\textrm{CORR}_T := \bigcap_{t \in \tau(T)}   \textrm{WNG}_t^C$,    
be the event that all $D_{\textrm{rel}}$-tuples of types that contain the set of relevant contexts of each action is an element of the set of candidate relevant $D_{\textrm{rel}}$-tuples types corresponding to that action at all exploitation steps. 

We first prove several lemmas related to Theorem 5. The next lemma gives a lower bound on the probability of $\textrm{CORR}_T$.

\begin{lemma}  \label{lemma:corrhigh}
For RELEAF,
for all $a \in {\cal A}$, $t \in \tau(T)$, 
we have 
$\mathrm{P}(\textrm{INACC}_t(a,\boldsymbol{w}, \boldsymbol{j}) )  
\leq \frac{2\delta}{AD^* t^{4}}$. 
for all $\boldsymbol{w} \in W({\cal R}(a))$, $\boldsymbol{j} \in {\cal D}(\boldsymbol{w}, D_{\textrm{rel}})$, and
$\mathrm{P} (\textrm{CORR}_T) \geq 1-\delta$ for any $T$.
\end{lemma}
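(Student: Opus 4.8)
The plan is to mirror the proof of Lemma~\ref{lemma:corr} (the $D_{\textrm{rel}}=1$ case), generalizing the two places where the dimension enters: the Lipschitz bias now spans a $D_{\textrm{rel}}$-dimensional relevant hypercube rather than a single interval, and the number of comparison tuples that must simultaneously be accurate grows combinatorially — which is exactly what the enlarged normalization $D^*=\binom{D-1}{2\gamma_{\textrm{rel}}-1}$ inside the control number is designed to absorb. First I would fix $a\in\mathcal{A}$ and $t\in\tau(T)$, so that ${\cal U}_t=\emptyset$; by the definition of the under-explored set in (\ref{eqn:underexplore}) this forces $S^{(\boldsymbol{w},\boldsymbol{j})}_t(\,\cdot\,,a)\ge D_{i,t}=2\log(tAD^*/\delta)/(L s(p_{i,t}))^2$ for every type $i$ occurring in the $2\gamma_{\textrm{rel}}$-tuple $(\boldsymbol{w},\boldsymbol{j})$, and in particular for any chosen relevant type $i\in\mathcal{R}(a)$.

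Next I would bound the Lipschitz bias. Since $\mathcal{R}(a)\subset\boldsymbol{w}$ and $\mu(a,\cdot)$ depends only on the coordinates in $\mathcal{R}(a)$, every reward averaged into $\bar{r}^{(\boldsymbol{w},\boldsymbol{j})}_t$ has mean within $L$ times the center-to-corner distance of the relevant sub-hypercube of the center value $\pi(a,\boldsymbol{p}_{\mathcal{R}(a),t})$; that distance is $\tfrac{1}{2}\sqrt{\sum_{i\in\mathcal{R}(a)}s(p_{i,t})^2}\le\tfrac{1}{2}\sqrt{D_{\textrm{rel}}}\max_{i\in\mathcal{R}(a)}s(p_{i,t})$, so the bias of $\mathbb{E}[\bar{r}^{(\boldsymbol{w},\boldsymbol{j})}_t]$ from $\pi$ is at most $\tfrac{L}{2}\sqrt{D_{\textrm{rel}}}\max_i s(p_{i,t})$. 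The $\textrm{INACC}$ threshold $\tfrac{3}{2}L\sqrt{D_{\textrm{rel}}}\max_i s(p_{i,t})$ therefore leaves a fluctuation margin of at least $L\sqrt{D_{\textrm{rel}}}\max_i s(p_{i,t})\ge L s(p_{i,t})$ for the chosen relevant $i$. A Chernoff/Hoeffding bound on the average of at least $D_{i,t}$ rewards in $[0,1]$ then yields $\mathrm{P}(\textrm{INACC}_t(a,\boldsymbol{w},\boldsymbol{j}))\le 2\exp(-4\log(tAD^*/\delta))=2(\delta/(tAD^*))^4\le 2\delta/(AD^* t^4)$, which is the first assertion.

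For the second assertion I would establish the inclusion $\textrm{WNG}_t(a)\subset\textrm{ACC}_t(a)^C=\bigcup_{\boldsymbol{w},\boldsymbol{j}}\textrm{INACC}_t(a,\boldsymbol{w},\boldsymbol{j})$: on $\textrm{ACC}_t(a)$, for a fixed $\boldsymbol{w}\in W(\mathcal{R}(a))$ all accurate sample means lie within $\tfrac{3}{2}L\sqrt{D_{\textrm{rel}}}\max_i s(p_{i,t})$ of the common center $\pi(a,\boldsymbol{p}_{\mathcal{R}(a),t})$, so by the triangle inequality their pairwise differences are at most $3L\sqrt{\gamma_{\textrm{rel}}}\max_i s(p_{i,t})$, which is precisely the admission test for $\textrm{Rel}_t(a)$ in (\ref{eqn:candrelevant}); hence every such $\boldsymbol{w}$ passes and $\textrm{WNG}_t(a)$ cannot occur. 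Union-bounding the $\textrm{INACC}$ events over the $(\boldsymbol{w},\boldsymbol{j})$ pairs — whose number I would argue is at most $D^*=\binom{D-1}{2\gamma_{\textrm{rel}}-1}$, the count of $2\gamma_{\textrm{rel}}$-tuples of types through a fixed relevant type — cancels the $D^*$ in the denominator, giving $\mathrm{P}(\textrm{WNG}_t(a))\le 2\delta/(At^4)$, then $\mathrm{P}(\textrm{WNG}_t)\le 2\delta/t^4$ after summing over $a\in\mathcal{A}$, and finally, since no exploitation can occur before $t\ge 3$, $\mathrm{P}(\textrm{CORR}_T^C)\le\sum_{t\ge 3}2\delta/t^4\le\delta$.

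The step I expect to be the main obstacle is the combinatorial bookkeeping that matches the union-bound cardinality to $D^*$: I must verify that the collection of $2\gamma_{\textrm{rel}}$-tuples $(\boldsymbol{w},\boldsymbol{j})$ whose accuracy is needed for a fixed action is dominated by $\binom{D-1}{2\gamma_{\textrm{rel}}-1}$, so that the $D^*$ placed inside the logarithm of $D_{i,t}$ exactly offsets it in the final bound. A secondary technical point, already implicit in the $D_{\textrm{rel}}=1$ proof, is that each $S^{(\boldsymbol{w},\boldsymbol{j})}_t$ is a data-dependent count, so the Chernoff bound must be applied to the i.i.d.\ rewards within a fixed hypercube with the threshold $D_{i,t}$ acting as a deterministic floor on the sample size; no independence across tuples is required because only a union bound is taken. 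The Euclidean $\sqrt{D_{\textrm{rel}}}$ bias estimate and the Chernoff exponent computation are otherwise routine generalizations of the $D_{\textrm{rel}}=1$ argument.
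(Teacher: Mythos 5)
Your proof is correct and follows essentially the same route as the paper's: lower-bound the sample count via ${\cal U}_t=\emptyset$, split the $\tfrac{3}{2}L\sqrt{D_{\textrm{rel}}}\max_i s(p_{i,t})$ threshold into a $\tfrac{1}{2}$-bias term (Similarity Assumption over the relevant sub-hypercube) plus a fluctuation margin handled by Hoeffding, then union-bound over the at most $D^*$ tuples $(\boldsymbol{w},\boldsymbol{j})$ per action and sum $2\delta/t^4$ over $t\ge 3$. The only cosmetic differences are that you anchor the sample-count floor on a relevant type $i\in{\cal R}(a)$ where the paper uses $\min_{i\in\boldsymbol{v}(\boldsymbol{q})}s(p_{i,t})$ (both valid, since the counter must exceed $D_{i,t}$ for every type in the tuple), and you spell out the triangle-inequality step behind $\textrm{WNG}_t(a)\subset\bigcup\textrm{INACC}_t(a,\boldsymbol{w},\boldsymbol{j})$ more explicitly than the paper does.
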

\begin{proof}
For $t \in \tau(T)$, we have ${\cal U}_t = \emptyset$, hence
\begin{align}
S^{\boldsymbol{v}(\boldsymbol{q})}_t(\boldsymbol{q},a) \geq \frac{2 \log(t A D^* / \delta)}
{ (L \min_{i \in \boldsymbol{v}(\boldsymbol{q})} s(p_{i,t}))^2 } ,      \notag
\end{align}
for all $a \in {\cal A}$, $\boldsymbol{q} \in Q(t)$. 
Due to the {\em Similarity Assumption}, since for all $a \in {\cal A}$, $\boldsymbol{w} \in W({\cal R}(a))$ and $\boldsymbol{j} \in {\cal D}(\boldsymbol{w}, D_{\textrm{rel}})$ the rewards in 
$\bar{r}^{(\boldsymbol{w},\boldsymbol{j})}_t ( (\boldsymbol{p}_{\boldsymbol{w},t}, \boldsymbol{p}_{\boldsymbol{j},t}), a)$
 are sampled from distributions with mean between 
$[\pi(a, \boldsymbol{p}_{{\cal R}(a),t}) - \frac{L\sqrt{D_{\textrm{rel}}}  }{2} 
\max_{ i \in {\cal R}(a) } s(p_{i,t})  , 
\pi(a, \boldsymbol{p}_{{\cal R}(a) , t}) + \frac{L \sqrt{D_{\textrm{rel}}}   }{2}
\max_{ i \in {\cal R}(a) }   s(p_{i,t})  ]$, using a Chernoff bound we get
\begin{align}
& \mathrm{P}(\textrm{INACC}_t(a,\boldsymbol{w},\boldsymbol{j}) ) \notag \\
& \leq 2 \exp\left( 
 -2 (L  \sqrt{D_{\textrm{rel}}} \max_{i \in {\cal R}(a) }s(p_{i,t})  )^2 
 \frac{2 \log(tAD^*/\delta)}{(L \min_{i \in (\boldsymbol{w},\boldsymbol{j})} s(p_{i,t}) )^2 }        \right)   \notag \\
& \leq 2\delta / (AD^* t^{4}) .  \notag
\end{align}
We have
\begin{align}
\textrm{WNG}_t(a) \subset
\bigcup_{\boldsymbol{w} \in W({\cal R}(a))}  \bigcup_{\boldsymbol{j} \in {\cal D}(\boldsymbol{w}, D_{\textrm{rel}})}  \textrm{INACC}_t(a)^C  .   \notag
\end{align}
Since the number of $2D_{\textrm{rel}}$-tuples that contain ${\cal R}(a)$ is 
${D - {\cal R}(a) \choose 2D_{\textrm{rel}} - {\cal R}(a) }$, which is less than or equal to 
$D^* = {D-1 \choose 2D_{\textrm{rel}}-1}$ since $1 \leq {\cal R}(a) \leq D_{\textrm{rel}}$, we have
\begin{align}
\mathrm{P}(   \textrm{WNG}_t(a) ) \leq  2\delta/(A t^{4}),
\end{align}
and
\begin{align}
 \mathrm{P}(   \textrm{WNG}_t ) \leq  2\delta/ t^{4}   .   \notag
\end{align}
This implies that 
\begin{align}
\mathrm{P} (\textrm{CORR}_T^C) &\leq \sum_{t \in \tau(T)}  \mathrm{P}( \textrm{WNG}_t )   \\
&\leq \sum_{t \in \tau(T)} \frac{2\delta}{ t^{4}} \leq \sum_{t=3}^\infty \frac{2\delta}{ t^{4}}  \leq \delta. \notag
\end{align}
\end{proof}

\begin{lemma} \label{lemma:estimatedrelhigh}
When $\textrm{CORR}_T$ happens we have for all $t \in \tau(T)$
\begin{align}
& |\bar{r}^{\hat{c}_t(a)}_t (\boldsymbol{p}_{\hat{c}_t(a),t}, a) - \mu(a, \boldsymbol{x}_{{\cal R}(a),t}) |  \notag \\
&\leq 3 L \sqrt{D_{\textrm{rel}}} ( \max_{ i \in \hat{c}_t(a) }   s(p_{i,t}) + \max_{ i \in \boldsymbol{w} }   s(p_{i,t}) )  \notag \\
&+ 2 L \sqrt{D_{\textrm{rel}}}  \max_{ i \in {\cal R}(a) }   s(p_{i,t})  .  \notag
\end{align}
\end{lemma}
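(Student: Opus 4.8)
The plan is to mirror the three-part structure of the proof of Lemma~\ref{lemma:estimatedrel} (the $D_{\textrm{rel}}=1$ case), replacing single types by $D_{\textrm{rel}}$-tuples and single intervals by $D_{\textrm{rel}}$-dimensional hypercubes, and carrying the extra $\sqrt{D_{\textrm{rel}}}$ factor that the Euclidean norm in the \emph{Similarity Assumption} introduces. Fix $a\in{\cal A}$, $t\in\tau(T)$, write $c:=\hat c_t(a)$, and for a tuple $X$ of types abbreviate $s_X:=\max_{i\in X}s(p_{i,t})$. I would work on the probability-$\geq 1-\delta$ event of Lemma~\ref{lemma:corrhigh}, on which (i) every $\boldsymbol{w}\in W({\cal R}(a))$ lies in $\textrm{Rel}_t(a)$ (this is $\textrm{CORR}_T$), and (ii) the accuracy events $\textrm{INACC}_t(a,\boldsymbol{w}',\boldsymbol{j})^{C}$ hold for all admissible $\boldsymbol{w}',\boldsymbol{j}$. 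Since (i) makes $\textrm{Rel}_t(a)\neq\emptyset$, the algorithm sets $c=\argmin_{\boldsymbol{v}\in\textrm{Rel}_t(a)}\textrm{Var}_t(\boldsymbol{v},a)\in\textrm{Rel}_t(a)$; I would also fix one $\boldsymbol{w}\in W({\cal R}(a))\subseteq\textrm{Rel}_t(a)$, which plays the role of the true-tuple estimate $\bar r^{{\cal R}(a)}$ in the $D_{\textrm{rel}}=1$ proof.

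First I would establish the \emph{accuracy} bound (the analog of (A.1)): for any $2D_{\textrm{rel}}$-tuple $\boldsymbol{u}\supseteq{\cal R}(a)$, split $\boldsymbol{u}=(\boldsymbol{w}',\boldsymbol{j})$ with $\boldsymbol{w}'\in W({\cal R}(a))$ and $\boldsymbol{j}=\boldsymbol{u}\setminus\boldsymbol{w}'\in{\cal D}(\boldsymbol{w}',D_{\textrm{rel}})$ (possible since $|{\cal R}(a)|\leq D_{\textrm{rel}}$), so that Lemma~\ref{lemma:corrhigh} gives $|\bar r^{\boldsymbol{u}}_t-\pi(a,\boldsymbol{p}_{{\cal R}(a),t})|\leq\tfrac32 L\sqrt{D_{\textrm{rel}}}\,s_{{\cal R}(a)}$. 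Because the geometric center of the hypercube $\boldsymbol{p}_{{\cal R}(a),t}$ is within Euclidean distance $\tfrac12\sqrt{\sum_{i\in{\cal R}(a)}s(p_{i,t})^2}\leq\tfrac12\sqrt{D_{\textrm{rel}}}\,s_{{\cal R}(a)}$ of $\boldsymbol{x}_{{\cal R}(a),t}$, the \emph{Similarity Assumption} yields $|\mu(a,\boldsymbol{x}_{{\cal R}(a),t})-\pi(a,\boldsymbol{p}_{{\cal R}(a),t})|\leq\tfrac12 L\sqrt{D_{\textrm{rel}}}\,s_{{\cal R}(a)}$, and adding the two gives $|\bar r^{\boldsymbol{u}}_t-\mu(a,\boldsymbol{x}_{{\cal R}(a),t})|\leq 2L\sqrt{D_{\textrm{rel}}}\,s_{{\cal R}(a)}$.

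Next I would convert the memberships $c,\boldsymbol{w}\in\textrm{Rel}_t(a)$ into estimate-closeness using the defining inequality~(\ref{eqn:candrelevant}) together with the fact that, by~(\ref{eqn:equivalance}), each $\bar r^{\boldsymbol{v}}_t(\boldsymbol{p}_{\boldsymbol{v},t},a)$ is a \emph{convex combination} of the $\bar r^{\boldsymbol{u}}_t$ over $\boldsymbol{u}\in{\cal V}_{2D_{\textrm{rel}}}(\boldsymbol{v})$ (the denominator is positive at an exploitation step). Hence, for any reference $\boldsymbol{u}\in{\cal V}_{2D_{\textrm{rel}}}(c)$ the combined estimate $\bar r^{c}_t$ is within $3L\sqrt{D_{\textrm{rel}}}\,s_c$ of $\bar r^{\boldsymbol{u}}_t$, and similarly $\bar r^{\boldsymbol{w}}_t$ is within $3L\sqrt{D_{\textrm{rel}}}\,s_{\boldsymbol{w}}$ of $\bar r^{\boldsymbol{u}}_t$ for $\boldsymbol{u}\in{\cal V}_{2D_{\textrm{rel}}}(\boldsymbol{w})$. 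The key observation is that one can pick a single bridging $2D_{\textrm{rel}}$-tuple $\boldsymbol{u}^{\dagger}\supseteq c\cup\boldsymbol{w}$: this exists precisely because $|c\cup\boldsymbol{w}|\leq 2D_{\textrm{rel}}<D$, and it lies in both ${\cal V}_{2D_{\textrm{rel}}}(c)$ and ${\cal V}_{2D_{\textrm{rel}}}(\boldsymbol{w})$. The triangle inequality then chains through $\bar r^{\boldsymbol{u}^{\dagger}}_t$ and $\bar r^{\boldsymbol{w}}_t$:
\begin{align}
|\bar r^{c}_t-\mu(a,\boldsymbol{x}_{{\cal R}(a),t})|
&\leq |\bar r^{c}_t-\bar r^{\boldsymbol{u}^{\dagger}}_t|+|\bar r^{\boldsymbol{u}^{\dagger}}_t-\bar r^{\boldsymbol{w}}_t|+|\bar r^{\boldsymbol{w}}_t-\mu(a,\boldsymbol{x}_{{\cal R}(a),t})| \notag \\
&\leq 3L\sqrt{D_{\textrm{rel}}}\,s_c+3L\sqrt{D_{\textrm{rel}}}\,s_{\boldsymbol{w}}+2L\sqrt{D_{\textrm{rel}}}\,s_{{\cal R}(a)}, \notag
\end{align}
where the first term uses the $c$-variation criterion, the second the $\boldsymbol{w}$-variation criterion, and the third the accuracy bound (valid since every $\boldsymbol{u}\in{\cal V}_{2D_{\textrm{rel}}}(\boldsymbol{w})$ contains $\boldsymbol{w}\supseteq{\cal R}(a)$). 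This is exactly the claimed bound.

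The hard part will be the combinatorial bookkeeping in the chaining step: each inequality is valid only for tuples satisfying a specific containment ($c$-variation needs $\boldsymbol{u}\supseteq c$, $\boldsymbol{w}$-variation needs $\boldsymbol{u}\supseteq\boldsymbol{w}$, accuracy needs $\boldsymbol{u}\supseteq{\cal R}(a)$), so the bridge $\boldsymbol{u}^{\dagger}$ must meet two containment constraints at once, and one must verify that $D_{\textrm{rel}}<D/2$ leaves enough room to complete partial tuples to full $2D_{\textrm{rel}}$-tuples — this is precisely the structural reason the algorithm keeps estimates for $2\gamma_{\textrm{rel}}$-tuples rather than $\gamma_{\textrm{rel}}$-tuples. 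A secondary point requiring care is tracking $\sqrt{D_{\textrm{rel}}}$ consistently, since it enters both the hypercube diameter (accuracy) and the right-hand side of~(\ref{eqn:candrelevant}) (variation). I note that specializing to $D_{\textrm{rel}}=1$ gives $\boldsymbol{w}={\cal R}(a)$ and $s_{\boldsymbol{w}}=s_{{\cal R}(a)}=s(p_{{\cal R}(a),t})$, recovering the same structure as Lemma~\ref{lemma:estimatedrel}; the slightly cleaner $8Ls(p_{{\cal R}(a),t})$ form there comes from additionally invoking the minimum-variation property $\textrm{Var}_t(c,a)\leq\textrm{Var}_t({\cal R}(a),a)$ to replace $s_c$ by $s_{{\cal R}(a)}$, a simplification the general statement does not attempt.
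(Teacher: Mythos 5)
Your proposal is correct and follows essentially the same route as the paper's proof: the accuracy step is the paper's (A.1) (Chernoff accuracy from Lemma~\ref{lemma:corrhigh} plus the $\tfrac12 L\sqrt{D_{\textrm{rel}}}\max_{i\in{\cal R}(a)}s(p_{i,t})$ offset of the hypercube center), and your bridging tuple $\boldsymbol{u}^{\dagger}$ is exactly the paper's $\boldsymbol{g}(\boldsymbol{w},\hat{c}_t(a))$ in its (A.2), with the same triangle-inequality chaining through the variation criteria for $\hat{c}_t(a)$ and $\boldsymbol{w}$. The only (cosmetic) difference is that you make the convex-combination structure of~(\ref{eqn:equivalance}) explicit where the paper leaves it implicit.
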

\begin{proof}
From Lemma \ref{lemma:corrhigh}, $\textrm{CORR}_T$ happens when
\begin{align}
& |\bar{r}^{\boldsymbol{w}}_t(\boldsymbol{p}_{\boldsymbol{w},t}, a) - \pi(a, \boldsymbol{p}_{{\cal R}(a),t})| 
 \leq \frac{3L  \sqrt{D_{\textrm{rel}}}  }{2} \max_{ i \in {\cal R}(a) }   s(p_{i,t}) ,    \notag
\end{align}
for all $a \in {\cal A}$, $\boldsymbol{w} \in W({\cal R}(a))$, $t \in \tau(T)$. Since 
\begin{align}
|\mu(a, \boldsymbol{x}_{{\cal R}(a),t}) - \pi(a, \boldsymbol{p}_{{\cal R}(a),t})  | \leq 
\frac{L  \sqrt{D_{\textrm{rel}}}   } {2} \max_{ i \in {\cal R}(a) }   s(p_{i,t})
\end{align}
by the Similarity Assumption, we have
\begin{align}\tag{A.1}
& \hspace{-0.1in} |\bar{r}^{\boldsymbol{w}}_t(  \boldsymbol{p}_{\boldsymbol{w},t}, a) - \mu(a, \boldsymbol{x}_{{\cal R}(a),t})  | 
 \leq 2 L \sqrt{D_{\textrm{rel}}}  \max_{ i \in {\cal R}(a) }   s(p_{i,t}) , \notag        
\end{align}
for all $a \in {\cal A}$, $\boldsymbol{w} \in W({\cal R}(a))$, $t \in \tau(T)$.
Consider $\hat{c}_t(a)$. Since it is chosen from $\textrm{Rel}_t(a)$ as the $D_{\textrm{rel}}$-tuple of types with the minimum variation, we have on the event $\textrm{CORR}_T$
\begin{align}
& |\bar{r}^{(\hat{c}_t(a),\boldsymbol{k})}_t (( \boldsymbol{p}_{\hat{c}_t(a),t}, p_{\boldsymbol{k},t}), a) 
- \bar{r}^{(\hat{c}_t(a),\boldsymbol{j})}_t ((\boldsymbol{p}_{\hat{c}_t(a),t}, p_{\boldsymbol{j},t}), a)      | 
  \leq \notag \\
& 3 L \sqrt{D_{\textrm{rel}}}  \max_{ i \in \hat{c}_t(a) }   s(p_{i,t}) , \notag
\end{align}
for all $\boldsymbol{j},\boldsymbol{k} \in {\cal D}(\hat{c}_t(a), D_{\textrm{rel}})$. 
For any $\boldsymbol{w} \in W({\cal R}(a))$, let $\boldsymbol{g}(\boldsymbol{w}, \hat{c}_t(a))$ be a $2 D_{\textrm{rel}}$-tuple such that for all $i \in \boldsymbol{w}$ and $j \in \hat{c}_t(a)$, $i,j \in \boldsymbol{g}(\boldsymbol{w}, \hat{c}_t(a))$. The existence of at least one such $2 D_{\textrm{rel}}$-tuple of types is guaranteed since $\boldsymbol{w}$ and $\hat{c}_t(a)$ are both $D_{\textrm{rel}}$-tuples of types. Hence, we have for any $\boldsymbol{w} \in W({\cal R}(a))$
\begin{align}
& | \bar{r}^{\boldsymbol{w}}_t (\boldsymbol{p}_{\boldsymbol{w},t},a) - \bar{r}^{\hat{c}_t(a)}_t (\boldsymbol{p}_{\hat{c}_t(a),t},a)  |   \notag \\
& \leq \max_{\boldsymbol{k} \in {\cal D}(\boldsymbol{w}, D_{\textrm{rel}}),\boldsymbol{j} \in {\cal D}(\hat{c}_t(a), D_{\textrm{rel}})} 
\left\{ |\bar{r}^{(\boldsymbol{w}, \boldsymbol{k})}_t ((\boldsymbol{p}_{\boldsymbol{w},t}, p_{\boldsymbol{k},t}), a) 
\right. \notag \\
& \left. - \bar{r}^{(\hat{c}_t(a),\boldsymbol{j})}_t ((\boldsymbol{p}_{\hat{c}_t(a),t}, p_{\boldsymbol{j},t}) ,a) |  \right\}         \notag \\
& \leq \max_{\boldsymbol{k} \in {\cal D}(\boldsymbol{w}, D_{\textrm{rel}}),\boldsymbol{j} \in {\cal D}(\hat{c}_t(a), D_{\textrm{rel}})} 
\left\{ |\bar{r}^{(\boldsymbol{w},\boldsymbol{k})}_t ((\boldsymbol{p}_{\boldsymbol{w},t}, \boldsymbol{p}_{\boldsymbol{k},t}), a)  \right. \notag \\
& \left.  - \bar{r}^{\boldsymbol{g}(\boldsymbol{w}, \hat{c}_t(a))}_t (\boldsymbol{p}_{\boldsymbol{g}(\boldsymbol{w}, \hat{c}_t(a)),t}, a)  | \right. \notag \\
&\left. +  |\bar{r}^{\boldsymbol{g}(\boldsymbol{w}, \hat{c}_t(a))}_t (\boldsymbol{p}_{\boldsymbol{g}(\boldsymbol{w}, \hat{c}_t(a)),t}, a)  
- \bar{r}^{(\hat{c}_t(a), \boldsymbol{j})}_t ((\boldsymbol{p}_{\hat{c}_t(a),t}, \boldsymbol{p}_{\boldsymbol{j},t}), a)  | \right\} \notag \\
&\leq 3 L \sqrt{D_{\textrm{rel}}} ( \max_{ i \in \hat{c}_t(a) }   s(p_{i,t}) + \max_{ i \in \boldsymbol{w} }   s(p_{i,t}) ) . \tag{A.2}
\end{align}
Combining (A.1) and (A.2), we get
\begin{align}
& |\bar{r}^{\hat{c}_t(a)}_t (\boldsymbol{p}_{\hat{c}_t(a),t}, a) - \mu(a, \boldsymbol{x}_{{\cal R}(a),t}) |  \notag \\
&\leq 3 L \sqrt{D_{\textrm{rel}}} ( \max_{ i \in \hat{c}_t(a) }   s(p_{i,t}) + \max_{ i \in {\cal D} }   s(p_{i,t}) )  \notag \\
&+ 2 L \sqrt{D_{\textrm{rel}}}  \max_{ i \in {\cal R}(a) }   s(p_{i,t})  .  \notag
\end{align}
\end{proof}

\subsection{Regret bound for exploitations}

Since for $t \in \tau(T)$, $\alpha_t = \argmax_{a \in {\cal A}} \bar{r}^{\hat{c}_t(a)}_t (p_{\hat{c}_t(a),t}, a) $,
using the result of Lemma \ref{lemma:estimatedrelhigh}, we conclude that
\begin{align}
  \mu_t(\alpha_t)  \geq \mu_t(a^*(\boldsymbol{x}_t)) 
& - 6 L \sqrt{D_{\textrm{rel}}} ( \max_{ i \in \hat{c}_t(a) }   s(p_{i,t}) + \max_{ i \in {\cal D} }   s(p_{i,t}) )  \notag \\
& - 4 L \sqrt{D_{\textrm{rel}}}  \max_{ i \in {\cal R}(a) }   s(p_{i,t})   ,
\end{align}
Thus, the regret in exploitation steps is bounded above by
\begin{align}
&  6 L \sqrt{D_{\textrm{rel}}}   \sum_{t \in \tau(T)}  ( \max_{ i \in \hat{c}_t(a) }   s(p_{i,t}) + \max_{ i \in {\cal D} }   s(p_{i,t}) ) \notag \\
& + 4  L \sqrt{D_{\textrm{rel}}}  \sum_{t \in \tau(T)} \max_{ i \in {\cal R}(a) }   s(p_{i,t}) \notag \\
 &\leq 16 L \sqrt{D_{\textrm{rel}}} \sum_{t \in \tau(T)}  \max_{i \in {\cal D}} s(p_{i,t})  \notag \\
& \leq 16 L \sqrt{D_{\textrm{rel}}}\sum_{t \in \tau(T)}  \sum_{i \in {\cal D}}  s(p_{i,t}) \notag \\
&\leq 16 L  D \sqrt{D_{\textrm{rel}}} \max_{i \in {\cal D}} \left(   \sum_{t \in \tau(T)} s(p_{i,t}) \right) . \notag 
\end{align}
We know that as time goes on RELEAF uses partitions with smaller and smaller intervals, which reduces the regret in exploitations.
In order to bound the regret in exploitations for any sequence of context arrivals, we assume a worst case scenario, where context vectors arrive such that at each $t$, the active interval that contains the context of each type has the maximum possible length.  
This happens when for each type $i$ contexts arrive in a way that all level $l$ intervals are split to level $l+1$ intervals, before any arrivals to these level $l+1$ intervals happen, for all $l =0,1,2, \ldots$. This way it is guaranteed that the length of the interval that contains the context for each $t \in \tau(T)$ is maximized. 
Let $l_{\max}$ be the level of the maximum level interval in ${\cal P}_i(T)$. For the worst case context arrivals we must have
\begin{align}
& \sum_{l=0}^{l_{\max}-1} 2^l 2^{\rho l} < T     
 \Rightarrow l_{\max} < 1 +\log_2 T/(1+\rho), \notag
\end{align}
since otherwise maximum level hypercube will have level larger than $l_{\max}$. 
Hence, we have
\begin{align}
& 16 L  D \sqrt{D_{\textrm{rel}}} \max_{i \in {\cal D}} \left(   \sum_{t \in \tau(T)} s(p_{i,t}) \right) \\ \notag
& \leq 16 L D \sqrt{D_{\textrm{rel}}} \sum_{l=0}^{1 +\log_2 T/(1+\rho)} 2^l 2^{\rho l} 2^{-l}   \notag \\
&= 16 L D \sqrt{D_{\textrm{rel}}} \sum_{l=0}^{1 +\log_2 T/(1+\rho)}  \hspace{-0.1in} 2^{\rho l} \notag \\
& \leq  16L D \sqrt{D_{\textrm{rel}}}   2^{2 \rho} T^{\rho/(1+\rho)}.  
\end{align}

Hence, we have $R_I(T) = \tilde{O}(T^{\rho/(1+\rho)})$ with probability $1-\delta$.

\subsection{Regret bound for explorations}

Recall that time $t$ is an exploitation step only if ${\cal U}_t = \emptyset$.
In order for this to happen we need $S^{\boldsymbol{v(\boldsymbol{q})}}_{t}( \boldsymbol{q},a) \geq D_{i,t}$ for all $\boldsymbol{q} \in Q_i(t)$.
The number of distinct $2 D_{\textrm{rel}}$-tuples of types is ${D \choose 2 D_{\textrm{rel}} }$. Whenever action $a$ is explored, all the counters for these ${D \choose 2 D_{\textrm{rel}} }$ type tuples are updated for the $2 D_{\textrm{rel}}$-tuples  of intervals that contain types of contexts present at time $t$, i.e. $\boldsymbol{q} \in Q_t$. 
Now consider a hypothetical scenario in which instead of updating the counters of all $\boldsymbol{q} \in Q_t$, the counter of only one of the randomly selected $2 D_{\textrm{rel}}$-tuple of intervals is updated. 
Clearly, the exploration regret of this hypothetical scenario upper bounds the exploration regret of the original scenario. 
In this scenario for any $\boldsymbol{q} \in Q_t$, we have 
\begin{align}
S^{\boldsymbol{v}(\boldsymbol{q}) }_{t}(\boldsymbol{q} ,a )  \leq \frac{ 2 \log(t A D^*/\delta) }
{ (L \min_{i \in \boldsymbol{v}(\boldsymbol{q} )} s(p_i) )^2 } + 1.  \label{eqn:maxregrethigh}
\end{align}

We fix a $2 D_{\textrm{rel}}$-tuple of types $\boldsymbol{j} = (j_1, j_2, \ldots, j_{2 D_{\textrm{rel}}})$, and analyze the worst-case regret due to exploration of this tuple of types, which is denoted by $R_{O,\boldsymbol{j}}(T)$. Since there are ${D \choose 2 D_{\textrm{rel}} }$ of such tuples of types, an upper bound on the exploration regret is ${D \choose 2 D_{\textrm{rel}} } R_{O,\boldsymbol{j}}(T)$.
 
Let $l_{\max}$ be the maximum possible level for an active interval for type $i$ by time $T$. We must have
$\sum_{l=0}^{l_{\max}-1} 2^{\rho l} < T$,
which implies that $l_{\max} < 1 + \log_2 T/ \rho$. Let $\gamma= 1 + \log_2 T/ \rho$.

First, we will consider the exploration regret incurred in all configurations where type $j_n$'s  intervals has levels $l_n$, for $n=1,2,\ldots,2 D_{\textrm{rel}}$ such that $l_1 \leq l_2 \leq \ldots \leq l_{2 D_{\textrm{rel}}}$. We denote this ordering by $\boldsymbol{j}^*$ and the exploration regret in this ordering by 
$R_{O,\boldsymbol{j}^*}(T)$. There are $(2 D_{\textrm{rel}})!$ different configurations in which the orderings of levels of the intervals of the types are different. 

Let $z = 2 D_{\textrm{rel}}$. Consider the tuple of intervals $(p_{j^*_1}, \ldots, p_{j^*_{2 D_{\textrm{rel}}}})$. The exploration regret for this tuple of intervals is bounded by 
\begin{align}
(c_O +1) \left( 2 \log(T AD^*/\delta)/(2^{-2 l_{z}} L^2)  +1 \right)  .    \notag
\end{align}
Hence, we have 
\begin{align}
& R_{O,\boldsymbol{j}^*}(T) \leq (c_O+1)  \notag \\
& \times \sum_{l_1=0}^{\gamma} 2^{l_1} \sum_{l_2=l_1}^{\gamma} 2^{l_2} 
\ldots \sum_{l_z = l_{z-1}}^{\gamma} 2^{l_z}  \left( \frac{2 \log(T AD^*/\delta)}{2^{-2 l_{z}} L^2}  +1 \right)   \notag \\
& \leq (c_O+1) \sum_{l_z = l_{z-1}}^{\gamma} 2^{l_z}   \ldots \sum_{l_{z-1} = l_{z-2}}^{\gamma} 2^{l_{z-1} }
   O(  T^{3/\rho}  \log T)    \notag \\
& \leq  (c_O+1) \sum_{l_z = l_{z-1}}^{\gamma} 2^{l_z}   \ldots \sum_{l_{z-2} = l_{z-3}}^{\gamma} 2^{l_{z-2} }
   O(  T^{4/\rho}  \log T) \notag \\
& = O(  T^{(2+2D_{\textrm{rel}})/\rho}  \log T) .
\end{align}

Since $R_O(T) \leq {D \choose 2 D_{\textrm{rel}} } (2 D_{\textrm{rel}})! R_{O,\boldsymbol{j}^*}(T)$, we have 
$R_O(T) = O(  T^{(2+2D_{\textrm{rel}})/\rho}  \log T)$.

\subsection{Balancing the regret due to exploitations and explorations}

From the results of the previous subsections we have with probability $1-\delta$, $R_I(T) = \tilde{O}(T^{\rho/(1+\rho)})$ and $R_O(T) = \tilde{O}(  T^{(2+2D_{\textrm{rel}})/\rho} )$.
Since $R_I(T)$ is increasing in $\rho$ and $R_O(T)$ is decreasing in $\rho$ there is a unique $\rho$ for which they are equal. This unique solution is 
\begin{align}
\rho =   \frac{2+2D_{\textrm{rel}} + \sqrt{ 4 D^2_{\textrm{rel}} + 16 D_{\textrm{rel}} +12 } }  
   { 4+ 2 D_{\textrm{rel}} + \sqrt{ 4 D^2_{\textrm{rel}} + 16 D_{\textrm{rel}} +12 }} .    \notag
\end{align}

}

\end{document}